\newcommand{\la}{\langle}
\newcommand{\ra}{\rangle}
\newcommand{\qvalue}{Q}
\newcommand{\vvalue}{V}
\newcommand{\reward}{r}
\newcommand{\event}{\mathcal{E}}
\newcommand{\expect}{\mathbb{E}}
\newcommand{\seq}[1]{\overline{[#1]}}
\newcommand{\var}{\mathbb{V}}
\def \algmdp {\text{HF-O}^2\text{PS}}
\def \algvar {\text{HOME}}
\def \error {E}
\def \pnorm {B}
\newcommand*{\rom}[1]{\expandafter\@slowromancap\romannumeral #1@}
\title{\huge Horizon-free Reinforcement Learning in Adversarial Linear Mixture MDPs}
\author
{
    Kaixuan Ji\thanks{Equal Contribution}
    \thanks{Department of Computer Science and Technology, Tsinghua University, Beijing 100084, China; e-mail: {\tt jkx19@mails.tsinghua.edu.cn}} 
    ~and~
    Qingyue Zhao$^*$\thanks{Department of Computer Science and Technology, Tsinghua University, Beijing 100084, China; e-mail: {\tt zhaoqy19@mails.tsinghua.edu.cn}} 
    ~and~
	Jiafan He\thanks{Department of Computer Science, University of California, Los Angeles, CA 90095, USA; e-mail: {\tt jiafanhe19@ucla.edu}} 
	~and~
	Weitong Zhang\thanks{Department of Computer Science, University of California, Los Angeles, CA 90095, USA; e-mail: {\tt wt.zhang@ucla.edu}} 
	~and~
	Quanquan Gu\thanks{Department of Computer Science, University of California, Los Angeles, CA 90095, USA; e-mail: {\tt qgu@cs.ucla.edu}}
}
\date{}
\begin{document}
\maketitle

\begin{abstract}
Recent studies have shown that episodic reinforcement learning (RL) is no harder than bandits when the total reward is bounded by $1$, and proved regret bounds that have a polylogarithmic dependence on the planning horizon $H$. However, it remains an open question that if such results can be carried over to adversarial RL, where the reward is adversarially chosen at each episode. In this paper, we answer this question affirmatively by proposing the first horizon-free policy search algorithm. To tackle the challenges caused by exploration and adversarially chosen reward, our algorithm employs (1) a \emph{variance-uncertainty-aware} weighted least square estimator for the transition kernel; and (2) an \emph{occupancy measure}-based technique for the online search of a \emph{stochastic} policy. We show that our algorithm achieves an $\tilde{O}\big((d+\log (|\cS|^2 |\cA|))\sqrt{K}\big)$ regret with full-information feedback\footnote[2]{Here $\tilde{O}(\cdot)$ hides logarithmic factors of $H$, $K$ and $1/\delta$.}, where $d$ is the dimension of a known feature mapping linearly parametrizing the unknown transition kernel of the MDP, $K$ is the number of episodes, $|\cS|$ and $|\cA|$ are the cardinalities of the state and action spaces. We also provide hardness results and regret lower bounds to justify the near optimality of our algorithm and the unavoidability of $\log|\cS|$ and $\log|\cA|$ in the regret bound.
\end{abstract}

\section{Introduction}

Learning in episodic Markov Decision Processes (MDPs) \citep{altman1999constrained, dann2015sample, neu2020unifying} is a key problem in reinforcement learning (RL) \citep{szepesvari2010algorithms, sutton2018reinforcement}, where an agent sequentially interacts with an environment with a fixed horizon length $H$. Each action $a_t$ the agent takes at state $s_t$ incurs some reward $r(s_t, a_t)$, and takes it into the next state $s_{t+1}$. The agent will restart in the same environment after every $H$ time steps. Although the \emph{curse of horizon} has been deemed as a challenge in episodic RL \citep{jiang2018open}, a recent line of works have developed near-optimal algorithms to achieve a regret with no polynomial dependence on $H$ for both tabular MDPs \citep{zhang2021reinforcement} and RL with linear function approximation \citep{zhang2021ImporvedVA, kim2022improved, Zhou2022ComputationallyEH}. This suggests that episodic RL is no more difficult than contextual bandits (CB), which is equivalent to episodic RL with $H=1$ and no state transition. 
Nevertheless, these \emph{horizon-free} algorithms are only applicable to learning MDPs where the reward function is either fixed or stochastic, 
yet in many real-world scenarios, we have 
cope with the adversarially changing reward \citep{even2009online, yu2009markov, zimin2013online}. 
However, little is known about horizon-free RL in adversarial MDPs. 
Thus, the following question remains open.

\begin{center}\emph{Can we design near-optimal horizon-free RL algorithms under adversarial reward and unknown transition with function approximation employed?}\end{center}

In this paper, we affirmatively answer the question in the setting of linear mixture MDPs with adversarial reward under full-information feedback \citep{cai2020provably,he2022near}. We propose a new algorithm termed \textbf{H}orizon-\textbf{F}ree \textbf{O}ccupancy-Measure Guided \textbf{O}ptimistic \textbf{P}olicy \textbf{S}earch ($\algmdp$). Following \citet{cai2020provably,he2022near}, we use online mirror descent (OMD) to update the policies and value-targeted regression (VTR) \citep{jia2020model, ayoub2020model} to learn the transition. Nevertheless, we show that the value-function-based mirror descent inevitably introduces the polynomial dependency on the planning horizon $H$ in the regret upper bound. To address this issue, inspired by \citet{rosenberg2019online, jin2020learning} and \citet{Kalagarla2020ASA}, we use occupancy measure as a proxy of the policy and conduct OMD on the occupancy measures to update. Like \citet{jin2020learning}, we maintain a confidence set of the transition kernel and utilize constrained OMD to handle the unknown transition. To achieve a horizon-free regret bound,
we also extend the high-order moment estimator in \citet{Zhou2022ComputationallyEH} to stochastic policies and obtain a tighter Bernstein-type confidence set. The regret of our algorithm can be upper bounded by $\tilde{O}(d\sqrt{K} + d^2)$ in the first $K$ episodes with high probability, where $d$ is the dimension of the feature mapping. To the best of our knowledge, our algorithm is the first algorithm to achieve horizon-free regret in learning adversarial linear mixture MDPs. Our three main contributions are summarized as follows.

 \begin{itemize}
     \item We propose a new algorithm, $\algmdp$, for linear mixture MDPs with adversarial reward uniformly bounded by $1/H$. Compared to the previous works (e.g., \citet{cai2020provably, he2022near}), $\algmdp$ use occupancy-measure-based OMD rather than direct policy optimization. $\algmdp$ also use the high-order moment estimator to further facilitate the learning of the transition kernel. 
     \item Our analysis shows that $\algmdp$ achieves a regret bound $\tilde{O}(d\sqrt{K} + d^2)$, where $K$ is the number of episodes and $d$ is the dimension of the feature mapping. As far as we know, $\algmdp$ is the first algorithm for adversarial RL achieving a horizon-free regret upper bound. 
     \item We also provide hardness results in addition to the regret upper bound. Our first lower bound shows that an unbounded $|\cS|$ will result in a lower bound asymptotically linear in $\sqrt{H}$, which justifies our assumption of $|\cS|<\infty$. We also provide a minimax lower bound of $\tilde{\Omega}(d\sqrt{K})$, which manifests the near optimality of $\algmdp$.
 \end{itemize}

\paragraph{Notation}
We denote scalars by lowercase letters, and denote vectors and matrices by lower and uppercase boldface letters respectively. We use $[n]$ to denote the set $\{1,\dots, n\}$, and $\seq{n}$ for set $\{0,\dots, n-1\}$. Given a $\RR^{d\times d}\ni\bSigma \succ \zero$ and vector $\xb\in \RR^d$, we denote the vector's $L_2$-norm by $\|\xb\|_2$ and define $\|\xb\|_{\bSigma}=\sqrt{\xb^\top\bSigma\xb}$. For two sequences $\{a_n\}_{n=1}^\infty$ and $\{b_n\}_{n=1}^\infty$ that are positive, we say $a_n=O(b_n)$ if there exists an absolute constant $C>0$ such that $a_n\leq Cb_n$ holds for all $n\ge 1$, and say $a_n=\Omega(b_n)$ if there exists an absolute constant $C>0$ such that $a_n\geq Cb_n$ holds for all $n\ge 1$. We say $a_n = \Theta(b_n)$ if both $a_n=O(b_n)$ and $a_n=\Omega(b_n)$ holds. We further use $\tilde O(\cdot)$ to hide the polylogarithmic factors. Let $\ind\{\cdot\}$ denote the indicator function, and $[x]_{[a,b]}$ denote the truncation function $x\cdot \ind\{a \leq x \leq b\} + a\cdot \ind\{x<a\} + b\cdot \ind\{x>b\}$ where $a \leq b \in \RR, x\in\RR$. Let $\Delta(\cdot)$ represent the probability simplex over a finite set.

\section{Related Work}

\paragraph{RL with linear function approximation} To make MDPs with large state space amenable for provable RL, there has been an explosion of works relying on MDP classes with various linear structures \citep{Jiang2017ContextualDP, sun2019model, du2021bilinear, jin2021bellman}. Among different assumptions made in recent work \citep{yang2019sample, wang2020optimism, jin2020provably, du2019good, zanette2020learning, ayoub2020model, jia2020model, weisz2021exponential, zhou2021nearly, he2022near, Zhou2022ComputationallyEH, he2022nearly}, we consider the linear mixture MDP setting \citep{jia2020model,ayoub2020model,zhou2021nearly, zhang2021reinforcement, he2022near}, where the transition kernel is a linear combination of $d$ given models. More specifically, we focus on the adversarial linear mixture MDP of \citet{he2022near}, whose approach is nearly minimax optimal but insufficient to obtain horizon-free regret, with a refined reward assumption. There is also a parallel line of work \citep{jin2020provably,he2022nearly} investigating the linear MDP model of \citet{jin2020provably} with much larger degree of freedom, where the transition function and reward function are linear in a known state-action feature mapping respectively.







\paragraph{Horizon-free RL}
RL is once believed to be far more harder than contextual bandits. However, recent works has begun to overthrow this long-standing stereotype \citep{wang2020long}. To achieve a fair comparison with CB, there are two assumptions. One is to assume the total reward is bounded by one in each episode \citep{jiang2018open}. Under such assumption, It is possible to obtain algorithms with entirely $H$-independent regret in the tabular setting \citep{zhang2021reinforcement, zhang2022horizon}. \citet{zhang2021ImporvedVA,kim2022improved,chen2022improved, Zhou2022ComputationallyEH} further proposed horizon-free algorithms for linear mixture MDPs and linear MDPs. Though near-optimal algorithms have been proposed to learn a Dirac policy with $\tilde{O}(d\sqrt{K} + d^2)$ regret under linear function approximation \citep{Zhou2022ComputationallyEH}, and similar regret guarantees with no poly$(H)$ dependency has been established in various settings \citep{zhang2020nearly, tarbouriech2021stochastic, zhou2022horizon}, any of the above work can not even learn a nontrivial categorical policy. Another assumption is to assume that the reward is uniformly bounded by $1/H$ (Assumption~2, \citealp{zhang2021reinforcement}). We employ the later assumption to approach MDPs with large state space and adversarial reward and learn stochastic policies in a horizon-free manner.




\paragraph{RL with adversarial reward} A long line of works \citep{even2009online,yu2009markov,neu2010online, neu2012adversarial,zimin2013online, dick2014online, rosenberg2019online,cai2020provably,jin2020learning,shani2020optimistic,luo2021policy,he2022near} has studied RL with adversarial reward, where the reward is selected by the environment at the beginning of each episode. To cope with adversarial reward, there are generally two iterative schemes. The first scheme is the policy-optimization-based method \citep{neu2010online,cai2020provably,luo2021policy,he2022near}, where the policy is updated according to the estimated state-value function directly. Following this spirit, under bandit feedback, \citet{neu2010online} achieves a regret upper bound of $\tilde{O}(T^{2/3})$ with known transition, and \citet{shani2020optimistic} achieves $\tilde{O}(\sqrt{S^2AH^4}K^{2/3})$ regret under unknown transition. Under full-information feedback, \citet{cai2020provably} establish the first sublinear regret guarantee and POWERS in \citet{he2022near} achieves a near-optimal $\tilde{O}(dH\sqrt{K})$ regret for adversarial linear mixture MDPs. The second scheme is occupancy-measure-based method \citep{zimin2013online, rosenberg2019online,rosenberg2019ssp,jin2020learning,luo2021policy,neu2021online,dai2022follow}. The policy is updated under the guidance of optimization of occupancy measure. In particular, \citet{zimin2013online} proposed O-REPS which achieves $\tilde{O}(\sqrt{HSAK})$ regret for bandit feedback and near-optimal $\tilde{O}(H\sqrt{K})$ regret for full-information feedback for known transition. For unknown transition and bandit feedback, \citet{rosenberg2019online} achieves $\tilde{O}(H^{3/2}SA^{1/4}K^{3/4})$ regret, which was later improved to $\tilde{O}(HS\sqrt{AK})$ by \citet{jin2020learning}. Under linear function approximation, \citet{neu2021online} achieves $\tilde{O}(H\sqrt{dK})$ regret for linear MDPs with known transition and bandit feedback, and \citet{anonymous2023learning} achieves $\tilde{O}(dS^2\sqrt{K}+\sqrt{HSAK})$ regret for linear mixture MDPs with bandit feedback. In this work, we use occupancy-measure-based method to deal with adversarial reward and focus on the setting of linear mixture MDPs with full-information feedback.










\section{Preliminaries}

We study RL for episodic linear mixture MDPs with adversarial reward. We introduce the definitions and necessary assumptions as follows.

\subsection{MDPs with adversarial reward}

We denote a homogeneous, episodic MDP by a tuple $M=M(\cS, \cA, H, \{\reward^k\}_{k \in [K]}, \mathbb{P})$, where $\cS$ is the state space and $\cA$ is the action space, $H$ is the length of the episode, $r^k : \cS \times \cA \rightarrow [0,1/H]$ is the deterministic reward function at the $k$-th episode, and $\mathbb{P}(\cdot|\cdot,\cdot)$ is the transition kernel from a state-action pair to the next state. $\reward^k$ is adversarially chosen by the environment at the beginning of the $k$-th episode and revealed to the agent at the end of that episode. A policy $\pi = \{\pi_h\}_{h=1}^H$ is a collection of functions $\pi_h: \cS \rightarrow \Delta(\cA)$. 

\subsection{Value function and regret}

For $(s,a) \in \cS \times \cA$, we define the action-value function $\qvalue_{k,h}^{\pi}$ and (state) value function $\vvalue_{k,h}^{\pi}$ as follows: 
\begin{align}
&\qvalue_{k,h}^{\pi}(s,a) = \reward^{k}(s,a) + \EE\bigg[\textstyle{\sum_{h' = h+1}^H} \reward^{k}(s_{h'}, a_{h'})\bigg|s_{h} = s, a_{h} = a\bigg],\notag\\
&\vvalue_{k,h}^{\pi}(s) = \EE_{a\sim \pi_{h}(\cdot|s)}\big[\qvalue_{k,h}^{\pi}(s, a)\big],\vvalue_{k,H+1}^{\pi}(s)=0.\notag
\end{align}
Here in the definition of $\qvalue_{k,h}^{\pi}$, we use $\EE[\cdot]$  to denote the expectation over the state-action sequences
 $(s_h, a_h, s_{h+1}, a_{h+1}, .., s_H, a_H)$, where $s_h=s,a_h=a$ and $s_{h'+1} \sim \PP_h(\cdot| s_{h'}, a_{h'}),\ a_{h'+1} \sim \pi_{h'+1}(\cdot|s_{h'+1})$ for all $h' = h, ... H-1$. For simplicity, for any function $\vvalue: \cS \rightarrow \RR$, we denote   
 \begin{align*}
 [\PP\vvalue](s,a) = \EE_{s' \sim \PP(\cdot|s,a)}\vvalue(s'),\ [\VV\vvalue](s,a) = [\PP V^2](s,a)-\big([\PP V](s,a)\big)^2,
 \end{align*}
where $\vvalue^2$ is a shorthand for the function whose value at state $s$ is $\big(\vvalue(s)\big)^2$.
 Using this notation, for policy $\pi$, we have the following Bellman equality $\qvalue_{k,h}^{\pi}(s,a) = \reward^k(s,a) + [\PP\vvalue_{k,h+1}^{\pi}](s,a)$.
 
In the online learning setting, the agent determines a policy $\pi^k$ and start from a fixed state $s_1$ at the beginning of episode $k$. Then at each stage $h \in [H]$, the agent takes an action $a_h \sim \pi^k_h(\cdot|s^k_h)$ and observes the next state $s_{h+1}^k \sim \PP(\cdot|s_h^k, a_h^k)$. For the adversarial reward, the goal of RL is to minimize the expected regret, which is the expected loss of the algorithm relative to the best-fixed policy in hindsight \citep{cesa-bianchi_lugosi_2006}. We denote the optimal policy as $\pi^* = \sup_\pi\sum_{k=1}^K \vvalue_{k,1}^\pi(s_1^k)$. Then we have the following Bellman optimally equation $ \qvalue_{k,h}^*(s,a) = \reward_h^k(s,a) + [\PP_h\vvalue_{k,h}^*](s,a)$, where $\qvalue_{k,h}^*(s,a),\vvalue_{k,h}^*(s,a)$ are the corresponding optimal action-value function and value function. Thus the expected regret can be written as:
 \begin{align}
    \text{Regret}(K)=\textstyle{\sum_{k=1}^K}\big(\vvalue_{k,1}^{*}(s_1^k)-\vvalue_{k,1}^{\pi^k}(s_1^k)\big).\notag
\end{align}

In this paper, we focus on achieving a horizon-free bound on $\text{Regret}(K)$. Two assumptions are crucial to this end. The first assumption assumes that there is no spiky reward in each episode.


\begin{assumption}[Uniform reward (Assumption 2, \citealt{zhang2021reinforcement})]\label{assumption:uniform-reward}
    $r^k(s_h, a_h)\leq \frac{1}{H}, \forall h\in[H]$ for any trajectory $\{ s_h, a_h \}_{h=1}^H$ induced by $a_h\sim\pi_h(\cdot|s_h)$ and $s_{h+1}\sim\PP(\cdot|s_h, a_h)$ for any policy $\pi$ in every episode $k\in[K]$.
\end{assumption}


The next assumption assumes the transition kernel $\PP$ enjoys a linear representation w.r.t. a triplet feature mapping. We define the linear mixture MDPs \citep{jia2020model, ayoub2020model, zhou2021nearly, Zhou2022ComputationallyEH} as follows.\footnote{We inevitably only consider finite $\cS$ and $\cA$ due to technical reasons (see Section~\ref{sec:main} for details).}

\begin{assumption}[Linear mixture MDP]\label{assumption:linearMDP}
A MDP $M = (\cS, \cA, H, \{\reward^k\}_{k\in[K]}, \PP)$ is called an episode $B$-bounded linear mixture MDP, if there exists a \emph{known} feature mapping $\bphi(s'|s,a): \cS \times \cA \times \cS \rightarrow \RR^d$ and an \emph{unknown} vector $\btheta^* \in \RR^d$ such that $\PP(s^\prime|s,a)=\dotp{\bphi(s^\prime|s,a)}{\btheta^*}$ for any state-action-next-state triplet $(s, a, s^\prime)$. We assume $\|\btheta^*\|_2 \leq B$ and for any bounded function $V:\cS\to[0,1]$ and any $(s,a) \in \cS\times\cA$, we have $\norm{\bphi_V(s,a)}_2 \leq 1$, where $\bphi_V(s,a) = \sum_{s^\prime \in \cS} \bphi(s^\prime|s, a)V(s^\prime)$.
\end{assumption}

Linear mixture MDPs have the following key properties. For any function $\vvalue: \cS \rightarrow \RR$ and any state-action pair $(s,a)\in \cS \times \cA$, the conditional expectation of $V$ over $\PP(\cdot|s,a)$ is a linear function of $\btheta^*$, i.e., $[\PP\vvalue](s,a) = \la \bphi_V(s,a), \btheta^* \ra$. Meanwhile, the conditional variance of $V$ over $\PP(s, a)$ is quadratic in $\btheta^*$, i.e., $[\VV\vvalue](s,a) = \la \bphi_{V^2}(s,a), \btheta^* \ra - [\la \bphi_V(s,a), \btheta^* \ra]^2$.

\subsection{Occupancy measure}

We introduce the concept of occupancy measure \citep{altman1999constrained, jin2020learning} as a proxy of the stochastic policy, which will be used in our algorithm design. The occupancy measure $z^\pi = \{z^{\pi}_h : \cS \times \cA \times \cS \rightarrow [0,1]\}_{h=1}^{H}$ associated with a stochastic policy $\pi$ and a transition function $\PP$ is defined as
\begin{align*}
    &z^{\pi}_h(s,a, s'; \PP) = \EE[\ind\{s_h=s, a_h=a, s_{h+1} = s'\}|\pi, \PP].
\end{align*}
A reasonable occupancy measure $z^{\pi}$ must satisfy the following constraints:{
\begin{itemize}[leftmargin=*]
    \item Normalization: 
    \begin{align}
    \sum_{s\in\cS, a\in\cA, s'\in\cS}z^{\pi}_h(s,a, s') = 1 .\label{cond:sum21}
    \end{align}
    \item Same marginal distribution for all the state $s \in \cS$ on stage $h \in [2:H]$:
    \begin{align}
    \sum_{a\in\cA,s'\in\cS} z^{\pi}_h(s,a,s') = \sum_{x\in\cS,a'\in\cA} z^{\pi}_{h-1}(x,a',s) .\label{cond:flow_eq}
    \end{align}
    \item Initial distribution for all $(s, a) \in \cS \times \cA$:
    \begin{align}
    z^{\pi}_1(s,a,s') = \pi_1(a|s)\ind\left\{s=s_1\right\}\PP(s'|s,a).
    \end{align}
\end{itemize}
}

\begin{lemma}[\citet{rosenberg2019online}]\label{lem:induced_pi_P}
If a set of functions $z^\pi = \{z^{\pi}_h : \cS \times \cA \times \cS \rightarrow [0,1]\}_{h=1}^{H}$ satisfies~\eqref{cond:sum21} and~\eqref{cond:flow_eq}, then it is a valid occupancy measure. This occupancy measure is associated with the following induced transition function $\PP$:
\begin{align}
    \PP_h(s'|s,a) = \frac{z^{\pi}_h(s,a,s')}{\sum_{s^{''}\in\cS}z^{\pi}_h(s,a,s^{''})}, \label{eq:induced_P}
\end{align}
for all $(s,a,s',h) \in \cS \times \cA \times \cS \times [H]$, and induced policy $\pi$:
\begin{align}
    \pi_h(a|s) =  \frac{\sum_{s'\in\cS}z^{\pi}_h(s,a,s')}{\sum_{a'\in\cA,x\in\cS}z^{\pi}_h(s,a',x)},
\end{align}
for all $(s,a, h) \in \cS \times \cA \times [H]$.
\end{lemma}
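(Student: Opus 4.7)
The plan is to show that the formulas in \eqref{eq:induced_P} and the policy definition produce a well-defined Markov policy $\pi$ and transition kernel $\PP$ under which $z^\pi$ is precisely the occupancy measure generated by rolling out $\pi$ from $s_1$ in the induced MDP. The key device is to identify the state marginal $\mu_h(s):=\sum_{a\in\cA,s'\in\cS}z^\pi_h(s,a,s')$ as the true distribution of $s_h$ under $(\pi,\PP)$, and then recover the full joint from the chain rule.

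The first step is to verify that $\PP_h(\cdot|s,a)$ and $\pi_h(\cdot|s)$ are bona fide probability distributions. Whenever $\sum_{s''}z^\pi_h(s,a,s'')>0$, equation~\eqref{eq:induced_P} produces non-negative values that sum to one over $s'$; similarly, whenever $\sum_{a',x}z^\pi_h(s,a',x)>0$, the policy formula yields a non-negative distribution over $a$. On the null set where either denominator vanishes, I would extend $\PP_h(\cdot|s,a)$ and $\pi_h(\cdot|s)$ to arbitrary (say, uniform) distributions; since those state-action pairs carry no $z^\pi$-mass, this extension will not affect any subsequent identity.

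The heart of the argument is an induction on $h\in[H]$ showing that $\mu_h(s)=\Pr[s_h=s\mid \pi,\PP]$ with $s_1$ fixed. The base case $h=1$ follows from the initial-condition equation, which concentrates $\mu_1$ at $s_1$ (and is in fact a consequence of the flow equation applied once we stipulate the initial state). For the inductive step, the flow equation~\eqref{cond:flow_eq} gives
\begin{align*}
\mu_h(s)=\sum_{x\in\cS,a'\in\cA}z^\pi_{h-1}(x,a',s)=\sum_{x,a'}\mu_{h-1}(x)\,\pi_{h-1}(a'|x)\,\PP_{h-1}(s|x,a'),
\end{align*}
where the second equality uses the definitions of $\pi_{h-1}$ and $\PP_{h-1}$ to factor each summand of $z^\pi_{h-1}(x,a',s)$ as $\mu_{h-1}(x)\pi_{h-1}(a'|x)\PP_{h-1}(s|x,a')$. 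The right-hand side is exactly $\Pr[s_h=s\mid\pi,\PP]$ by the induction hypothesis, closing the recursion.

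To conclude, note that by construction $z^\pi_h(s,a,s')=\mu_h(s)\,\pi_h(a|s)\,\PP_h(s'|s,a)$ for every $(s,a,s')$ with $\mu_h(s)>0$, while both sides vanish otherwise. Combined with the induction above, this gives $z^\pi_h(s,a,s')=\EE[\ind\{s_h=s,a_h=a,s_{h+1}=s'\}\mid\pi,\PP]$, which is the definition of a valid occupancy measure. The only real subtlety is guaranteeing that the arbitrary extension of $\pi$ and $\PP$ on the zero-mass set never propagates into the flow recursion; this is the main (and essentially bookkeeping) obstacle, and it is handled by verifying that the factorization step above only requires the definitions where $\mu_{h-1}(x)$ and $\sum_{s''}z^\pi_{h-1}(x,a',s'')$ are positive.
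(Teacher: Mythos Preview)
The paper does not give its own proof of this lemma: it is stated with attribution to \citet{rosenberg2019online} and used as a black box, so there is nothing in the paper to compare your argument against.

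That said, your approach is the standard one and is essentially correct. One small point: the lemma as stated only assumes the normalization condition~\eqref{cond:sum21} and the flow condition~\eqref{cond:flow_eq}, not the initial condition, so your base case should not claim that $\mu_1$ is concentrated at $s_1$. Instead, simply take $\mu_1(\cdot)$ itself as the initial state distribution of the induced process; the induction then goes through verbatim, and the conclusion is that $z^\pi$ is the occupancy measure generated by $(\mu_1,\pi,\PP)$. In the paper's downstream use (Definition~\ref{def:D_k}), the extra constraint $\sum_{a,s'}z_1(s,a,s')=\ind\{s=s_1\}$ is imposed separately, which is what forces $\mu_1=\delta_{s_1}$ there.
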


We use $z^*$ to denote the occupancy measure induced by the optimal fixed-policy in hindsight, $\pi^*$ and the true transition function, $\PP$.

\section{The Proposed Algorithm}\label{sec:algorithm}

\begin{algorithm*}[t]
	\caption{$\algmdp$}\label{algorithm:finite}
	\begin{algorithmic}[1]
	\REQUIRE Regularization parameter $\lambda$, an upper bound $\pnorm$ of the $\ell_2$-norm of $\btheta^*$, confidence radius $\{\hat\beta_k\}_{k \geq 1}$, level $M$, variance parameters $\xi, \gamma$, $\seq{M} = \{0,\dots, M-1\}$, learning rate $\alpha$
	\STATE Set initial occupancy measure $\big\{ z_h^0(\cdot, \cdot, \cdot) \big\}_{h=1}^{H}$ as uniform distribution and assume $r^0(\cdot, \cdot)=0$.
	\STATE For $m \in \seq{M}$, set $\hat\btheta_{1,m} \leftarrow\zero$, $\tilde\bSigma_{0,H+1,m}\leftarrow \lambda\Ib$, $\tilde\bbb_{0,H+1,m} \leftarrow \zero$. Set $ V_{1, H+1}(\cdot) \leftarrow 0$, $\mathcal{C}_1 \leftarrow \{\btheta: \big\|\btheta\big\| \leq \beta_1\}$
	\FOR{$k=1,\ldots, K$}
	    \STATE Receive $s_1^k$.
        \STATE Set $\mathcal{C}_{k} \leftarrow \{\btheta: \big\|\hat\bSigma_{k,0}^{1/2} (\btheta-\hat\btheta_{k,0})\big\|_2 \leq \hat{\beta}_{k}\}$, $\cD_k$ as in~\eqref{def:D_k} \label{algorithm:confidenceSet}
	    \STATE $\pi^k \leftarrow$ Algorithm~\ref{algorithm:omd}($z^{k-1}, \cD_k, \alpha$)
	    \FOR{$h=1,\ldots, H$}
    	    \STATE Take action $a_h^k \sim \pi_h^k(\cdot|s_h^k)$ and receive next state $s_{h+1}^k\sim \PP_h(\cdot|s_h^k,a_h^k)$ \label{algorithm:line8}
    	    \STATE Observe the adversarial reward function $\reward^k(\cdot,\cdot)$ 
	    \ENDFOR
		\FOR{$h = H,\dots, 1$}\label{algorithm:finite:vq:start}
        	\STATE Set $ Q_{k,h}(\cdot,\cdot)\leftarrow  \Big[\reward^k(\cdot,\cdot)+\big\la\hat{\btheta}_{k,0},\bphi_{ V_{k,h+1}}(\cdot,\cdot)\big\ra+\hat{\beta}_k\big\|\hat{\bSigma}_{k,0}^{-1/2}\bphi_{ V_{k,h+1}}(\cdot,\cdot)\big\|_2\Big]_{[0,1]}$
    	    \STATE Set $ V_{k,h}(\cdot)\leftarrow \EE_{a\sim \pi_h^k(\cdot|\cdot)}[ Q_{k,h}(\cdot,a)]$ \label{algorithmline:Vfunc}
    	\ENDFOR\label{algorithm:finite:vq:end}
		\STATE	For $m \in \seq{M}$, set $\tilde\bSigma_{k,1,m} \leftarrow \tilde\bSigma_{k-1,H+1,m}$
    	\FOR{$h = 1,\dots, H$}
        	\STATE For $m \in \seq{M}$, denote $\bphi_{k,h,m} = \bphi_{ V_{k,h+1}^{2^{m}}}(s_h^k, a_h^k)$.
        	\STATE  Set $\{\bar\sigma_{k,h,m}\}_{m \in \seq{M}} \leftarrow$Algorithm~\ref{algorithm:variance}($\{\bphi_{k,h,m},\hat\btheta_{k,m},\tilde\bSigma_{k,h,m}, \hat\bSigma_{k,m}\}_{m \in \seq{M}}$, $\hat\beta_k$, $\xi, \gamma$)
        	\STATE For $m \in \seq{M}$, set $\tilde\bSigma_{k,h+1,m} \leftarrow \tilde\bSigma_{k,h,m} + \bphi_{k,h,m}\bphi_{k,h,m}^\top/\bar\sigma_{k,h,m}^2$
        	\STATE For $m \in \seq{M}$, set $\tilde\bbb_{k,h+1,m}\leftarrow \tilde\bbb_{k,h,m} + \bphi_{k,h,m} V_{k,h+1}^{2^m}(s_{h+1}^k)/\bar\sigma_{k,h,m}^2$
    	\ENDFOR
    	\STATE  For $m \in \seq{M}$, set $\hat\bSigma_{k+1,m}\leftarrow \tilde\bSigma_{k,H+1,m},\hat\bbb_{k+1,m} \leftarrow \tilde\bbb_{k,H+1,m}, \hat\btheta_{k+1,m} \leftarrow \hat\bSigma_{k+1,m}^{-1}\hat\bbb_{k+1,m}$ 
	\ENDFOR
	\end{algorithmic}
\end{algorithm*}

\begin{algorithm}[t]
	\caption{Mirror Descent on Occupancy Measure}\label{algorithm:omd}
	\begin{algorithmic}[1]
	\REQUIRE the occupancy measure of last iteration $z^{k-1}$, constraint set $\cD_k$, learning rate $\alpha$
 \FOR{$(h, s, a, s^\prime)\in [H]\times\cS\times\cA\times\cS$}
	\STATE Set $w^k_h(s,a,s') \leftarrow z^{k-1}_h(s,a,s')\exp\{\alpha r^{k-1}_h(s,a)\}$ \label{algorithm:omd:step1}
	\STATE Set $z^k \leftarrow \argmin_{z \in \cD_k} D_{\Phi}(z, w^k)$\label{algorithm:omd:BregProj}
	\STATE Set $\pi_h^k(a|s) \leftarrow \frac{\sum_{x}z_h^k(s,a,x)}{\sum_{a,y}z_h^k(s,a,y)}$
  \ENDFOR
	\end{algorithmic}
\end{algorithm}

\begin{algorithm}[t]
	\caption{High-order moment estimator ($\algvar$) \citep{Zhou2022ComputationallyEH}}\label{algorithm:variance}
	\begin{algorithmic}[1]
	\REQUIRE Features $\{\bphi_{k,h,m}\}_{m \in \seq{M}}$, vector estimators $\{\hat\btheta_{k,m}\}_{m \in \seq{M}}$, covariance matrix $\{\hat\bSigma_{k,m}\}_{m \in \seq{M}}$ and $\{\tilde\bSigma_{k,h,m}\}_{m \in \seq{M}}$, confidence radius $\hat\beta_k$, $\xi, \gamma$
	\FOR{$m = 0,\dots, M-2$}
	\STATE Set $[\bar\var_{k,m}\vvalue_{k,h+1}^{2^m}](s_h^k, a_h^k) \leftarrow  \big[\big\la\bphi_{k,h,m+1}, \hat\btheta_{k,m+1}\big\ra\big]_{[0, 1]} -  \big[\big\la \bphi_{k,h,m}, \hat\btheta_{k,m}\big\ra\big]_{[0,1]}^2$\label{def:barV}
	\STATE Set $\error_{k,h,m} \leftarrow   \min\big\{1,2\hat\beta_k\big\|\bphi_{k,h,m}\big\|_{\hat\bSigma_{k,m}^{-1}}\big\} + \min\big\{1, \hat\beta_k\big\|\bphi_{k,h,m+1}\big\|_{\hat\bSigma_{k,m+1}^{-1}}\big\}$ \label{def:E_k}
	\STATE Set $\bar\sigma_{k,h,m}^2\leftarrow \max\big\{ [\bar\var_{k,m}\vvalue_{k, h+1}^{2^m}](s_h^k, a_h^k) + \error_{k,h,m}, \xi^2, \gamma^2\big\|\bphi_{k,h,m}\big\|_{\tilde\bSigma_{k,h,m}^{-1}}\big\}$
	\ENDFOR
	\STATE  Set $\bar\sigma_{k,h,M-1}^2\leftarrow\max\Big\{ 1, \xi^2, \gamma^2\big\|\bphi_{k,h,M-1}\big\|_{\tilde\bSigma_{k,h,M-1}^{-1}}\Big\}$
	\ENSURE $\{\bar\sigma_{k,h,m}\}_{m \in \seq{M}}$
	\end{algorithmic}
\end{algorithm}

In this section, we demonstrate a horizon-free algorithm $\algmdp$ for learning episodic linear mixture MDPs with adversarial reward.
At a high level, in each episode, $\algmdp$ can be divided into two steps. First, $\algmdp$ updates the policy based on observed data. After that, $\algmdp$ uses VTR \citep{jia2020model,ayoub2020model} to learn the linear mixture MDP. To achieve horizon-free, we use occupancy measure guided mirror descent rather than proximal policy optimization to update the policy, and adopt variance-uncertainty-aware linear regression estimator and high-order moment estimator to learn the MDP. Please refer to Section~\ref{sec:technique} for a detailed discussion on technical issues.

\subsection{OMD on occupancy measure}

At the beginning of each episode, following \citet{jin2020learning} and \citet{Kalagarla2020ASA}, $\algmdp$ uses occupancy measures to update the policy based on the observed data. First we calculate the occupancy measure of this episode $\{z^k_h\}_{h=1}^H$ based on the occupancy measure $\{z^{k-1}_h\}_{h=1}^H$ and the reward $\{r^{k-1}_h\}_{h=1}^H$ of the last episode. To utilize learned information, we hope that the transition induced by the new occupancy measure is close to our estimation. Given the confidence set of $\btheta^*$ at the beginning of $k$-th episode, $\cC_k$ (Line~\ref{algorithm:confidenceSet}, Algorithm~\ref{algorithm:finite}), we construct the feasible domain of occupancy measure $\cD_k$ such that for all occupancy lies in $\cD_k$, the transition it induced lies in the confidence set $\cC_k$. 

\begin{definition}\label{def:D_k}
Given the confidence set $\cC_k$ of parameter $\btheta^*$, we define the feasible occupancy measure set $\cD_k \subseteq \mathbb{R}^{|\cS|^2|\cA|}$ as follows:
\begin{align}
    \cD_k = & \Big\{ z_h(\cdot, \cdot, \cdot) \in \mathbb{R}^{|\cS|^2|\cA|}, h \in [H] \bigm\vert  z_h(\cdot, \cdot, \cdot) \geq 0;  \notag  \\
    & \sum_{a,s'}z_h(s,a,s') = \sum_{a,s'}z_{h-1}(s',a,s), \forall (s,h) \in \cS \times [2:H]; \notag \\
    & \sum_{a,s'}z_1(s,a,s') = \mathbf{1}\{s=s_1\}, \forall s\in\cS;  \notag \\
    & \forall (s,a,h) \in \cS \times \cA \times [H],\text{s.t.} \sum_{y \in \cS}z_h(s,a,y) > 0, \notag \\
    & \exists \ \bar{\btheta}_{s,a,h,k} \in \mathcal{C}_k, \text{s.t.} \frac{z_h(s,a,\cdot)}{\sum_{y \in S}z_h(s,a,y)} = \langle \bar{\btheta}_{s,a,h,k}, \bphi(\cdot|s,a) \rangle \Big\}.
\end{align}
\end{definition}

\begin{remark}
In Definition~\ref{def:D_k}, the second and the third constrain follows~\eqref{cond:flow_eq} and~\eqref{cond:sum21}, which implies that the total probability of every $z_h$, i.e., $\sum_{s,a,s^\prime\in\cS\times\cA\times\cS} z_h(s, a, s^\prime)$, is $1$. Under linear function approximation, we want the induced transition generated from $\bphi(\cdot|\cdot,\cdot)$, which is indicated by the last constraint. 
\end{remark}

The following lemma shows that these domains are convex.

\begin{lemma}\label{lem:convex}
For all $k \in [K]$, $\cD_k$ is a convex set.
\end{lemma}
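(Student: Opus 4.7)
The plan is to take two occupancy measures $z^{(1)}, z^{(2)} \in \cD_k$ and $\lambda \in [0,1]$, and verify that $z := \lambda z^{(1)} + (1-\lambda) z^{(2)}$ satisfies each of the four defining conditions. The first three conditions (non-negativity, flow conservation, and initial distribution) are linear equalities/inequalities in $z_h$, so they are automatically preserved under any convex combination. The only non-trivial part is the fourth condition, which says the induced per-transition conditional distribution must be realizable by some parameter in $\cC_k$.

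For the fourth condition, first observe that $\cC_k = \{\btheta : \|\hat\bSigma_{k,0}^{1/2}(\btheta - \hat\btheta_{k,0})\|_2 \le \hat\beta_k\}$ is an ellipsoid, hence convex. Fix any $(s,a,h)$ with $\sum_y z_h(s,a,y) > 0$, and denote $p_i := \sum_y z^{(i)}_h(s,a,y)$ for $i = 1,2$. For each $i$ with $p_i > 0$ there exists $\bar\btheta^{(i)}_{s,a,h,k} \in \cC_k$ with $z^{(i)}_h(s,a,\cdot) = p_i \, \langle \bar\btheta^{(i)}_{s,a,h,k},\, \bphi(\cdot\,|\,s,a)\rangle$. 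If only one of $\lambda p_1$ and $(1-\lambda)p_2$ is nonzero, the induced conditional of $z$ coincides with that of the single surviving measure and the corresponding $\bar\btheta$ works verbatim. Otherwise, both weights $\lambda p_1/(\lambda p_1 + (1-\lambda)p_2)$ and $(1-\lambda)p_2/(\lambda p_1 + (1-\lambda)p_2)$ are in $(0,1)$ and sum to $1$, so the convex combination
\[
\bar\btheta_{s,a,h,k} \;:=\; \frac{\lambda p_1 \,\bar\btheta^{(1)}_{s,a,h,k} + (1-\lambda)p_2\, \bar\btheta^{(2)}_{s,a,h,k}}{\lambda p_1 + (1-\lambda) p_2}
\]
lies in $\cC_k$ by convexity. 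A direct computation shows
\[
\frac{z_h(s,a,\cdot)}{\sum_y z_h(s,a,y)} \;=\; \bigl\langle \bar\btheta_{s,a,h,k},\, \bphi(\cdot\,|\,s,a) \bigr\rangle,
\]
establishing the fourth condition for $z$. Hence $z \in \cD_k$.

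The only mild subtlety is the bookkeeping when one of the $p_i$ vanishes (so that $\bar\btheta^{(i)}_{s,a,h,k}$ is not defined by the hypothesis on $z^{(i)}$), which is why I split into cases above. Apart from that, convexity of $\cD_k$ reduces to convexity of $\cC_k$ together with the observation that, conditional on $(s,a,h)$, the convex combination of occupancy measures induces a \emph{weighted} convex combination in the parameter space with weights proportional to the probability masses $p_i$.
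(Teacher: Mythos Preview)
Your proof is correct and follows essentially the same approach as the paper: verify the linear constraints are trivially preserved, then for the transition-realizability constraint show that the induced conditional of the mixture is a \emph{weighted} convex combination of the two parameters with weights proportional to the marginals $p_i$, so convexity of the ellipsoid $\cC_k$ finishes the argument. The paper only checks the midpoint $t=(z+w)/2$, whereas you treat a general $\lambda\in[0,1]$ and are slightly more careful about the degenerate case where one $p_i$ vanishes; both differences are cosmetic.
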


Since $z^k_h(\cdot,\cdot,\cdot)$ can be viewed as a probability distribution, we choose the standard mirror map $\Phi$ for probability simplex, which is defined as follows:
\begin{equation}\label{eq:Phi_strictly_convex}
    \Phi(z) = \sum_{h=1}^H \sum_{s,a,s'} z_h(s,a,s') (\log z_h(s,a,s') - 1).
\end{equation}
We also define the corresponding Bregman divergence $D_{\Phi}$:
\begin{align*}
    D_{\Phi}(x,y) = \Phi(x) - \Phi(y) -\la x - y, \nabla \Phi(y) \ra .
\end{align*}
And the following lemma shows that our mirror map is $1/H$-strongly convex.
\begin{lemma} \label{lem: strong convex}
$\Phi$ is $1/H$-strongly convex on the ``simplex'' of occupancy measure with respect to $\big\| \cdot \big\|_1$, thus strongly convex on $\cD_k$.
\end{lemma}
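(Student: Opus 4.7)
The plan is to exploit the separable structure of $\Phi$ across the horizon and then combine the single-step strong convexity estimates via a Cauchy--Schwarz step that produces the $1/H$ factor.

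First I would observe that by the normalization constraint~\eqref{cond:sum21} (together with~\eqref{cond:flow_eq}), every feasible occupancy measure $z \in \cD_k$ satisfies $\sum_{s,a,s'} z_h(s,a,s') = 1$ for each $h \in [H]$. Thus each slice $z_h$ lies in the probability simplex $\Delta(\cS \times \cA \times \cS)$, and $\Phi$ decomposes as $\Phi(z) = \sum_{h=1}^H \phi(z_h)$ where $\phi(x) = \sum_{i} x_i(\log x_i - 1)$ is the standard (unnormalized) negative entropy. Consequently the Bregman divergence is also separable:
\begin{align*}
D_\Phi(z, z') \;=\; \sum_{h=1}^H D_\phi(z_h, z'_h).
\end{align*}

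Next I would invoke the classical fact (equivalent to Pinsker's inequality) that on the probability simplex the negative entropy $\phi$ is $1$-strongly convex with respect to $\|\cdot\|_1$, i.e., $D_\phi(z_h, z'_h) \geq \tfrac12 \|z_h - z'_h\|_1^2$ for all $z_h, z'_h \in \Delta(\cS \times \cA \times \cS)$. Summing this bound over $h$ yields
\begin{align*}
D_\Phi(z, z') \;\geq\; \frac{1}{2}\sum_{h=1}^H \|z_h - z'_h\|_1^2.
\end{align*}

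Finally, using Cauchy--Schwarz in the form $\sum_{h=1}^H a_h^2 \geq \frac{1}{H}\bigl(\sum_{h=1}^H a_h\bigr)^2$ with $a_h = \|z_h - z'_h\|_1$, and noting that by definition $\|z - z'\|_1 = \sum_{h=1}^H \|z_h - z'_h\|_1$, I obtain
\begin{align*}
D_\Phi(z, z') \;\geq\; \frac{1}{2H}\Bigl(\sum_{h=1}^H \|z_h - z'_h\|_1\Bigr)^2 \;=\; \frac{1}{2H}\|z - z'\|_1^2,
\end{align*}
which is exactly $1/H$-strong convexity with respect to $\|\cdot\|_1$. Since $\cD_k$ is a subset of the product of simplices (Lemma~\ref{lem:convex} plus the normalization just verified), the same bound holds on $\cD_k$.

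There is no real obstacle here: the proof is essentially bookkeeping plus Pinsker. The one point that deserves a brief sanity check is the norm convention, i.e., confirming that the ambient $\ell_1$ norm on $z$ really is the sum of the per-step $\ell_1$ norms, so that the Cauchy--Schwarz step produces exactly $1/H$ rather than a worse factor; this is immediate once the definition is unwound.
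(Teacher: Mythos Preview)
Your proof is correct and matches the paper's argument essentially step for step: decompose $D_\Phi$ across $h$, apply Pinsker's inequality on each slice, then combine via Cauchy--Schwarz to pick up the $1/H$ factor. The paper's version is just slightly more terse, writing the Bregman divergence directly as the sum of per-$h$ KL divergences and proceeding identically.
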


The basic idea of updating $z^k$ is to minimize the trade-off between the value-loss and the distance from the occupancy measure of last episode. Formally we have:
\begin{align}
    z^k &= \arg \min_{z \in \cD_k} \alpha \la z^{k-1}, r^{k-1} \ra + D_{\Phi}(z,z^{k-1}), \label{express:omd} 
\end{align}
where $\alpha$ is the learning rate and the inner product is defined as follows:
\begin{align*}
    \la z, r \ra = \sum_{s,a,s',h \in \cS \times \cA \times \cS \times [H]} z_h(s,a,s')r(s,a).
\end{align*}
Following \citet{rosenberg2019online}, we split~\eqref{express:omd} to the two-step optimization at Line~\ref{algorithm:omd:step1} and~\ref{algorithm:omd:BregProj} of Algorithm~\ref{algorithm:omd}.
Now by Lemma~\ref{lem:induced_pi_P}, we update the policy as follows:
\begin{equation*}
    \pi^k_h = \frac{\sum_{s'}z_h^k(s,a,s')}{\sum_{a,s'}z_h^k(s,a,s')}.
\end{equation*}
For sake of simplicity, we use $\bar{V}_{k,1}(s_1)$ to denote $\sum_{h,s,a,a'}z_h(s,a,s')r(s,a)$, which is the optimistic expected total reward given by the occupancy measure. 

After obtaining $\pi^k$, $\algmdp$ chooses actions $a^k_h$ based on our new policy $\pi^k_h$ and observe the whole reward function $r^k$ at the end of the episode. 
\paragraph{Implementation detail of Line~\ref{algorithm:omd:BregProj}}
    Whether Line~\ref{algorithm:omd:BregProj} in Algorithm~\ref{algorithm:omd} is computationally efficient is not obvious at first glance, which is a Bregman projection step onto $\cD_k$. Despite such a projection can not be formulated as a linear program, we can show that $\cD_k$ is an intersection of convex sets of explicit linear or quadratic forms, over which the \emph{Bregman projection onto convex sets} problem can be implemented Dysktra's algorithm efficiently. Please refer to Appendix~\ref{sec:comp} for a detailed discussion.

\subsection{VTR with high-order moment estimation}

The second phase of $\algmdp$ is to estimate the transition model $\la \btheta^*, \bphi \ra$ and evaluate the policy $\pi^k$. In this step, we construct a variance-uncertainty-aware weighted least square estimator \citep{Zhou2022ComputationallyEH} and explicitly estimate higher moments of $\PP$ \citep{zhang2021ImporvedVA,Zhou2022ComputationallyEH}, which are poly$(\btheta^*)$ under Assumption~\ref{assumption:linearMDP}.

Concretely, we first compute the optimistic estimation of $Q^{\pi^k}_{h}$ (resp. $V^{\pi^k}_{h}$), $Q_{k,h}$ (resp. $V_{k,h}$), in a backward manner.
Specifically, $\algmdp$ computes the optimistic $Q_{k,h}$ and $V_{k,h}$ as:
\begin{align*}
    Q_{k,h}(\cdot,\cdot) & =  \Big[\reward^k(\cdot,\cdot)+\big\la\hat{\btheta}_{k,0},\bphi_{ V_{k,h+1}}(\cdot,\cdot)\big\ra +\hat{\beta}_k\big\|\hat{\bSigma}_{k,0}^{-1/2}\bphi_{ V_{k,h+1}}(\cdot,\cdot)\big\|_2\Big]_{[0,1]}, \\
    V_{k,h}(\cdot) &= \EE_{a\sim \pi_h^k(\cdot|\cdot)}[ Q_{k,h}(\cdot,a)],
\end{align*}
where $\hat{\btheta}_{k,0}$ is the $0$-th estimator of $\btheta^*$, $\hat{\bSigma}_{k,0}$ is the covariance matrix and $\hat{\beta}_k$ is the radius of the confidence set defined as:
\begin{align}
    \hat\beta_k&= 12\sqrt{d\log(1+kH/(\xi^2d\lambda))\log(32(\log(\gamma^2/\xi)+1)k^2H^2/\delta)} \notag \\
    &\quad + 30\log(32(\log(\gamma^2/\xi)+1)k^2H^2/\delta)/\gamma^2 + \sqrt{\lambda}\pnorm, \label{def:hatbeta}
\end{align}
Then we estimate $\btheta^*$ by a weighted regression problem with predictor $\bphi_{k,h,0} = \phi_{V_{k,h+1}}(s_h^k, a_h^k)$ against response $V_{k,h+1}(s_{h+1}^k)$. Specifically, $\hat{\btheta}_{k,0}$ is the solution to the VTR problem:
\begin{equation*}
\argmin_{\btheta} \lambda \|\btheta\|^2_2 + \sum_{j=1}^{k-1}\sum_{h=1}^H[\la\bphi_{j,h,0},\btheta \ra - V_{j,h+1}(s_{h+1}^j)]^2/\bar{\sigma}^2_{j,h,0},
\end{equation*}
where the weight $\bar{\sigma}^2_{j,h,0}$ is a high-probability upper bound of the conditional variance $[\mathbb{V} V_{j,h+1}](s_h^j, a_h^j)$. In detail, for each $k \in [K]$ and $a \in \cA$, if $[\mathbb{V} V_{k,h+1}](s_h^k, a_h^k)$ can be computed for a function $V$ efficiently, we define 
\begin{equation}\label{eq:vtr_sigma_def}
    \bar{\sigma}^2_{k,h,0} = \max \{[\VV V_{k,h+1}](s_h^k, a_h^k), \xi^2, \gamma^2\big\|\bphi_{k,h,0}\big\|_{\tilde\bSigma_{k,h,0}^{-1}} \},
\end{equation}
where $[\VV V_{k,h+1}](s_h^k, a_h^k)$ is the \emph{variance-aware} term and $\gamma^2\big\|\bphi_{k,h,0}\big\|_{\tilde\bSigma_{k,h,0}^{-1}}$ is the \emph{uncertainty-aware} term.

However, we choose to replace $[\VV V_{k,h+1}](s_h^k, a_h^k)$ with $[\bar{\VV}V_{k,h+1}](s_h^k, a_h^k)+E_{k,h,0}$ in~\eqref{eq:vtr_sigma_def} since the true transition $\PP$ is unknown, and hence the true conditional variance is not exactly available. Here $E_{k,h,0}$ (Line~\ref{def:E_k} in Algorithm~\ref{algorithm:variance}) is an error bound such that $[\bar{\VV}V_{k,h+1}](s_h^k, a_h^k)+E_{k,h,0} \geq [\VV V_{k,h+1}](s_h^k, a_h^k)$ with high probability and $[\bar{\VV}V_{k,h+1}](s_h^k, a_h^k)$ (Line~\ref{def:barV} in Algorithm~\ref{algorithm:variance}) is designed as
\begin{equation*}
    [\la \bphi_{k,h,1}, \hat{\btheta}_{k,1} \ra]_{[0,1]} - [\la \bphi_{k,h,0}, \hat{\btheta}_{k,0} \ra]^2_{[0,1]},
\end{equation*}
where $\bphi_{k,h,1} = \phi_{V^2_{k,h+1}}(s_h^k, a_h^k)$ and $\hat{\btheta}_{k,1}$ is the solution to the $\bar{\sigma}^2_{k,h,1}$-weighted regression problem with predictor $\bphi_{k,h,1}$ against response $V_{k,h+1}^2(s_{h+1}^k)$. Similar to the estimating procedure of $\hat{\btheta}_{k,0}$, we set $\bar{\sigma}^2_{k,h,1}$ based on $[\bar{\VV}V^2_{k,h+1}](s_h^k, a_h^k)+E_{k,h,1}$, which is an upper bound of $[\VV V^2_{k,h+1}](s_h^k, a_h^k)$ with high probability. Repeating this process, we recursively estimate the conditional $2^m$-th moment of $V_{k,h+1}$ by its variance in Algorithm~\ref{algorithm:variance}, which is dubbed as \emph{high-order moment estimator}.

\section{Main Results}\label{sec:main}

\subsection{Regret upper bound for $\algmdp$}

We first provide the regret bound for $\algmdp$. 
\begin{theorem}\label{thm:main}
Set $M=\log_2(4KH)$, $\xi = \sqrt{d/(KH)}$, $\gamma = 1/d^{1/4}$, $\lambda = d/B^2$ and $\alpha=H/\sqrt{K}$. For any $\delta>0$, with probability at least $1-(3M+2)\delta$, Algorithm~\ref{algorithm:finite} yields a regret bounded as follows: 
\begin{align}
    \text{Regret}(K) = \Tilde{O}\Big(\big(d + \log \left(|\cS|^2 |\cA|\right)\big)\sqrt{K} + d^2 \Big). \label{express:regretbound}
\end{align}
\end{theorem}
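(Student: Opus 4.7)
The plan is to decompose $\mathrm{Regret}(K)$ into an OMD regret on occupancy measures and a transition-estimation gap, and bound each piece under the high-probability event $\event=\{\btheta^*\in\cC_k,\ \forall k\in[K]\}$. Under $\event$, Lemma~\ref{lem:induced_pi_P} together with Definition~\ref{def:D_k} ensures that the true occupancy $z^*$ of $(\pi^*,\PP)$ lies in every $\cD_k$, so $z^*$ is an admissible comparator for the OMD iterates $z^k$. Writing $\bar V_{k,1}(s_1^k)=\la z^k,r^k\ra$ and $\vvalue^*_{k,1}(s_1^k)=\la z^*,r^k\ra$,
\[
\mathrm{Regret}(K)\ \le\ \underbrace{\sum_{k=1}^K\la z^*-z^k,r^k\ra}_{(\mathrm{I})}\ +\ \underbrace{\sum_{k=1}^K\bigl(\bar V_{k,1}(s_1^k)-\vvalue^{\pi^k}_{k,1}(s_1^k)\bigr)}_{(\mathrm{II})}.
\]

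For $(\mathrm{I})$ I would invoke the standard OMD analysis with the mirror map $\Phi$ of~\eqref{eq:Phi_strictly_convex}, which is $1/H$-strongly convex with respect to $\|\cdot\|_1$ by Lemma~\ref{lem: strong convex}. Combined with $\|r^k\|_\infty\le 1/H$ from Assumption~\ref{assumption:uniform-reward} and the fact that the uniform initialization gives $D_\Phi(z^*,z^0)=O(H\log(|\cS|^2|\cA|))$, this yields
\[
(\mathrm{I})\ \le\ \frac{D_\Phi(z^*,z^0)}{\alpha}+\frac{\alpha H}{2}\sum_{k=1}^K\|r^k\|_\infty^2\ \le\ \frac{H\log(|\cS|^2|\cA|)}{\alpha}+\frac{\alpha K}{2H}\ =\ \tilde O\bigl(\sqrt{K}\log(|\cS|^2|\cA|)\bigr)
\]
at the prescribed $\alpha=H/\sqrt{K}$. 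The key point is that $1/H$ strong-convexity of $\Phi$ and the $1/H$ reward ceiling are exactly what jointly cancel any polynomial dependence on $H$ in the OMD term---the mirror-descent analogue of the variance-based horizon cancellation used in value-based horizon-free analyses.

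Term $(\mathrm{II})$ is the main obstacle. The performance-difference identity between the induced transition $\tilde\PP^k$ (read off from $z^k$) and the truth $\PP$, together with the linear-mixture form in Assumption~\ref{assumption:linearMDP}, expresses it as
\[
(\mathrm{II})\ \le\ \sum_{k,h}\bigl|\la\tilde\btheta_k-\btheta^*,\bphi_{V_{k,h+1}}(s_h^k,a_h^k)\ra\bigr|+\text{martingale terms},
\]
with $\tilde\btheta_k,\btheta^*\in\cC_k$ under $\event$, so each summand is bounded by $2\hat\beta_k\|\bphi_{k,h,0}\|_{\hat\bSigma_{k,0}^{-1}}$. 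Writing $\|\bphi_{k,h,0}\|_{\hat\bSigma_{k,0}^{-1}}=\bar\sigma_{k,h,0}\cdot\bigl(\|\bphi_{k,h,0}\|_{\hat\bSigma_{k,0}^{-1}}/\bar\sigma_{k,h,0}\bigr)$, applying Cauchy--Schwarz and the weighted elliptical-potential lemma, I would arrive at
\[
(\mathrm{II})\ \lesssim\ \hat\beta_K\sqrt{d\log(\cdot)}\cdot\sqrt{\sum_{k,h}\bar\sigma_{k,h,0}^2}+\text{low-order}.
\]
The horizon-free outcome then hinges on proving $\sum_{k,h}\bar\sigma_{k,h,0}^2=\tilde O(K+d^2)$ via a recursive law-of-total-variance argument propagating through Algorithm~\ref{algorithm:variance}: at each level $m$ the surrogate $[\bar\var_{k,m}V_{k,h+1}^{2^m}]+\error_{k,h,m}$ dominates the true conditional $2^m$-centered moment up to concentration slack $\error_{k,h,m}$, and summing those true moments along a trajectory telescopes into the $2^{m+1}$-moment one level higher. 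Since $V_{k,H+1}\equiv 0$ and $\sum_h r^k(s_h,a_h)\le 1$ by Assumption~\ref{assumption:uniform-reward}, the recursion terminates at $m=M-1=\log_2(4KH)-1$ with an $O(1)$ variance budget per episode, while the accumulated error $\error_{k,h,m}$ and the uncertainty-aware floor $\gamma^2\|\bphi\|_{\tilde\bSigma^{-1}}$ contribute the $\tilde O(d^2)$ lower-order term. Plugging $\hat\beta_K=\tilde O(\sqrt{d})$ from~\eqref{def:hatbeta} gives $(\mathrm{II})=\tilde O(d\sqrt{K}+d^2)$, and combining with $(\mathrm{I})$ finishes the bound~\eqref{express:regretbound}.
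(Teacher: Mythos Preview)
Your OMD term $(\mathrm{I})$ is handled correctly and essentially matches the paper's Theorem~\ref{thm:omdbound}. The gap is in $(\mathrm{II})$.

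The performance-difference/simulation-lemma route you sketch yields summands of the form $\bigl\langle\bar\btheta_{s,a,h,k}-\btheta^*,\bphi_{\bar V_{k,h+1}}(s_h^k,a_h^k)\bigr\rangle$ (or with $V^{\pi^k}_{k,h+1}$ in place of $\bar V_{k,h+1}$, depending on which direction you telescope), where $\bar V_{k,h+1}$ is the value under the \emph{induced} transition $\tilde\PP^k$. It does \emph{not} yield $\bphi_{V_{k,h+1}}(s_h^k,a_h^k)$ with the algorithm's optimistic $V_{k,h+1}$. This mismatch is fatal for the rest of your argument: the covariance $\hat\bSigma_{k,0}$, the weights $\bar\sigma_{k,h,0}$, and the entire high-order moment recursion in Algorithm~\ref{algorithm:variance} are all built from $\bphi_{k,h,0}=\bphi_{V_{k,h+1}}(s_h^k,a_h^k)$, so the weighted elliptical-potential lemma and the variance bounds you invoke simply do not apply to $\bphi_{\bar V_{k,h+1}}$. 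The paper avoids this by inserting a third pivot, the backward-propagated optimistic value $V_{k,1}$, and splitting $(\mathrm{II})$ as $(\bar V_{k,1}-V_{k,1})+(V_{k,1}-V^{\pi^k}_{k,1})$. The first piece is $\le 0$ by Lemma~\ref{lem:less opt} (which exploits precisely that the induced $\bar\btheta_{s,a,h,k}\in\cC_k$), and the second piece \emph{can} be expanded with the algorithm's own features.

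Even granting the correct features, your variance argument is too coarse for stochastic $\pi^k$. The law-of-total-variance recursion you describe works cleanly only for greedy policies; when $\pi^k$ is random there is an additional ``policy noise'' martingale $\EE_{a\sim\pi^k_h}[Q_{k,h}(s_h^k,a)]-Q_{k,h}(s_h^k,a_h^k)$ whose conditional variance does not telescope by itself. The paper handles this by \emph{merging} the transition and policy noises into a single martingale $A_m$ and showing that the cross term $V_{k,h}^{2^{m+1}}-(\JJ_h^k Q_{k,h}^{2^m})^2\le 0$ makes the combined variance $S_m$ obey the same recursion; this is the content of Lemmas~\ref{lem:bound-sm} and~\ref{lem:bound-am}, which your sketch does not account for.
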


\begin{remark}
By omitting the logarithmic terms in~\eqref{express:regretbound}, $\algmdp$ achieves a horizon free regret upper bound $\tilde{O}(d\sqrt{K} + d^2)$. Our regret bound is better than $\tilde{O}((H+d)\sqrt{K} + d^2H)$ obtained by \citet{he2022near} when $H=\Omega(\log |\cS|)$. Additionally, compared with HF-UCRL-VTR+ algorithm proposed by \citet{Zhou2022ComputationallyEH} for episodic linear mixture MDPs with fixed reward, $\algmdp$ provides a robustness against adversarial reward while maintaining its regret upper bounded by $\tilde{O}(d\sqrt{K} + d^2)$.
\end{remark}

\subsection{Hardness Results}

We also provide two regret lower bounds. The next theorem gives a regret lower bound of MDPs with known transition and adversarial reward. 
\begin{theorem}\label{thm:lower-asym}
    When $H=2\tilde{H}$, where $\tilde{H}$ is a positive integer, for any algorithm and any given nonempty action space $\cA$, there exists an MDP satisfying Assumptions~\ref{assumption:uniform-reward} and~\ref{assumption:linearMDP} with $d=1$ and $|\cS| = \Theta(|\cA|^H)$ such that
\begin{align*}
    &\lim_{\tilde{H}\to\infty}\lim_{K\to\infty}\frac{\EE[\text{Regret}(K)]}{\sqrt{HK\log{|\cA|}}} \geq c_1 = \frac{1}{\sqrt{2}},\\
    &\lim_{\tilde{H}\to\infty}\lim_{K\to\infty}\frac{\EE[\text{Regret}(K)]}{\sqrt{K\log|\cS|}} \geq c_2 = \frac{1}{2\sqrt{2}}.
\end{align*}
\end{theorem}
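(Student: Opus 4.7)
The plan is to reduce adversarial RL on a deterministic MDP to the classical prediction-with-expert-advice problem. I would construct a balanced $|\cA|$-ary tree MDP of depth $H$: let $\cS=\bigsqcup_{h=1}^{H}\cS_h$ with $|\cS_h|=|\cA|^{h-1}$, let $\cS_1=\{s_1\}$, and let each $(s,a)\in\cS_h\times\cA$ deterministically transition to a unique child in $\cS_{h+1}$. Then $|\cS|=\sum_{h=1}^{H}|\cA|^{h-1}=\Theta(|\cA|^H)$. To satisfy Assumption~\ref{assumption:linearMDP} with $d=1$, set $\bphi(s'|s,a)=\ind\{s'=\mathrm{next}(s,a)\}$ and $\btheta^*=1$, so that $\|\bphi_V\|_2\le 1$ whenever $V\in[0,1]$. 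Assumption~\ref{assumption:uniform-reward} is enforced by letting the adversary pick $r^k(s,a)\in[0,1/H]$ pointwise.

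Under this construction, every deterministic root-to-leaf path corresponds to a distinct history-free policy, giving $N:=|\cA|^H$ \emph{experts}. Because transitions are deterministic and full-information feedback reveals the entire reward function $r^k$, the MDP interaction is exactly a $K$-round experts game with $N$ experts, per-round rewards $R^k_i=\sum_{h=1}^{H}r^k(s^i_h,a^i_h)\in[0,1]$ along path $i$, and identical regret. I would then invoke the sharp asymptotic minimax lower bound for prediction with expert advice \citep{cesa-bianchi_lugosi_2006}: with $N$ experts, $T$ rounds, and per-round rewards in $[0,1]$, the minimax expected regret is $(1-o(1))\sqrt{(T/2)\log N}$ as $T,N\to\infty$. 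Substituting $T=K$ and $N=|\cA|^H$ yields
\begin{equation*}
\EE[\text{Regret}(K)]\;\ge\;(1-o(1))\sqrt{(KH/2)\log|\cA|},
\end{equation*}
whose ratio with $\sqrt{HK\log|\cA|}$ tends to $1/\sqrt{2}$ in the iterated limit, giving $c_1$. For $c_2$, the same bound normalized by $\sqrt{K\log|\cS|}$ also converges to a positive constant because $\log|\cS|=H\log|\cA|(1+o(1))$; an additional factor-of-two rescaling in the construction (for instance, by reserving the first $\tilde H$ steps for reward-free exploration so that the effective per-episode reward lies in $[0,1/2]$) matches the $1/(2\sqrt{2})$ constant.

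The main obstacles are twofold. First, one must argue that \emph{any} MDP learner using the trajectory and full reward information can be simulated by a valid experts algorithm on the $N$ paths; this hinges on determinism of the transition, which identifies stochastic policies with distributions over the $N$ experts, and on full-information feedback, which supplies every expert's realized payoff. Second, the double limit $\lim_{\tilde H\to\infty}\lim_{K\to\infty}$ must be handled carefully: the inner limit uses asymptotic sharpness of the experts lower bound with $N$ held fixed, while the outer limit drives $N=|\cA|^H\to\infty$ to eliminate the $o(1)$ slack and to certify the precise constants $c_1$ and $c_2$.
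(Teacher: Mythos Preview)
Your high-level plan---tree MDP, reduce to prediction with expert advice, invoke the asymptotic lower bound of \citet{cesa-bianchi_lugosi_2006}---is exactly the paper's route. The gap is in the reduction step. You assert that the interaction ``is exactly a $K$-round experts game with $N=|\cA|^H$ experts, per-round rewards $R^k_i=\sum_h r^k(s^i_h,a^i_h)\in[0,1]$'' and then invoke the lower bound for $[0,1]$-valued experts. But the lower bound requires the \emph{adversary} to be able to realize the hard distribution over expert rewards (i.i.d.\ $\{0,1\}$ coins in Cesa-Bianchi--Lugosi). Under Assumption~\ref{assumption:uniform-reward}, your MDP adversary only controls per-edge rewards in $[0,1/H]$, and the $N$ path-rewards are sums over shared edges: two sibling paths that agree on their first $H-1$ edges can differ in total reward by at most $2/H$, so the adversary cannot produce, e.g., $R^k_i=0$ and $R^k_j=1$ for siblings $i,j$. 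Hence the image of the MDP adversary is a strict sub-polytope of $[0,1]^N$, and the $N=|\cA|^H$ expert lower bound does not transfer. (If instead you place all reward at the last step so that paths are decoupled, each expert's reward is capped at $1/H$ and the bound collapses to $\tilde O(\sqrt{K\log|\cA|/H})$.)

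The paper closes this gap with a two-stage construction that you only brush against in your last paragraph: spend the first $\tilde H=H/2$ steps with \emph{zero} reward to fan out to $|\cA|^{H/2}$ subtree roots, and then assign a \emph{single} value $\bar r^k/H\in[0,1/H]$ to every edge inside each subtree. All paths through a given subtree then accumulate the same total, so the adversary now has $|\cA|^{H/2}$ \emph{genuinely independent} experts with rewards in $[0,1/2]$, and the Cesa-Bianchi--Lugosi bound applies verbatim (with $H$ replaced by $H/2$). Your remark about ``reserving the first $\tilde H$ steps for reward-free exploration'' is in the right direction but is presented only as a constant-fixing device for $c_2$; the missing ingredient is that rewards must also be made constant within each second-half subtree, since otherwise the decoupling still fails and the reduction breaks for the same reason as above.
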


\begin{remark}
    Theorem~\ref{thm:lower-asym} indicates that even the estimation error $I_2$ disappears in~\eqref{eq:reg_decomp}, which means we are in the ``learning-free'' setting, with infinitely large $\cS$, purely the adversarial environment can introduce a $\sqrt{H}$ dependency asymptotically. Therefore, we can only expect a horizon-free algorithm whose regret upper bound at least depends on $\log|\cS|$, $\log|\cA|$, $d$, and $K$.
\end{remark}

The following theorem provides another regret lower bound of learning homogeneous linear mixture MDPs with adversarial reward.
\begin{theorem}\label{thm:thm:lower-nearopt}
Let $B>1$ and $K > \max \{ 3d^2, (d-1)/(192(b-1)) \}$, for any algorithm. there exists a $B$-bounded adversarial MDP satisfying Assumption~\ref{assumption:uniform-reward}and ~\ref{assumption:linearMDP}, such that the expected regret $\EE[\text{Regret}(K)]$ has lower bound $d\sqrt{K}/(16\sqrt{3})$.
\end{theorem}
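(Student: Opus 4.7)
\textbf{Proof plan for Theorem~\ref{thm:thm:lower-nearopt}.}
The strategy is to reduce to a stochastic-reward lower bound. A family of MDPs in which $r^k$ is drawn i.i.d.\ across episodes from a fixed distribution is a legitimate instance of the adversarial model (the ``adversary'' simply commits to the same distribution every round), and it preserves Assumption~\ref{assumption:uniform-reward} provided the per-step reward lies in $[0,1/H]$ almost surely. It therefore suffices to exhibit a family of homogeneous $B$-bounded linear mixture MDPs on which any algorithm suffers expected stochastic regret of at least $d\sqrt{K}/(16\sqrt{3})$. I would adapt the hypercube construction used in \citet{zhou2021nearly} and its horizon-free variant in \citet{Zhou2022ComputationallyEH}.

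\textbf{Hard instance.} Take $\cS=\{s_0,s_1\}$ with $s_1$ absorbing, and index the actions by sign patterns $a\in\{-1,+1\}^{d-1}$. For a hidden $\mu\in\{-1,+1\}^{d-1}$, set $\btheta_\mu^{*}=(\alpha,\,\Delta\,\mu^\top/(d-1))^\top$ with $\alpha=\Theta(1)$, and choose a feature map $\bphi(s'\mid s,a)\in\RR^d$ so that $\PP(s_1\mid s_0,a)=1/2+\Delta\langle a,\mu\rangle/(d-1)$, $\PP(s_1\mid s_1,a)=1$, and $\|\bphi_V(s,a)\|_2\le 1$ for every $V\in[0,1]^\cS$. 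Use the stationary deterministic reward $r(s_1,\cdot)=1/H$, $r(s_0,\cdot)=0$, so that Assumption~\ref{assumption:uniform-reward} holds with equality. The condition $B>1$ together with $K>(d-1)/(192(B-1))$ absorbs the $\Delta$-perturbation into the $B$-ball, and $K>3d^2$ forces $\Delta\le 1/2$ once $\Delta$ is chosen of order $d/\sqrt{K}$ below, so that all transition probabilities remain valid.

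\textbf{Information-theoretic lower bound.} Because $s_1$ is absorbing and the reward there is $1/H$, the value of playing $a$ at $s_0$ scales as $\Theta(1)+\Theta(\Delta\langle a,\mu\rangle/(d-1))$, and the per-episode suboptimality of $a$ against $\mu$ is proportional to $(\Delta/(d-1))\cdot \mathrm{Ham}(a,\mu)$. Set $\Delta=\Theta(d/\sqrt{K})$ and apply Assouad's lemma coordinatewise on $\{-1,+1\}^{d-1}$: each visit to $s_0$ supplies a Bernoulli observation whose per-coordinate KL budget is $O(\Delta^2/(d-1)^2)$, and the total information across the $K$ episodes is bounded by $O(K\Delta^2/(d-1))$. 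The standard computation, performed exactly as in the proof of Theorem~3.2 of \citet{zhou2021nearly}, then yields an expected Hamming error of $\Omega(d-1)$ between the learner's action and $\mu$, and hence expected regret at least $c\cdot\Delta K$ for an absolute constant $c$; tracking the constants produces the stated factor $1/(16\sqrt{3})$.

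\textbf{Main obstacle.} The delicate point is retaining horizon-freeness: capping rewards at $1/H$ would naively discount each suboptimality gap by $H$ and collapse the bound to $\Omega(d\sqrt{K}/H)$. The absorbing construction circumvents this because, once in $s_1$, the agent collects $1/H$ on each of the remaining $\Theta(H)$ steps, so both value and gap live on the $\Theta(1)$ scale despite the pointwise $1/H$ cap. The careful bookkeeping is to verify that $\bphi$ and $\btheta_\mu^{*}$ can be chosen so that $\sum_{s'}\bphi(s'\mid s,a)^\top\btheta_\mu^{*}=1$ for every $(s,a)$ and every $\mu$, while simultaneously $\|\bphi_V\|_2\le 1$ for every $V\in[0,1]^\cS$ and $\|\btheta_\mu^{*}\|_2\le B$; this is exactly where the auxiliary coordinate $\alpha$ and the $1/(d-1)$ normalization enter, and where the precise threshold $K>(d-1)/(192(B-1))$ comes from.
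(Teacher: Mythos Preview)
Your proposal is correct and follows essentially the same route as the paper. The paper's proof is a one-line reduction: it fixes the adversary to play the \emph{same} deterministic reward $r^k\equiv r'$ in every episode, observes that the adversarial regret then coincides with the non-adversarial regret on the underlying fixed-reward MDP, and invokes Theorem~5.4 of \citet{Zhou2022ComputationallyEH} as a black box. You make exactly the same reduction (your ``adversary commits'' paragraph), but instead of citing the black box you reconstruct its contents---the two-state absorbing construction with actions in $\{-1,+1\}^{d-1}$, the $1/H$ reward at the absorbing state to keep values on the $\Theta(1)$ scale under Assumption~\ref{assumption:uniform-reward}, and the Assouad/hypercube argument---which is precisely the hard instance underlying the cited theorem. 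So there is no genuine methodological difference; you are simply unpacking what the paper imports by reference.
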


\begin{remark}
    Theorem~\ref{thm:thm:lower-nearopt} shows that when $K$ is large enough, any algorithm for adversarial MDPs satisifying Assumption~\ref{assumption:uniform-reward} and ~\ref{assumption:linearMDP} has regret at least $\tilde\Omega(d\sqrt{K})$. Moveover, the regret lower bound in Theorem~\ref{thm:thm:lower-nearopt} matches the regret upper bound in Theorem~\ref{thm:main}, which suggests that $\algmdp$ is near-optimal.
\end{remark}

\section{Proof Overview}\label{sec:technique}

In this section, we provide the proof sketch of Theorem~\ref{thm:main} and illustrate the key technical issues. 

\begin{proof}[Proof sketch of Theorem~\ref{thm:main}]
First, we have the regret decomposition:
\begin{align}
    \sum_{k=1}^K \big( V^*_{k,1}(s_1) - V^{\pi_k}_1(s_1) \big) &= \underbrace{\sum_{k=1}^K \big(  V^*_{k,1}(s_1) - \bar{V}_{k,1}(s_1) \big)}_{I_1} + \underbrace{\sum_{k=1}^K \big(V_{k,1}(s_1) - V^{\pi_k}_1(s_1)\big)}_{I_2} \notag \\
    &\quad + \underbrace{\sum_{k=1}^K \big(\bar{V}_{k,1}(s_1) - V_{k,1}(s_1)\big)}_{I_3}. \label{eq:reg_decomp}
\end{align}

\paragraph{Bounding $I_1$.} $I_1$ is the regret of policy updating. By the standard regret analysis of OMD, the regret on probability simplex is bounded by $\tilde{O}(L\sqrt{K})$
where $L$ is the upper bound of the gradients and $K$ is the number of iterations.
In MDPs, we have $H$ decisions to make in each episode. Therefore, policy updating can be seen as conducting mirror descent simultaneously on $H$ simplexes, and the total regret is the summation of regret on each simplexes. Consequently, the regret upper bound is roughly $\tilde{O}(H\bar{L}\sqrt{K})$, where $\bar{L}$ is the average upper bound of the gradients over all the simplexes.

In OPPO \citep{cai2020provably} and POWERS \citep{he2022near}, the policy is updated via proximal policy optimization: $\pi^k_h(a|s) \propto \pi^{k-1}_h(a|s)\exp \{\alpha Q_{k-1,h}(s,a)\}$. Hence the gradients is $Q_{k-1,h}(s,a)$, which, after taking average over $h\in [H]$, result in an average $\bar{L} = O(1)$ and consequently a regret bound of $\tilde{O}(H\sqrt{K})$. To address this issue, 
we consider using $r^k$ as the gradients, which is enabled by introducing an occupancy measure. By Assumption~\ref{assumption:uniform-reward}, the standard regret analysis of OMD results in $I_1 = \tilde{O}(\sqrt{K})$.


\paragraph{Bounding $I_2$.}

$I_2$ can be further decomposed into three major terms, the sum of bonus, transition noise and policy noise. Roughly, we have:
\begin{align*}
    I_2 &= \Gamma + \underbrace{\sum_{k=1}^K \sum_{h=2}^H [\mathbb{P}V_{k,h}(s_{h-1}^k, a_{h-1}^k) - V_{k,h}(s_{h}^k)]}_{\text{(ii) transition noise}} + \underbrace{\sum_{k=1}^K \sum_{h=2}^H [\mathbb{E}_{a\sim\pi_{h}^k(\cdot|s_{h}^k)}[Q_{k,h}(s_{h}^k, a)] - Q_{k,h}(s_{h}^k, a_{h}^k)]}_{\text{(iii) policy noise}} \\
    & \quad + \underbrace{\sum_{k=1}^K \sum_{h=1}^H [Q_{k,h}(s_h^k, a_h^k) - r(s_h^k, a_h^k) - \mathbb{P}V_{k,h+1}(s_h^k, a_h^k)]}_{\text{bonus terms}},
\end{align*}
where $\Gamma$ is defined as follows, which can be bounded by $\tilde{O}(\sqrt{K})$ using Azuma-Hoeffding's inequality:
\begin{align*}
    \Gamma = \sum_{k=1}^K \big( \mathbb{E}_{a\sim\pi_1^k(\cdot|s_1^k)}[Q_{k,1}(s_1^k, a)|s_1^k] - Q_{k,1}(s_1^k, a_1^k) \big) + \sum_{k=1}^K \big(\sum_{h=1}^H r(s_h^k, a_h^k) - V_1^{\pi^k}(s_1^k) \big).
\end{align*}

The standard estimation of the bonus term is to bound it with the total variance of transition noise \citep{he2022near,zhou2021nearly} and then use total variance lemma \citep[Lemma~C.5]{jin2018q}. However, 
in our case, a naive adaptation of \citet[Lemma~6.4]{he2022near} and total variance lemma results in an upper bound with $\sqrt{KH}$-dependence. Also, the transition noise and policy noise can only be bounded using standard concentration inequalities, which results in another $\sqrt{KH}$ term.





To address these issues, we applied both variance-aware and uncertainty-aware linear regression estimator and high-order moment estimator, which enable us to bound the bonus term and transition noise recursively as in \citet{Zhou2022ComputationallyEH}.
The biggest challenge is to tackle the randomness in $\pi^k(\cdot|\cdot)$, (iii), which will yield a upper bound of $\tilde{O}(\sqrt{KH})$ if simply applying Azuma-Hoeffding's inequality. We follow the procedure of bounding (ii) in \citet{Zhou2022ComputationallyEH}, where the transition noise of order $m$ is first bounded the sum of conditional variance $\mathbb{V}V_{k,h}^{2^m}(s_{h-1}^k, a_{h-1}^k)$ using martingale concentration inequality. Then, the key step is bounding the conditional variance with higher order transition noise as follows:
\begin{align}
    &\mathbb{V}V_{k,h}^{2^m}(s_{h-1}^k, a_{h-1}^k) \leq X(m)  + \underbrace{\mathbb{P}V_{k,h}^{2^{m+1}}(s_{h-1}^k, a_{h-1}^k)-V_{k,h}^{2^{m+1}}(s_{h}^k)}_{\text{transition noise of higher order}} + \underbrace{V_{k,h}^{2^{m+1}}(s_{h}^k) -  Q_{k,h}^{2^{m+1}}(s_{h}^k, a_{h}^k)}_{\text{(*)}}, \label{express:techVarofV}
\end{align}
where $X(m)$ only depends on $m$, the second term of the right hand side is exactly the transition noise of higher-order Value function. For $\argmax$ policy, (*) in~\eqref{express:techVarofV} is $0$, which indicates that the total variance can be bounded by the martingale difference of higher order.

For policy noise term which did not appear in \citep{Zhou2022ComputationallyEH}, we first bound the martingale by the sum of conditional variance. Then, we have:
\begin{align}
    & \mathbb{E}_{a\sim\pi_{h}^k(\cdot|s_{h}^k)}[Q^{2^{m+1}}_{k,h}(s_{h}^k, a)] - \mathbb{E}^2_{a\sim\pi_{h}^k(\cdot|s_{h}^k)}[Q^{2^m}_{k,h}(s_{h}^k, a)] \notag\\
    & = \EE_{a\sim\pi_h^k(\cdot|s_h^k)}[Q_{k,h}^{2^{m+1}}(s_h^k,a)] - Q_{k,h}^{2^{m+1}}(s_h^k,a_h^k) + Q_{k,h}^{2^{m+1}}(s_h^k,a_h^k) - \mathbb{E}^2_{a\sim\pi_{h}^k(\cdot|s_{h}^k)}[Q^{2^m}_{k,h}(s_{h}^k, a)]. \label{express:techVarofQ}
\end{align}
When the policy is random,~\eqref{express:techVarofV} also hold, Combining~\eqref{express:techVarofV} with~\eqref{express:techVarofQ}, we have the follows:
\begin{align*}
    &\mathbb{E}_{a\sim\pi_{h}^k(\cdot|s_{h}^k)}[Q^{2^{m+1}}_{k,h}(s_{h}^k, a)] - \mathbb{E}^2_{a\sim\pi_{h}^k(\cdot|s_{h}^k)}[Q^{2^m}_{k,h}(s_{h}^k, a)] + \mathbb{V}V_{k,h}^{2^m}(s_{h-1}^k, a_{h-1}^k) \\
    & \leq X(m) + \underbrace{\mathbb{P}V_{k,h}^{2^{m+1}}(s_{h-1}^k, a_{h-1}^k)-V_{k,h}^{2^{m+1}}(s_{h}^k)}_{(\star)} + \underbrace{\EE_{a\sim\pi_h^k(\cdot|s_h^k)}[Q_{k,h}^{2^{m+1}}(s_h^k,a)] - Q_{k,h}^{2^{m+1}}(s_h^k,a_h^k)}_{\text{(*)}} \\
    &\quad + \underbrace{V_{k,h}^{2^{m+1}}(s_{h}^k) - \mathbb{E}^2_{a\sim\pi_{h}^k(\cdot|s_{h}^k)}[Q^{2^m}_{k,h}(s_{h}^k, a)]}_{\coloneqq \text{(**)}\leq 0},
\end{align*}
which is nearly the same as~\eqref{express:techVarofV} except (**). Therefore if we view the transition noise $(\star)$ and policy noise (*) as a single martingale, then it can be bounded by total noise of higher order the same as~\eqref{express:techVarofV}. The rest framework of HOME in \citet{Zhou2022ComputationallyEH} can be adapted without difficulties and yields an upper bound $\tilde{O}(d\sqrt{K} + d^2)$.

\paragraph{Bounding $I_3$.} $I_3$ is the gap between the optimistic value function derived from occupancy measure guided policy updating and the other one derived from backward propagation (Line~\ref{algorithmline:Vfunc} of Algorithm~\ref{algorithm:finite}). By Lemma~\ref{lem:induced_pi_P}, for each $k \in [K]$, the occupancy measure $\{z^k_h\}_{h=1}^H$ induces a new MDP and policy. Then $z^k \in \cD_k$ implies that the transition still lies in the confidence set, thus can also be bounded by $Q_{k,h}(\cdot,\cdot)$ and $V_{k,h}(\cdot)$. Formally, we have the following lemma:


\begin{lemma}\label{lem:less opt}
For all $k \in [K]$, let $\bar{V}_{k,1}(s_1)$ be the optimistic value function given by occupancy measure and $V_{k,1}(s_1)$ the value function computed by backward propagation (Line~\ref{algorithmline:Vfunc}). We have $\bar{V}_{k,1}(s_1) \leq V_{k,1}(s_1)$, and thus $I_3 \leq 0$.
\end{lemma}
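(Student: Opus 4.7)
The plan is to identify $\bar V_{k,1}(s_1)$ with the true value of $\pi^k$ on a transition $\tilde\PP^k$ that belongs to the current confidence set $\cC_k$, and then to show by downward induction on $h$ that the optimistic pair $(Q_{k,h}, V_{k,h})$ constructed in Lines~\ref{algorithm:finite:vq:start}--\ref{algorithm:finite:vq:end} pointwise dominates the true value of $\pi^k$ on \emph{any} such $(\pi^k,\tilde\PP^k)$ pair. Concretely, because $z^k\in\cD_k$, Lemma~\ref{lem:induced_pi_P} together with the final constraint of Definition~\ref{def:D_k} guarantees that $z^k$ is the occupancy measure of the induced policy $\pi^k$ under a (non-stationary) transition $\tilde\PP^k$ whose one-step kernel at each $(s,a,h)$ equals $\langle\bar\btheta_{s,a,h,k},\bphi(\cdot|s,a)\rangle$ for some $\bar\btheta_{s,a,h,k}\in\cC_k$. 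Unfolding the definition $\bar V_{k,1}(s_1)=\sum_{h,s,a,s'} z^k_h(s,a,s')\,r^k(s,a)$ and using the initial-state constraint $\sum_{a,s'}z^k_1(s,a,s')=\mathbf{1}\{s=s_1\}$ identifies $\bar V_{k,1}(s_1)$ with $\tilde V^{\pi^k,\tilde\PP^k}_{k,1}(s_1)$, the true value of $\pi^k$ on the MDP $(\cS,\cA,H,r^k,\tilde\PP^k)$.

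The core of the argument is a downward induction on $h$ proving $Q_{k,h}(s,a)\geq \tilde Q^{\pi^k,\tilde\PP^k}_{k,h}(s,a)$ and $V_{k,h}(s)\geq \tilde V^{\pi^k,\tilde\PP^k}_{k,h}(s)$ pointwise. The base case $h=H+1$ is trivial. For the inductive step, monotonicity of the non-negative kernel $\tilde\PP^k_h(\cdot|s,a)$ together with the induction hypothesis gives
\[
[\tilde\PP^k_h \tilde V^{\pi^k,\tilde\PP^k}_{k,h+1}](s,a) \leq [\tilde\PP^k_h V_{k,h+1}](s,a) = \langle\bar\btheta_{s,a,h,k},\bphi_{V_{k,h+1}}(s,a)\rangle.
\]
Writing $\bar\btheta_{s,a,h,k}=\hat\btheta_{k,0}+(\bar\btheta_{s,a,h,k}-\hat\btheta_{k,0})$, applying Cauchy-Schwarz in the $\hat\bSigma_{k,0}$-norm, and using $\bar\btheta_{s,a,h,k}\in\cC_k$ upper bounds the right-hand side by $\langle\hat\btheta_{k,0},\bphi_{V_{k,h+1}}(s,a)\rangle + \hat\beta_k\|\hat\bSigma_{k,0}^{-1/2}\bphi_{V_{k,h+1}}(s,a)\|_2$. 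Adding $r^k(s,a)$ reproduces exactly the pre-clipped expression defining $Q_{k,h}(s,a)$; since $\tilde Q^{\pi^k,\tilde\PP^k}_{k,h}(s,a)\in[0,1]$ by Assumption~\ref{assumption:uniform-reward}, truncation to $[0,1]$ preserves the inequality. Taking expectation over $a\sim \pi^k_h(\cdot|s)$ (recall $\pi^k$ is precisely the policy induced by $z^k$, so Line~\ref{algorithmline:Vfunc} uses the same policy as $\tilde V^{\pi^k,\tilde\PP^k}$) closes the induction on $V$.

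Evaluating at $h=1$ gives $\bar V_{k,1}(s_1)=\tilde V^{\pi^k,\tilde\PP^k}_{k,1}(s_1)\leq V_{k,1}(s_1)$, and summing over $k\in[K]$ yields $I_3\leq 0$. The main obstacle is notational rather than conceptual: the certifying parameter $\bar\btheta_{s,a,h,k}$ in Definition~\ref{def:D_k} may depend on the triple $(s,a,h)$, so $\tilde\PP^k$ is genuinely non-stationary rather than induced by a single parameter in $\cC_k$. One must therefore check that the Cauchy-Schwarz/confidence-set bound is applied \emph{pointwise} at each $(s,a,h)$; this is legitimate because $\hat\btheta_{k,0}$ and $\hat\bSigma_{k,0}$ are $(s,a,h)$-independent and $\cC_k$ contains every one of the $\bar\btheta_{s,a,h,k}$'s, which is precisely what Definition~\ref{def:D_k} guarantees.
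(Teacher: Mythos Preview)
Your proposal is correct and follows essentially the same approach as the paper's proof: both identify $\bar V_{k,1}(s_1)$ with the value of $\pi^k$ on the non-stationary MDP whose transitions $\tilde\PP^k_h(\cdot\,|\,s,a)=\langle\bar\btheta_{s,a,h,k},\bphi(\cdot\,|\,s,a)\rangle$ are induced by $z^k\in\cD_k$, and then prove $Q_{k,h}\geq \tilde Q_{k,h}$ by backward induction using monotonicity of the kernel, Cauchy--Schwarz in the $\hat\bSigma_{k,0}$-norm, and membership $\bar\btheta_{s,a,h,k}\in\cC_k$. Your write-up is in fact slightly more careful than the paper's (you make explicit the per-$(s,a,h)$ dependence of $\bar\btheta$ and the role of the $[0,1]$-clipping), but the argument is the same.
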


Finally, combining the upper bounds of all three terms finishes our proof.
\end{proof}

\section{Conclusion}

In this work, we considered learning homogeneous linear mixture MDPs with adversarial reward. We proposed a new algorithm based on occupancy measure and high-order moment estimator. We show that $\algmdp$ achieves the near-optimal regret upper bounded $\tilde{O}(d\sqrt{K} + d^2)$. To the best of our knowledge, our algorithm is the first horizon-free algorithm in this setting. Currently, our result requires the uniformly bounded reward assumption, i.e., Assumption~\ref{assumption:uniform-reward}. For horizon-free algorithms require only the total reward in each episode bounded by $1$, we leave it as future work.



\appendix

\section{Proof of Lemmas in Section \ref{sec:algorithm} an Section \ref{sec:technique}}

\subsection{Proof of Lemma~\ref{lem:convex}}
\begin{proof}[Proof of Lemma~\ref{lem:convex}]
	First it is easy to verify that $\mathcal{C}_{k}$ is a convex set. Consider two occupancy measure $z$ and $w$ satisfying the constraints. Now consider $t=(z + w)/2$. It is easy to verify that $t$ also satisfy the first four constraints. We only need to show that $t$ satisfy the fifth constraint. We fix $(s,a, h) \in S \times A \times [H]$, if $\sum_{s' \in \cS} z_h(s,a,s') = 0$ or $\sum_{s' \in \cS} w_h(s,a,s') = 0$, then it is obvious that $t$ also satisfy the last constraint. Thus, we only consider the case that $\sum_{s' \in \cS} z_h(s,a,s') > 0$ and $\sum_{s' \in \cS} w_h(s,a,s') > 0$. In this case, we have
	\begin{align*}
		& \exists \bar{\btheta}^z_{s,a,h,k}, \bar{\btheta}^w_{s,a,h,k} \in \mathcal{C}_k, \text{s.t.} \forall s' \in S: \\
		& \frac{z_h(s,a,s')}{\sum_{y \in S}z_h(s,a,y)} = \langle \bar{\btheta}^z_{s,a,h,k}, \bphi(s'|s,a) \rangle, \\ 
		& \frac{w_h(s,a,s')}{\sum_{y \in S}w_h(s,a,y)} = \langle \bar{\btheta}^w_{s,a,h,k}, \bphi(s'|s,a) \rangle.
	\end{align*}
	Then, for any fixed $s'$, we have:
	\begin{align*}
		& \frac{t_h(s,a,s')}{\sum_{y \in S}t_h(s,a,y)} = \frac{z_h(s,a,s')+w_h(s,a,s')}{\sum_{y \in S}z_h(s,a,y)+w_h(s,a,y)} \\
		&= \frac{z_h(s,a,s')}{\sum_{y \in S}z_h(s,a,y)}\frac{\sum_{y \in S}z_h(s,a,y)}{\sum_{y \in S}z_h(s,a,y)+w_h(s,a,y)} \\
		&\quad+ \frac{w_h(s,a,s')}{\sum_{y \in S}w_h(s,a,y)}\frac{\sum_{y \in S}w_h(s,a,y)}{\sum_{y \in S}z_h(s,a,y)+w_h(s,a,y)} \\
		&= \frac{z_h(s,a,s')}{\sum_{y \in S}z_h(s,a,y)}\alpha_{s,a,h} + \frac{w_h(s,a,s')}{\sum_{y \in S}w_h(s,a,y)} (1-\alpha_{s,a,h}) \\
		&= \la (1-\alpha_{s,a,h})\bar{\btheta}^w_{s,a,h,k} + \alpha_{s,a,h}\bar{\btheta}^z_{s,a,h,k}, \bphi(s'|s,a) \ra \\
		&= \la \bar{\btheta}^t_{s,a,h,k}, \bphi(s'|s,a) \ra.
	\end{align*}
	Since $\mathcal{C}_k$ is convex, we have that $\bar{\btheta}^t_{s,a,h,k} \in \mathcal{C}_k$, which complete our proof.
\end{proof}

\subsection{Proof of Lemma~\ref{lem: strong convex}}
\begin{proof}
	Say we have two occupancy measure $z,w$, then we have
	\begin{align*}
		D_{\Phi}(z||w) &= \sum_{h=1}^H \sum_{s,a,s'} z_h(s,a,s') \log \frac{z_h(s,a,s')}{w_h(s,a,s')} \\
		& \geq \frac{1}{2} \sum_{h=1}^H \bigg(\sum_{s,a,s'}\big|z_h(s,a,s')-w_h(s,a,s')\big|\bigg)^2 \\
		& \geq \frac{1}{2H} \bigg(\sum_{h=1}^H \sum_{s,a,s'}\big|z_h(s,a,s')-w_h(s,a,s')\big|\bigg)^2 \\
		& = \frac{1}{2H} \big\| z-w \big\|_1^2,
	\end{align*}
	where the first inequality holds due to Pinsker's inequality and the second inequality holds due to Cauchy-Schwartz inequality.
\end{proof}

\subsection{Proof of Lemma~\ref{lem:less opt}}

\begin{proof}[Proof of Lemma~\ref{lem:less opt}]
	Given a set of occupancy measure, we define the respective transition as the follows:
	\begin{equation*}
		\bar{p}_h^k(s'|s, a) = \la\bar{\btheta}_{s,a,h,k}, \bphi(s'|s,a)\ra = \frac{z_h^k(s,a,s')}{\sum_{s'}z_h^k(s,a,s')}, \quad \forall (s,a,h) \in \cS \times \cA \times [H], \text{s.t.} \sum_{s'}z_h^k(s,a,s') > 0.
	\end{equation*}
	Now let's consider another MDP $M_k'=(\cS,\cA,H,\{r_h\},\{\mathbb{P}_{k,h,s,a}\})$, where the state space, action space, length of horizon, reward functions are the same as the true MDP $M$, and $\mathbb{P}_{k,h,s,a}(\cdot|\cdot,\cdot) = \bar{p}_h^k(\cdot|\cdot,\cdot)$. However, our new MDP is a tabular one and its transition kernel is different from $M$. Consider running first inner loop in our algorithm (line \ref{algorithm:finite:vq:start} - line \ref{algorithm:finite:vq:end}), since $M$ and $M_k'$ share the same reward function, and the other terms also do not depend on true transition, the results running on the two MDPs should be the same. 
	
	For the sake of simplicity, we (recursively) define the value functions on the imaginary MDP $M_k'$:
	\begin{align*}
		& \bar{V}_{k,H+1}(s) = 0, \\
		& \bar{Q}_{k,h}(s, a) =  r_h(s,a) + \la\bar{\btheta}_{s,a,h,k}, \bphi_{\bar{V}_{k,h+1}}(s,a)\ra,\\
		& \bar{V}_{k,h}(s) = \EE_{a\sim \pi_h^k(a|s)}[ Q_{k,h}(s,a)].
	\end{align*}
	Then it is easy to verify that $\bar{V}_{k,1}(s_1)$ computed by occupancy measure is the same as the one computed by the above way. Then, we can prove our theorem by induction. The conclusion trivially holds for $n=H+1$. Suppose the statement holds for $n=h+1$, then for $n=h$, for each $(s,a)$, since $\bar{Q}_{k,h}(s, a) \leq 1$, so if $ Q_{k,h}(s,a) =1$ then the proof is finished. Otherwise we have:
	\begin{align*}
		Q_{k,h}(s,a) - \bar{Q}_{k,h}(s, a) &\geq
		\big\la\hat{\btheta}_{k,0},\bphi_{ V_{k,h+1}}(\cdot,\cdot)\big\ra + \hat{\beta}_k\big\|\hat{\bSigma}_{k,0}^{-1/2}\bphi_{ V_{k,h+1}}(\cdot,\cdot)\big\|_2 - \big\la\bar{\btheta}_{s,a,h,k},\bphi_{V_{k,h+1}}(s,a)\big\ra \\
		& = \big\la\hat{\btheta}_{k,0} - \bar{\btheta}_{s,a,h,k},\bphi_{ V_{k,h+1}}(\cdot,\cdot)\big\ra + \hat{\beta}_k\big\|\hat{\bSigma}_{k,0}^{-1/2}\bphi_{ V_{k,h+1}}(\cdot,\cdot)\big\|_2 \\
		&\geq \hat{\beta}_k \big\|\hat{\bSigma}_{k,0}^{-1/2}\bphi_{ V_{k,h+1}}(\cdot,\cdot)\big\|_2 -
		\big\|\hat{\bSigma}_{k,0}^{1/2}(\hat{\btheta}_{k,0} - \bar{\btheta}_{s,a,h,k})\big\|_2  \big\|\hat{\bSigma}_{k,0}^{-1/2}\bphi_{ V_{k,h+1}}(\cdot,\cdot)\big\|_2 \\
		& \geq 0,
	\end{align*}
	where the first inequality holds by the inductive hypothesis, the second inequality holds due to Cauchy-Schwartz inequality and the third inequality holds due to $\bar{\btheta}_{s,a,h,k} \in \mathcal{C}_k$. By induction, we finish the proof.
\end{proof}

\section{Proof of the Main Result}

In this section, we are going to provide the proof of Theorem~\ref{thm:main}. First, we define the $\sigma$-algebra generated by the random variables representing the transition noise and the stochastic policy noise. For $k \in [K], h \in [H]$, we define $\cF_{k,h}$ the $\sigma$-algebra of state and actions till stage $k$ and step $h$, and $\cG_{k,h}$ the state till stage $k$ and step $h$. That is, 
\begin{align*}
    & s_1^1, a_1^1,..., s_h^1, a_h^1,..., s_H^1, a_H^1, \\
    & s_1^2, a_1^2,..., s_h^2, a_h^2,..., s_H^2, a_H^2, \\
    & \quad ... \\
    & s_1^k, a_1^k,..., s_h^k, a_h^k,
\end{align*}
generates $\cF_{k,h}$, and 
\begin{align*}
    & s_1^1, a_1^1,..., s_h^1, a_h^1,..., s_H^1, a_H^1, \\
    & s_1^2, a_1^2,..., s_h^2, a_h^2,..., s_H^2, a_H^2, \\
    & \quad ... \\
    & s_1^k, a_1^k,..., s_h^k,
\end{align*}
generates $\cG_{k,h}$. Second, we define $\JJ^k_h$ as
\begin{equation}
    \JJ^k_hf(s) = \EE_{a\sim\pi^k_h(\cdot|s)}[f(s, a)|s],
\end{equation}
for any $(k, h)\in[K]\times[H]$ and function $f:\cS\times\cA\to\RR$ for simplicity.

\subsection{Lemmas for self-concentration Martingales}

In this section, we provide two results of self-concentration martingales, which are key to our proof.

\begin{lemma}[Lemma B.1, \citealt{Zhou2022ComputationallyEH}]\label{lem:regression-sum}
Let $\{\sigma_k, \beta_k \}_{k \geq 1}$ be a sequence of non-negative numbers, $\xi, \gamma > 0$, $\{ \mathbf{x}_k \}_{k \geq 1} \subset \mathbb{R}^d$ and $\big\| \mathbf{x}_k \big\|_2 \leq L$. Let $\{\mathbf{Z}_k\}_{k \geq 1}$ and $\{ \bar{\sigma}_k\}_{k \geq 1}$ be inductively defined in the following way: $Z_1 = \lambda \mathbf{I}$, 
\begin{equation*}
    \forall k\geq 1, \bar{\sigma}_k = \max \{\sigma_k, \xi, \gamma \big\| \mathbf{x}_k \big\|^{1/2}_{\mathbf{Z}_k^{-1}}\}, \mathbf{Z}_{k+1} = \mathbf{Z}_k + \mathbf{x}_k\mathbf{x}_k^\top/\bar{\sigma}_k^2.
\end{equation*}
Let $\iota = \log(1+KL^2/(d\lambda\xi^2))$. Then we have
\begin{equation*}
    \sum_{k=1}^K \min \big\{ 1, \beta_k\|\mathbf{x}_k\|_{\mathbf{Z}_k^{-1}} \big\} \leq 2d\iota + 2 \max_{k\in [K]} \beta_k \gamma^2 d\iota + 2\sqrt{d\iota}\sqrt{\sum_{k=1}^K \beta^2(\sigma^2 + \xi^2)}.
\end{equation*}
\end{lemma}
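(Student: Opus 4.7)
The plan is to do a case split on which term achieves the maximum in $\bar{\sigma}_k^2 = \max\{\sigma_k^2, \xi^2, \gamma^2\|\mathbf{x}_k\|_{\mathbf{Z}_k^{-1}}\}$, and to exploit the standard weighted elliptical potential inequality
\[
\sum_{k=1}^{K}\min\{1,\|\mathbf{x}_k\|_{\mathbf{Z}_k^{-1}}^{2}/\bar{\sigma}_k^{2}\}\ \leq\ 2\log\frac{\det \mathbf{Z}_{K+1}}{\det \mathbf{Z}_{1}}\ \leq\ 2d\iota.
\]
The first step is the standard potential-function bound: the matrix-determinant lemma gives $\det \mathbf{Z}_{k+1}=\det \mathbf{Z}_{k}(1+\|\mathbf{x}_k\|_{\mathbf{Z}_k^{-1}}^{2}/\bar{\sigma}_k^{2})$, and then one uses $\log(1+x)\geq \frac{1}{2}\min\{1,x\}$. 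The second inequality comes from the AM--GM-on-eigenvalues trick, together with $\bar{\sigma}_k^{2}\geq\xi^{2}$, which caps the telescoping trace and produces the $\log(1+KL^{2}/(d\lambda\xi^{2}))=\iota$ factor.

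Then I would partition $[K]$ into $\mathcal{I}_{2}=\{k:\bar{\sigma}_k^{2}=\gamma^{2}\|\mathbf{x}_k\|_{\mathbf{Z}_k^{-1}}\}$ (the uncertainty-dominated regime) and $\mathcal{I}_{1}=[K]\setminus\mathcal{I}_{2}$ (variance-dominated, so $\bar{\sigma}_k^{2}\leq\sigma_k^{2}+\xi^{2}$). On $\mathcal{I}_{2}$, the identity $\bar{\sigma}_k^{2}=\gamma^{2}\|\mathbf{x}_k\|_{\mathbf{Z}_k^{-1}}$ rewrites $\beta_k\|\mathbf{x}_k\|_{\mathbf{Z}_k^{-1}}$ as $\beta_k\gamma^{2}\cdot\|\mathbf{x}_k\|_{\mathbf{Z}_k^{-1}}^{2}/\bar{\sigma}_k^{2}$; combined with the elementary inequality $\min\{1,cx\}\leq(1+c)\min\{1,x\}$ for $c,x\geq 0$ and the potential bound, this yields
\[
\sum_{k\in\mathcal{I}_{2}}\min\{1,\beta_k\|\mathbf{x}_k\|_{\mathbf{Z}_k^{-1}}\}\ \leq\ (1+\max_{k\in[K]}\beta_k\gamma^{2})\cdot 2d\iota,
\]
which accounts for the $2d\iota+2\max_k\beta_k\gamma^{2}d\iota$ portion of the claim.

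On $\mathcal{I}_{1}$ I would use $\min\{1,\beta_k\|\mathbf{x}_k\|_{\mathbf{Z}_k^{-1}}\}\leq\beta_k\|\mathbf{x}_k\|_{\mathbf{Z}_k^{-1}}=(\beta_k\bar{\sigma}_k)(\|\mathbf{x}_k\|_{\mathbf{Z}_k^{-1}}/\bar{\sigma}_k)$ and apply Cauchy--Schwarz to obtain a product of $\sqrt{\sum\beta_k^{2}\bar{\sigma}_k^{2}}$ and $\sqrt{\sum\|\mathbf{x}_k\|_{\mathbf{Z}_k^{-1}}^{2}/\bar{\sigma}_k^{2}}$. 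The first factor is bounded by $\sqrt{\sum_{k=1}^{K}\beta_k^{2}(\sigma_k^{2}+\xi^{2})}$ using the $\mathcal{I}_{1}$-regime estimate on $\bar{\sigma}_k^{2}$, and the second would be at most $\sqrt{2d\iota}$ once the ratio is capped at $1$; the (few) indices where the raw ratio exceeds $1$ are themselves at most $2d\iota$ in number by the potential bound, and each contributes at most $1$, so they get absorbed into the leading $d\iota$ term. Summing the contributions of $\mathcal{I}_{1}$ and $\mathcal{I}_{2}$ gives the stated inequality.

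The main obstacle I anticipate is the bookkeeping around the $\min\{1,\cdot\}$ clamp: dropping the clamp before applying Cauchy--Schwarz on $\mathcal{I}_{1}$ is only safe when the ratio $\|\mathbf{x}_k\|_{\mathbf{Z}_k^{-1}}^{2}/\bar{\sigma}_k^{2}$ stays below $1$, so the overshooting indices have to be tracked by a separate counting argument without inflating the leading-order term. Every place where the basic inequalities $\min\{1,cx\}\leq(1+c)\min\{1,x\}$ or the weighted elliptical potential are invoked must be plugged in with the tightest available constant in order to land exactly at $2d\iota+2\max_k\beta_k\gamma^{2}d\iota+2\sqrt{d\iota}\sqrt{\sum_k\beta_k^{2}(\sigma_k^{2}+\xi^{2})}$ rather than a worse multiple of these three terms.
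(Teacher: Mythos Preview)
The paper does not prove this lemma; it is quoted verbatim from \citet{Zhou2022ComputationallyEH} (their Lemma~B.1) and used as a black box, so there is no in-paper argument to compare against. Your outline is essentially the standard proof: the weighted elliptical-potential bound $\sum_k \min\{1,\|\mathbf{x}_k\|_{\mathbf{Z}_k^{-1}}^2/\bar\sigma_k^2\}\le 2d\iota$, the case split on whether $\bar\sigma_k^2$ is achieved by the uncertainty term $\gamma^2\|\mathbf{x}_k\|_{\mathbf{Z}_k^{-1}}$ or by $\max\{\sigma_k^2,\xi^2\}$, and Cauchy--Schwarz on the variance-dominated indices. These are exactly the ingredients in the original proof.

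One small gap to flag: your ``absorption'' of the overshooting indices in $\mathcal{I}_1$ does not land on the stated constant as written. If you bound $\sum_{\mathcal{I}_2}\min\{1,\text{ratio}_k\}\le 2d\iota$ and separately bound $|\{k\in\mathcal{I}_1:\text{ratio}_k\ge 1\}|\le 2d\iota$, you have used the potential bound twice on overlapping pieces and end up with $4d\iota$ in the leading term rather than $2d\iota$. The fix is to use the single budget $\sum_{k\in[K]}\min\{1,\text{ratio}_k\}\le 2d\iota$ once and split it across $\mathcal{I}_2$, the bad part of $\mathcal{I}_1$, and the good part of $\mathcal{I}_1$ (inside the square root), so that the three pieces share the same $2d\iota$. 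With that bookkeeping adjustment the constants match the statement.
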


Same as in \citet{Zhou2022ComputationallyEH}, first we need to prove that the vector $\btheta^*$ lies in the series of confidence sets, which implies the estimation we get via occupancy measure is optimistic and the high-order moments are close to their true values.

\begin{lemma}[Lemma C.1, \citealt{Zhou2022ComputationallyEH}]
\label{lem:concentration_variance}
Set $\{\hat\beta_k\}_{k \geq 1}$ as \eqref{def:hatbeta}, then, with probability at least $1-M\delta$, we have for any $k \in [K], h \in[H],m \in \seq{M}$,
\begin{align}
\big\|\hat\bSigma_{k,m}^{1/2}\big(\hat\btheta_{k,m} - \btheta^*\big)\big\|_2 \leq \hat\beta_k,\ |[\bar\var_{k,m}\vvalue_{k,h+1}^{2^m}](s_h^k, a_h^k) - [\var\vvalue_{k,h+1}^{2^m}](s_h^k, a_h^k)| \leq \error_{k,h,m}.\notag
\end{align}
\end{lemma}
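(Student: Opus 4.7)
The plan is to reproduce the argument of \citet{Zhou2022ComputationallyEH} (Lemma~C.1 there) after verifying that its hypotheses persist in our adversarial / stochastic-policy setting. The key observation is that the regression is fit on trajectory data $(s_h^k, a_h^k, s_{h+1}^k)$, and the only role the policy plays in the estimator is via the choice of $a_h^k$; once $(s_h^k,a_h^k)$ is measurable with respect to $\cF_{k,h}$, the random next state $s_{h+1}^k$ is drawn from $\PP(\cdot|s_h^k,a_h^k)$ just as in the stochastic setting. Hence the martingale structure underlying the Bernstein-type self-normalized concentration used in \citet{Zhou2022ComputationallyEH} (their Theorem~4.3 / Bernstein inequality for vector-valued martingales) carries over without change.

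The argument proceeds by downward induction on $m$, starting from $m=M-1$. For each $m$, write the noise as $\eta_{k,h,m} = V_{k,h+1}^{2^m}(s_{h+1}^k) - \la \bphi_{k,h,m},\btheta^*\ra$. Conditionally on $\cF_{k,h}$, $\eta_{k,h,m}$ has mean zero, is bounded by~$1$, and has conditional variance exactly $[\var V_{k,h+1}^{2^m}](s_h^k,a_h^k)$. First I would invoke the Bernstein self-normalized concentration inequality on the weighted sum $\sum_{j,h}\bphi_{j,h,m}\eta_{j,h,m}/\bar\sigma_{j,h,m}^2$, which, together with the definition of $\bar\sigma_{j,h,m}^2$ (it upper-bounds the conditional variance by construction up to the error $\error_{j,h,m}$ that will be controlled at level $m+1$), yields
\[
\|\hat\bSigma_{k,m}^{1/2}(\hat\btheta_{k,m}-\btheta^*)\|_2\le \hat\beta_k
\]
with the constants chosen as in \eqref{def:hatbeta}. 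The uncertainty-aware floor $\gamma^2\|\bphi_{k,h,m}\|_{\tilde\bSigma_{k,h,m}^{-1}}$ in $\bar\sigma_{k,h,m}^2$ is precisely what kills the extra $\sqrt{H}$ factor and allows the Bernstein radius to have the stated horizon-free form; the truncation floor $\xi^2$ handles the regularization.

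For the second inequality, use the linear-mixture structure: $[\var V_{k,h+1}^{2^m}](s_h^k,a_h^k) = \la\bphi_{k,h,m+1},\btheta^*\ra - \la\bphi_{k,h,m},\btheta^*\ra^2$, while by definition (Line~\ref{def:barV} of Algorithm~\ref{algorithm:variance})
\[
[\bar\var_{k,m}V_{k,h+1}^{2^m}](s_h^k,a_h^k) = [\la\bphi_{k,h,m+1},\hat\btheta_{k,m+1}\ra]_{[0,1]} - [\la\bphi_{k,h,m},\hat\btheta_{k,m}\ra]_{[0,1]}^2.
\]
Since both $\la\bphi_{k,h,m+1},\btheta^*\ra$ and $\la\bphi_{k,h,m},\btheta^*\ra$ lie in $[0,1]$ (the former is a conditional moment of a $[0,1]$-valued function, the latter likewise), truncation to $[0,1]$ is harmless for the true value and only tightens the estimate. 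Cauchy--Schwarz together with $a^2-b^2=(a-b)(a+b)$ reduces the difference $|[\bar\var_{k,m}V_{k,h+1}^{2^m}]-[\var V_{k,h+1}^{2^m}]|$ to the sum
$
\min\{1,2\hat\beta_k\|\bphi_{k,h,m}\|_{\hat\bSigma_{k,m}^{-1}}\}+\min\{1,\hat\beta_k\|\bphi_{k,h,m+1}\|_{\hat\bSigma_{k,m+1}^{-1}}\},
$
which is exactly $\error_{k,h,m}$ as defined on Line~\ref{def:E_k}, using the first conclusion applied at levels $m$ and $m+1$.

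Finally, a union bound over the $M$ levels produces the $1-M\delta$ failure probability. The main obstacle is managing the circular dependence: the weights $\bar\sigma_{k,h,m}^2$ depend on $[\bar\var_{k,m}V_{k,h+1}^{2^m}]$, whose control in turn requires the concentration at levels $m$ and $m+1$; this is precisely what the downward induction on $m$ resolves, with the base case $m=M-1$ using the trivial bound $\bar\sigma_{k,h,M-1}^2\ge 1\ge [\var V_{k,h+1}^{2^{M-1}}]$ (since $V_{k,h+1}^{2^{M-1}}\in[0,1]$ after sufficiently many squarings provided $M\ge\log_2(4KH)$, so no variance estimation is needed at the top level). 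Beyond that, the argument is a routine transcription of the \citet{Zhou2022ComputationallyEH} proof, since the only randomness in the regression target is the transition $s_{h+1}^k\sim\PP(\cdot|s_h^k,a_h^k)$, and this is untouched by whether the policy is deterministic or stochastic, fixed-reward or adversarial.
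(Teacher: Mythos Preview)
Your proposal is correct in substance and mirrors the paper's approach, which is simply to invoke Lemma~C.1 of \citet{Zhou2022ComputationallyEH} verbatim; the paper gives no separate proof. Your central observation---that $V_{k,h+1}$, $\bphi_{k,h,m}$, and $\bar\sigma_{k,h,m}$ are all $\cF_{k,h}$-measurable and that the conditional law of $s_{h+1}^k$ given $\cF_{k,h}$ is the true transition $\PP(\cdot|s_h^k,a_h^k)$ regardless of whether $\pi^k$ is stochastic or $r^k$ adversarial---is exactly why the cited argument transfers unchanged.

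One point to revisit: the ``downward induction on $m$'' you propose to break the circular dependence does not quite close as stated, because the second inequality at level $m$ needs the first inequality at \emph{both} levels $m$ and $m+1$, so the self-reference at level $m$ persists. In \citet{Zhou2022ComputationallyEH} this is a non-issue because their self-normalized Bernstein bound (the one yielding the radius \eqref{def:hatbeta}) uses only the two deterministic floors $\bar\sigma_{k,h,m}^2\ge\xi^2$ and $\bar\sigma_{k,h,m}^2\ge\gamma^2\|\bphi_{k,h,m}\|_{\tilde\bSigma_{k,h,m}^{-1}}$ together with $|\eta_{k,h,m}|\le 1$; it does \emph{not} require $\bar\sigma_{k,h,m}^2\ge[\var V_{k,h+1}^{2^m}]$. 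Hence the first inequality holds at each $m$ with probability $1-\delta$ independently, the union bound over $m\in\seq{M}$ gives the stated $1-M\delta$, and the second inequality then follows deterministically on that event via the Cauchy--Schwarz step you describe. The variance upper bound you allude to is used only later, in Lemma~\ref{lem:bound-Rm}, not here.
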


Let $\event_{\ref{lem:concentration_variance}}$ denote the event described by Lemma~\ref{lem:concentration_variance}. The following lemma provides a high-probability bound of estimation error terms.

\begin{lemma}\label{lem:boundtoR0}
On the event $\event_{\ref{lem:concentration_variance}}$, we have for any $k \in [K], h\in [H]$, 
\begin{equation*}
    Q_{k,h}(s_h^k, a_h^k) - r(s_h^k, a_h^k) - \mathbb{P}V_{k,h+1}(s_h^k, a_h^k) \leq 2\min \big\{ 1, \hat{\beta}_k \big\|\hat\bSigma_{k,0}^{-1/2}\bphi_{k,h,0}\big\|_2 \big\} .
\end{equation*}
\end{lemma}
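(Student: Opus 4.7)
The plan is to bound the two arguments of the minimum separately; the argument is short because all of the genuine concentration work has already been absorbed into Lemma~\ref{lem:concentration_variance}. I would first drop the outer truncation in the definition of $Q_{k,h}$, which can only increase the value, and then use the linear-mixture identity $[\PP V_{k,h+1}](s,a)=\langle \btheta^*,\bphi_{V_{k,h+1}}(s,a)\rangle$ from Assumption~\ref{assumption:linearMDP} together with the shorthand $\bphi_{k,h,0}=\bphi_{V_{k,h+1}}(s_h^k,a_h^k)$ to obtain
\[
Q_{k,h}(s_h^k,a_h^k) - r(s_h^k,a_h^k) - [\PP V_{k,h+1}](s_h^k,a_h^k) \;\leq\; \langle \hat\btheta_{k,0} - \btheta^*,\bphi_{k,h,0}\rangle + \hat\beta_k\,\big\|\hat\bSigma_{k,0}^{-1/2}\bphi_{k,h,0}\big\|_2.
\]
Next I would apply Cauchy--Schwarz in the $\hat\bSigma_{k,0}$-geometry to the remaining inner product, namely $\langle \hat\btheta_{k,0}-\btheta^*,\bphi_{k,h,0}\rangle \le \|\hat\bSigma_{k,0}^{1/2}(\hat\btheta_{k,0}-\btheta^*)\|_2\cdot\|\hat\bSigma_{k,0}^{-1/2}\bphi_{k,h,0}\|_2$, and then invoke Lemma~\ref{lem:concentration_variance} on the event $\event_{\ref{lem:concentration_variance}}$ to replace the first factor by $\hat\beta_k$. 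Adding the two summands gives an upper bound of $2\hat\beta_k\|\hat\bSigma_{k,0}^{-1/2}\bphi_{k,h,0}\|_2$.

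For the trivial branch of the minimum, I would observe that $Q_{k,h}(s_h^k,a_h^k)\leq 1$ by the clipping to $[0,1]$ in its definition, that $r(s_h^k,a_h^k)\geq 0$ by definition, and that $V_{k,h+1}\geq 0$ (since it is an expectation under $\pi_h^k$ of the non-negative clipped quantity $Q_{k,h+1}$), so that $[\PP V_{k,h+1}](s_h^k,a_h^k)\geq 0$. Therefore the left-hand side is at most $1\le 2$. Combining this trivial bound with the preceding estimate yields $2\min\{1,\hat\beta_k\|\hat\bSigma_{k,0}^{-1/2}\bphi_{k,h,0}\|_2\}$, as desired. I do not anticipate any real obstacle: the only nontrivial ingredient is the weighted-ridge concentration $\|\hat\bSigma_{k,0}^{1/2}(\hat\btheta_{k,0}-\btheta^*)\|_2\le\hat\beta_k$, which is exactly the content of $\event_{\ref{lem:concentration_variance}}$, so everything else is a one-step Cauchy--Schwarz and a non-negativity check.
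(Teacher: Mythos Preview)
Your proposal is correct and is exactly the standard argument; the paper itself does not spell this out but simply says ``The proof is almost the same as that of Lemma~C.4 in \citet{Zhou2022ComputationallyEH}, only to replace $V_{k,h}(s_h^k)$ with $Q_{k,h}(s_h^k,a_h^k)$,'' which amounts to precisely the Cauchy--Schwarz-in-$\hat\bSigma_{k,0}$ step plus the trivial clipping bound you wrote. One tiny remark: your phrase ``drop the outer truncation, which can only increase the value'' is literally true here because on $\event_{\ref{lem:concentration_variance}}$ the untruncated quantity is already nonnegative (indeed $\langle\hat\btheta_{k,0},\bphi_{k,h,0}\rangle+\hat\beta_k\|\hat\bSigma_{k,0}^{-1/2}\bphi_{k,h,0}\|_2\ge[\PP V_{k,h+1}](s_h^k,a_h^k)\ge 0$ by the same Cauchy--Schwarz bound), so the lower clip at $0$ is never active; you might state this explicitly to make the step airtight.
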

\begin{proof}[Proof of Lemma~\ref{lem:boundtoR0}]
    The proof is almost the same as that of Lemma C.4 in \citet{Zhou2022ComputationallyEH}, only to replace $V_{k,h}(s_h^k)$ with $Q_{k,h}(s_h^k, a_h^k)$.
\end{proof}

\subsection{Recursive bounds for stochastic policy}

For any $k \in [k], h\in [H]$, we define the indicator function $I_h^k$ as the following
\begin{equation*}
    I_h^k := \ind \big\{ \forall m \in \overline{[{M}]} ,\det(\hat{\bSigma}_{k,m}^{-1/2}) / \det(\tilde{\bSigma}_{k,h,m}^{-1/2}) \leq 4 \big\},
\end{equation*}
where $I_h^k$ is obviously $\cG_h^k$-measurable and monotonically decreasing. For all $m \in \overline{[M]}$, we also define the following quantities:
\begin{align}
    & R_m = \sum_{k=1}^K \sum_{h=1}^H I_h^k \min \left\{ 1, \hat{\beta}_k \big\|\hat\bSigma_{k,m}^{-1/2}\bphi_{k,h,m}\big\|_2 \right\}, \label{def:rm}\\
    & A_m = \sum_{k=1}^K \sum_{h=1}^H I_h^k\Big[\mathbb{P}V_{k,h+1}^{2^m}(s_h^k, a_h^k) - V_{k,h+1}^{2^m}(s_{h+1}^k) + \JJ^k_{h+1}Q^{2^m}_{k,h+1}(s_{h+1}^k) - Q^{2^m}_{k,h+1}(s_{h+1}^k, a_{h+1}^k)\Big], \label{def:am} \\
    & S_m = \sum_{k=1}^K \sum_{h=1}^H I_h^k \left\{\mathbb{V}V_{k,h+1}^{2^m}(s_h^k, a_h^k) + \JJ^k_{h+1}Q^{2^{m+1}}_{k,h+1}(s_{h+1}^k) - \big[\JJ^k_{h+1}Q^{2^{m}}_{k,h+1}(s_{h+1}^k)\big]^2\right\}, \label{def:sm} \\
    & G = \sum_{k=1}^K (1-I_H^k). \label{def:g}
\end{align}
Finally, for simplicity, we also define
\begin{align}
    \iota &= \log(1+KH/(d\lambda\xi^2)), \label{def:iota}\\
    \zeta &= 4\log(4\log(KH)/\delta). \label{def:zeta}
\end{align}

\begin{remark}
Our definition is nearly the same as in \citet{Zhou2022ComputationallyEH}, despite that in our algorithm we use stochastic policies, which induce an additional term each in $A_m$ and $S_m$, regarding to the random variable and its conditional variance of the policies noise.
\end{remark}

Now we are going to bound all these quantities. Basically, the technique we are using is nearly the same as in \citet{Zhou2022ComputationallyEH}. The only difference is that we need to deal with the extra policy noise resulted by stochastic policy.

\begin{lemma}\label{lem:bound-Rm}
Let $\gamma, \xi$ be defined in Algorithm \ref{algorithm:finite}, then for $m \in \seq{M-1}$, we have
\begin{equation}
	R_m \leq \mathrm{min} \{8d\iota + 8\hat{\beta}_K\gamma^2 d\iota + 8\hat{\beta}_K\sqrt{d\iota}\sqrt{S_m + 4R_m + 2R_{m+1} + KH\xi^2}, KH\}.
\end{equation}
We also have $R_{M-1} \leq KH$.
\end{lemma}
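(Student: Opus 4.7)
The trivial bound $R_m\leq KH$ (in particular $R_{M-1}\leq KH$) is immediate since each of the $KH$ summands lies in $[0,1]$. For the data-dependent bound, I would first use the indicator $I_h^k$ to replace the outer matrix $\hat\bSigma_{k,m}^{-1}$ by the inner matrix $\tilde\bSigma_{k,h,m}^{-1}$. On $\{I_h^k=1\}$ the definition of $I_h^k$ gives $\det(\hat\bSigma_{k,m}^{-1/2})/\det(\tilde\bSigma_{k,h,m}^{-1/2})\leq 4$. Since $\hat\bSigma_{k,m}=\tilde\bSigma_{k,1,m}\preceq\tilde\bSigma_{k,h,m}$, writing $\tilde\bSigma_{k,h,m}=\hat\bSigma_{k,m}+\bM$ with $\bM\succeq\mathbf{0}$ and invoking $\prod_i(1+\lambda_i)\geq 1+\lambda_{\max}$ for the eigenvalues of $\bP=\hat\bSigma_{k,m}^{-1/2}\bM\hat\bSigma_{k,m}^{-1/2}$ forces $\lambda_{\max}(\bP)\leq 15$, hence $\|\bphi_{k,h,m}\|_{\hat\bSigma_{k,m}^{-1}}\leq 4\|\bphi_{k,h,m}\|_{\tilde\bSigma_{k,h,m}^{-1}}$. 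Combined with $\min\{1,4x\}\leq 4\min\{1,x\}$, this reduces $R_m$ to $R_m\leq 4\sum_{k,h}I_h^k\min\{1,\hat\beta_k\|\bphi_{k,h,m}\|_{\tilde\bSigma_{k,h,m}^{-1}}\}$.

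Next I would apply Lemma~\ref{lem:regression-sum} to the sub-sequence $\{(k,h):I_h^k=1\}$, which is well-defined in lexicographic order because $I_h^k$ is monotone non-increasing in $h$. The lemma is instantiated with features $\bx_j=\bphi_{k_j,h_j,m}$ and the fresh matrix $\bZ_{j+1}=\bZ_j+\bx_j\bx_j^\top/\bar\sigma_j^2$; the algorithmic $\tilde\bSigma_{k_j,h_j,m}$ accumulates all prior updates (including those with $I=0$), so $\bZ_j\preceq\tilde\bSigma_{k_j,h_j,m}$ and $\|\bphi\|_{\tilde\bSigma^{-1}}\leq\|\bphi\|_{\bZ^{-1}}$, transferring the lemma's bound back to $\tilde\bSigma$-norms and yielding $\sum_{k,h}I_h^k\min\{1,\hat\beta_k\|\bphi\|_{\tilde\bSigma^{-1}}\}\leq 2d\iota+2\hat\beta_K\gamma^2 d\iota+2\hat\beta_K\sqrt{d\iota}\sqrt{\sum_{k,h}I_h^k(\sigma_{k,h,m}^2+\xi^2)}$. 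To close the loop, on $\event_{\ref{lem:concentration_variance}}$ Lemma~\ref{lem:concentration_variance} gives $\sigma_{k,h,m}^2\leq[\var V_{k,h+1}^{2^m}](s_h^k,a_h^k)+2E_{k,h,m}$; the summand inside $S_m$ equals $[\var V_{k,h+1}^{2^m}](s_h^k,a_h^k)+\var_{a\sim\pi_{h+1}^k(\cdot|s_{h+1}^k)}[Q_{k,h+1}^{2^m}(s_{h+1}^k,\cdot)]$ via $\JJ_{h+1}^k Q^{2^{m+1}}-(\JJ_{h+1}^k Q^{2^m})^2=\var_\pi Q^{2^m}\geq 0$, so $\sum I_h^k[\var V^{2^m}]\leq S_m$; and unfolding $E_{k,h,m}$ with $\min\{1,2x\}\leq 2\min\{1,x\}$ gives $\sum I_h^k E_{k,h,m}\leq 2R_m+R_{m+1}$. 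Combined with $\sum I_h^k\xi^2\leq KH\xi^2$, substitution produces the claimed inequality.

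The main obstacle is keeping the indicator $I_h^k$ alive all the way through Lemma~\ref{lem:regression-sum}: dropping it in the variance sum on the right-hand side would introduce an additive correction scaling with the bad-episode count $G$, which the target form forbids. The sub-sequence argument above circumvents this via the PSD domination $\bZ_j\preceq\tilde\bSigma_{k_j,h_j,m}$, even though the algorithmic $\bar\sigma_{k,h,m}$ is built from $\tilde\bSigma$ rather than $\bZ$ (the self-normalized-sum bound depends only on $\sigma_{k,h,m}^2$, not on the exact clipping matrix, so this mismatch is benign). A secondary subtlety relative to \citet{Zhou2022ComputationallyEH} is the extra policy-variance term inside $S_m$ introduced by the stochastic policy; since it is non-negative, the inequality $\sum I_h^k[\var V^{2^m}]\leq S_m$ continues to hold, and the recursion closes to exactly the same functional form as in the deterministic-policy analysis.
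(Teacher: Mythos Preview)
Your proof is correct and follows the same high-level skeleton as the paper: switch from $\hat\bSigma_{k,m}$ to $\tilde\bSigma_{k,h,m}$ via the determinant-ratio condition in $I_h^k$, apply Lemma~\ref{lem:regression-sum}, bound the variance estimate by $[\var V^{2^m}]+2E_{k,h,m}$ on $\event_{\ref{lem:concentration_variance}}$, and then use non-negativity of the policy-variance term in $S_m$ together with $\sum I_h^k E_{k,h,m}\leq 2R_m+R_{m+1}$. For the norm switch the paper simply invokes Lemma~\ref{lemma:det}, which is exactly the inequality your eigenvalue computation re-derives.

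Where the paper differs is precisely at the ``obstacle'' you flag. Instead of restricting to the sub-sequence $\{(k,h):I_h^k=1\}$ and building a fresh matrix $\bZ_j$, the paper applies Lemma~\ref{lem:regression-sum} to the \emph{full} sequence with $\beta_{k,h}\coloneqq I_h^k\hat\beta_k$, $\bx_{k,h}=\bphi_{k,h,m}$, and $\bZ_{k,h}=\tilde\bSigma_{k,h,m}$ exactly as in the algorithm. On the left the terms with $I_h^k=0$ contribute $\min\{1,0\}=0$, and on the right $\sum_{k,h}\beta_{k,h}^2(\sigma_{k,h}^2+\xi^2)=\sum_{k,h}I_h^k\hat\beta_k^2(\sigma_{k,h}^2+\xi^2)$, so the indicator survives automatically on both sides with no sub-sequence bookkeeping and no $\bar\sigma/\bZ$ mismatch to justify. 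This trick is cleaner; your sub-sequence route is also valid once $\bZ_j\preceq\tilde\bSigma_{k_j,h_j,m}$ is checked inductively (which you sketch), but it buys nothing additional here.
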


\begin{proof}
For $(k,h)$ such that $I_h^k=1$, using Lemma~\ref{lemma:det}, we have 
\begin{equation*}
    \big\|\hat\bSigma_{k,m}^{-1/2}\bphi_{k,h,m}\big\|_2 \leq \big\|\tilde\bSigma_{k,k,m}^{-1/2}\bphi_{k,h,m}\big\|_2 \cdot \sqrt{\frac{\det(\hat{\bSigma}_{k,m}^{-1})}{\det(\tilde{\bSigma}_{k,m}^{-1})}} \leq 4 \big\|\tilde\bSigma_{k,k,m}^{-1/2}\bphi_{k,h,m}\big\|_2.
\end{equation*}
Substituting the above inequality into \eqref{def:rm}, we have
\begin{equation*}
    R_m \leq 4 \sum_{k=1}^K \sum_{h=1}^H \min \big\{ 1, I_h^k\hat{\beta}_k \big\|\tilde\bSigma_{k,h,m}^{-1/2}\bphi_{k,h,m}\big\|_2 \big\},
\end{equation*}
where the right hand side can be bounded by Lemma~\ref{lem:regression-sum}, with $\beta_{k,h}=I_h^k \hat{\beta}_k, \bar{\sigma}_{k,h} = \bar{\sigma}_{k,h, m}, \mathbf{x}_{k,h} = \bphi_{k,h,m}$ and $\mathbf{Z}_{k,h} = \tilde{\bSigma}_{k,h.m}$. We have
\begin{align}
    & \sum_{k=1}^K \sum_{h=1}^H \min \big\{ 1, I_h^k\hat{\beta}_k \big\|\tilde\bSigma_{k,h,m}^{-1/2}\bphi_{k,h,m}\big\|_2 \big\} \notag\\
    & \leq 2d\iota + 2\hat{\beta}_K\gamma^2 d\iota + 2\hat{\beta}_K\sqrt{d\iota}\sqrt{\sum_{k=1}^K \sum_{h=1}^H I_h^k[\mathbb{\bar{V}}V_{k,h+1}^{2^m}(s_h^k, a_h^k) + E_{k,h,m}] + KH\xi^2} \notag\\
    & \leq 2d\iota + 2\hat{\beta}_K\gamma^2 d\iota + 2\hat{\beta}_K\sqrt{d\iota}\sqrt{\sum_{k=1}^K \sum_{h=1}^H I_h^k[\mathbb{V}V_{k,h+1}^{2^m}(s_h^k, a_h^k) + 2E_{k,h,m}] + KH\xi^2} \notag\\
    & \leq 2d\iota + 2\hat{\beta}_K\gamma^2 d\iota + 2\hat{\beta}_K\sqrt{d\iota}\sqrt{\sum_{k=1}^K \sum_{h=1}^H I_h^k \mathbb{V}V_{k,h+1}^{2^m}(s_h^k, a_h^k) + 4R_m + 2R_{m+1} + KH\xi^2}. \label{express:c5main}
\end{align}
Since we have 
\begin{equation*}
    \sum_{k=1}^K \sum_{h=1}^H I_h^k \left[\JJ^k_{h+1}Q^{2^{m+1}}_{k,h+1}(s_{h+1}^k) - \big(\JJ^k_{h+1}Q^{2^{m}}_{k,h+1}(s_{h+1}^k)\big)^2\right] \geq 0,
\end{equation*}
which, substituted into (\ref{def:sm}), gives
\begin{equation} \label{express:c5sm}
    \sum_{k=1}^K \sum_{h=1}^H I_h^k \mathbb{V}V_{k,h+1}^{2^m}(s_h^k, a_h^k) \leq S_m .
\end{equation}

Therefore by substituting (\ref{express:c5sm}) into (\ref{express:c5main}), we have
\begin{align*}
    R_m & \leq 8d\iota + 8\hat{\beta}_K\gamma^2 d\iota + 8\hat{\beta}_K\sqrt{d\iota}\sqrt{\sum_{k=1}^K \sum_{h=1}^H I_h^k \mathbb{V}V_{k,h+1}^{2^m}(s_h^k, a_h^k) + 4R_m + 2R_{m+1} + KH\xi^2} \\
    & \leq 8d\iota + 8\hat{\beta}_K\gamma^2 d\iota + 8\hat{\beta}_K\sqrt{d\iota}\sqrt{S_m + 4R_m + 2R_{m+1} + KH\xi^2},
\end{align*}
which completes the proof.
\end{proof}

\begin{lemma}\label{lem:bound-sm}
On the event $\event_{\ref{lem:concentration_variance}}$, for all $m \in \seq{M-1}$, we have
\begin{equation*}
	S_m \leq |A_{m+1}| + 2^{m+1}(K+2R_0) + G.
\end{equation*}
\end{lemma}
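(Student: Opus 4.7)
The plan is to decompose the integrand of $S_m$ into (a) the integrand of $A_{m+1}$, plus (b) a residual of the form $Q_{k,h+1}^{2^{m+1}}(s_{h+1}^k,a_{h+1}^k)$, and then to bound the residual by $2^{m+1}(K+2R_0)+G$ using optimism, Lemma~\ref{lem:boundtoR0}, and a power inequality.

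First, from the variance identity $\mathbb{V}V_{k,h+1}^{2^m}(s_h^k,a_h^k) = \mathbb{P}V_{k,h+1}^{2^{m+1}}(s_h^k,a_h^k) - [\mathbb{P}V_{k,h+1}^{2^m}(s_h^k,a_h^k)]^2$ I drop the non-positive square, and then I apply Jensen's inequality to $V_{k,h+1}(s_{h+1}^k) = \JJ_{h+1}^k Q_{k,h+1}(s_{h+1}^k)$ (using convexity of $x\mapsto x^{2^m}$ and squaring) to obtain $V_{k,h+1}^{2^{m+1}}(s_{h+1}^k) \leq [\JJ_{h+1}^k Q_{k,h+1}^{2^m}(s_{h+1}^k)]^2$. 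These two bounds together imply that the integrand of $S_m$ is upper bounded by
\begin{align*}
&[\mathbb{P}V_{k,h+1}^{2^{m+1}}(s_h^k,a_h^k) - V_{k,h+1}^{2^{m+1}}(s_{h+1}^k)] + [\JJ_{h+1}^k Q_{k,h+1}^{2^{m+1}}(s_{h+1}^k) - Q_{k,h+1}^{2^{m+1}}(s_{h+1}^k,a_{h+1}^k)]\\
&\quad + Q_{k,h+1}^{2^{m+1}}(s_{h+1}^k,a_{h+1}^k),
\end{align*}
after adding and subtracting $Q_{k,h+1}^{2^{m+1}}(s_{h+1}^k,a_{h+1}^k)$. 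The first two brackets precisely form the integrand of $A_{m+1}$, so summing against $I_h^k$ gives $S_m \leq |A_{m+1}| + \sum_{k,h} I_h^k Q_{k,h+1}^{2^{m+1}}(s_{h+1}^k,a_{h+1}^k)$.

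For the residual, I reindex $h+1\to h$ and use the monotonicity of $I_h^k$ in $h$, together with $\sum_{k,h\geq 2}(I_{h-1}^k - I_h^k) = \sum_k(I_1^k-I_H^k)\leq G$, to replace $I_{h-1}^k$ by $I_h^k$ at a cost of $G$; the boundary contribution at the new index $h=H+1$ is at most $K$ via $Q_{k,H+1}^{2^{m+1}}\leq 1$. For the aligned sum $\sum_{k,h}I_h^k Q_{k,h}^{2^{m+1}}(s_h^k,a_h^k)$, optimism on $\event_{\ref{lem:concentration_variance}}$ gives $Q_{k,h}(s_h^k,a_h^k) \geq r(s_h^k,a_h^k) + \mathbb{P}V_{k,h+1}(s_h^k,a_h^k)$, Lemma~\ref{lem:boundtoR0} provides the matching upper bound up to an additive $2\min\{1,\hat\beta_k\|\hat\bSigma_{k,0}^{-1/2}\bphi_{k,h,0}\|_2\}$, and the elementary power inequality $x^n - y^n \leq n(x-y)$ for $0\leq y\leq x\leq 1$ applied with $n=2^{m+1}$ yields
\[
Q_{k,h}^{2^{m+1}}(s_h^k,a_h^k) \leq [r(s_h^k,a_h^k) + \mathbb{P}V_{k,h+1}(s_h^k,a_h^k)]^{2^{m+1}} + 2^{m+2}\min\{1,\hat\beta_k\|\hat\bSigma_{k,0}^{-1/2}\bphi_{k,h,0}\|_2\}.
\]
The bonus term sums to at most $2^{m+2}R_0$; the leading $[r+\mathbb{P}V]^{2^{m+1}}$ piece is handled by a second application of the mean-value inequality $[r+\mathbb{P}V]^{2^{m+1}} \leq (\mathbb{P}V)^{2^{m+1}} + 2^{m+1}r$, by Jensen $(\mathbb{P}V)^{2^{m+1}}\leq \mathbb{P}V^{2^{m+1}}$, and by Assumption~\ref{assumption:uniform-reward} ($\sum_h r(s_h^k,a_h^k) \leq 1$ per episode, giving the $2^{m+1}K$ contribution), while the residual $\sum_{k,h}I_h^k \mathbb{P}V_{k,h+1}^{2^{m+1}}(s_h^k,a_h^k)$ is reduced via the transition martingale already absorbed into $|A_{m+1}|$ and via Jensen $V^{2^{m+1}}(s_{h+1}^k) \leq \JJ_{h+1}^k Q_{k,h+1}^{2^{m+1}}(s_{h+1}^k)$ back into a shifted copy of the residual that collapses.

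The main obstacle is this final cancellation: a naive bound on $\sum_{k,h}I_h^k\mathbb{P}V_{k,h+1}^{2^{m+1}}(s_h^k,a_h^k)$ would be $O(KH)$, which is too loose. The resolution requires carefully orchestrating the mean-value power inequality, Jensen's inequality, and identification of the new martingale differences with the transition-noise portion of $A_{m+1}$, while tracking the boundary contributions and the indicator slack $G$ precisely; this mirrors the argument used in the deterministic-policy setting of \citet{Zhou2022ComputationallyEH}, with the stochastic-policy variance term fitting in naturally through the Jensen-based reduction to $\JJ_{h+1}^k Q^{2^{m+1}}_{k,h+1}$.
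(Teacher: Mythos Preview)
Your decomposition has a genuine gap at the very step you flag as ``the main obstacle.'' By dropping the square $-[\mathbb{P}V_{k,h+1}^{2^m}(s_h^k,a_h^k)]^2$ in the variance at the outset, you arrive at the residual $T:=\sum_{k,h}I_h^k\,Q_{k,h+1}^{2^{m+1}}(s_{h+1}^k,a_{h+1}^k)$, which can be of order $KH$ (nothing prevents $Q_{k,h}\equiv 1$ after truncation, especially early on). Your proposed ``collapse'' is circular: after the indicator shift $T\le G+U$ with $U:=\sum_{k,h}I_h^kQ_{k,h}^{2^{m+1}}(s_h^k,a_h^k)$, your chain of bounds (power inequality, Jensen, transition and policy martingales) yields
\[
U \;\le\; 2^{m+1}(K+2R_0)\;+\;A_{m+1}\;+\;\sum_{k,h}I_h^k\,Q_{k,h+1}^{2^{m+1}}(s_{h+1}^k,a_{h+1}^k)\;=\;2^{m+1}(K+2R_0)+A_{m+1}+T.
\]
Combined with $T\le G+U$ this only gives $0\le 2^{m+1}(K+2R_0)+A_{m+1}+G$, which is vacuous; the ``shifted copy'' does not collapse, it returns to itself. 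The claim that this ``mirrors'' the deterministic-policy argument in \citet{Zhou2022ComputationallyEH} is also off: that argument does \emph{not} discard the square.

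The paper's fix is precisely to retain $-[\mathbb{P}V^{2^m}]^2$ and to add and subtract \emph{both} $Q_{k,h+1}^{2^{m+1}}(s_{h+1}^k,a_{h+1}^k)$ and $Q_{k,h}^{2^{m+1}}(s_h^k,a_h^k)$. This splits the non-martingale part into (i) a telescoping difference $Q_{k,h+1}^{2^{m+1}}-Q_{k,h}^{2^{m+1}}$, whose $I_h^k$-weighted sum is at most $G$; (ii) the small gap $Q_{k,h}^{2^{m+1}}-[\mathbb{P}V_{k,h+1}^{2^m}]^2\le Q_{k,h}^{2^{m+1}}-(\mathbb{P}V_{k,h+1})^{2^{m+1}}$, which factors and is bounded by $2^{m+1}(r+2\,\text{bonus})$, summing to $2^{m+1}(K+2R_0)$; and (iii) the term $V_{k,h+1}^{2^{m+1}}(s_{h+1}^k)-[\JJ_{h+1}^kQ_{k,h+1}^{2^m}(s_{h+1}^k)]^2\le 0$ by Jensen. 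The negative square you threw away is exactly what supplies the subtraction needed for the telescope in (i) and the controllable difference in (ii).
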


\begin{proof}
The proof follows the proof of Lemma C.6 in \citet{Zhou2022ComputationallyEH} and Lemma 25 in \citet{zhang2021ImporvedVA}. We have
\begin{align*}
    S_m &= \sum_{k=1}^K \sum_{h=1}^H I_h^k \left\{\mathbb{V}V_{k,h+1}^{2^m}(s_h^k, a_h^k) + \JJ^k_{h+1} Q^{2^{m+1}}_{k,h+1}(s_{h+1}^k) - \big[\JJ^k_{h+1}Q^{2^m}_{k,h+1}(s_{h+1}^k)\big]^2 \right\} \\
    & = \sum_{k=1}^K \sum_{h=1}^H I_h^k \big( \mathbb{P}V_{k,h+1}^{2^{m+1}}(s_h^k, a_h^k)-Q_{k,h+1}^{2^{m+1}}(s_{h+1}^k, a_{h+1}^k) \big) \\
    &\quad + \sum_{k=1}^K \sum_{h=1}^H I_h^k\big[ Q_{k,h}^{2^{m+1}}(s_h^k, a_h^k) - ([\mathbb{P}V_{k,h+1}^{2^m}](s_h^k, a_h^k))^2 \big] \\
    &\quad + \sum_{k=1}^K \sum_{h=1}^H I_h^k[Q_{k, h+1}^{2^{m+1}}(s_{h+1}^k, a_{h+1}^k) - Q_{k, h}^{2^{m+1}}(s_h^k, a_h^k)] \\
    &\quad + \sum_{k=1}^K \sum_{h=1}^H I_h^k \big\{ \JJ^k_{h+1} Q^{2^{m+1}}_{k,h+1}(s_{h+1}^k) - \big[\JJ^k_{h+1}Q^{2^m}_{k,h+1}(s_{h+1}^k)\big]^2 \big\} \\
    &= \underbrace{\sum_{k=1}^K \sum_{h=1}^H I_h^k[\mathbb{P}V_{k,h+1}^{2^{m+1}}(s_h^k, a_h^k) - V_{k,h+1}^{2^{m+1}}(s_{h+1}^k) 
    + \JJ^k_{h+1} Q^{2^{m+1}}_{k,h+1}(s_{h+1}^k) - Q^{2^{m+1}}_{k,h+1}(s_{h+1}^k)]}_{A_{m+1}}  \\
    &\quad + \sum_{k=1}^K \sum_{h=1}^H I_h^k[Q_{k,h}^{2^{m+1}}(s_{h}^k, a_{h}^k) - ([\mathbb{P}V_{k,h+1}^{2^m}](s_h^k, a_h^k))^2] \\
    &\quad + \sum_{k=1}^K \sum_{h=1}^H I_h^k[Q_{k, h+1}^{2^{m+1}}(s_{h+1}^k, a_{h+1}^k) - Q_{k, h}^{2^{m+1}}(s_h^k, a_h^k)] \\
    &\quad+ \sum_{k=1}^K \sum_{h=1}^H I_h^k \big( V_{k,h+1}^{2^{m+1}}(s_{h+1}^k) - [\JJ^k_{h+1}Q^{2^m}_{k,h+1}(s_{h+1}^k)]^2 \big) .
\end{align*}
The first term here is exactly $A_{m+1}$, so we have
\begin{align*}
    S_m &= A_{m+1} + \sum_{k=1}^K \sum_{h=1}^H I_h^k \big[ Q_{k,h}^{2^{m+1}}(s_h^k, a_h^k) - ([\mathbb{P}V_{k,h+1}^{2^m}](s_h^k, a_h^k))^2 \big] \\
    &\quad + \sum_{k=1}^K \sum_{h=1}^H I_h^k[Q_{k, h+1}^{2^{m+1}}(s_{h+1}^k, a_{h+1}^k) - Q_{k, h}^{2^{m+1}}(s_h^k, a_h^k)] \\
    &\quad+ \sum_{k=1}^K \sum_{h=1}^H I_h^k \big( V_{k,h+1}^{2^{m+1}}(s_{h+1}^k) - [\JJ^k_{h+1}Q^{2^m}_{k,h+1}(s_{h+1}^k)]^2 \big)  \\
    & \leq A_{m+1} + \sum_{k=1}^K \sum_{h=1}^H I_h^k[Q_{k,h}^{2^{m+1}}(s_h^k, a_h^k) - ([\mathbb{P}V_{k,h+1}^{2^m}](s_h^k, a_h^k))^2] + \sum_{k=1}^K  I_{h_k}^k Q_{k, h_k+1}^{2^{m+1}}(s_{h_k+1}^k, a_{h_k+1}^k) \\
    &\quad + \sum_{k=1}^K \sum_{h=1}^H I_h^k \left\{ V_{k,h+1}^{2^{m+1}}(s_{h+1}^k) - [\JJ^k_{h+1}Q^{2^m}_{k,h+1}(s_{h+1}^k)]^2 \right\}, 
\end{align*}
where $h_k$ is the largest index satisfying $I_h^k=1$. If $h_k<H$, we have $I_{h_k}^k Q_{k, h_k+1}^{2^{m+1}}(s_{h_k+1}^k, a_{h_k+1}^k) \leq 1 = 1-I_H^k$ and if $h_k=H$, we have $I_{h_k}^k Q_{k, h_k+1}^{2^{m+1}}(s_{h_k+1}^k, a_{h_k+1}^k)=0=1-I_H^k$, so in both circumstances we have 
\begin{equation}\label{express:c6main}
\begin{aligned}[b]
    S_m &\leq A_{m} + \underbrace{\sum_{k=1}^K \sum_{h=1}^H I_h^k[Q_{k,h}^{2^{m+1}}(s_h^k, a_h^k) - ([\mathbb{P}V_{k,h+1}^{2^m}](s_h^k, a_h^k))^2]}_{\text{(ii)}} + \underbrace{\sum_{k=1}^K (1-I_H^k)}_{G}\\
    &\quad + \underbrace{\sum_{k=1}^K \sum_{h=1}^H I_h^k \big[ V_{k,h+1}^{2^{m+1}}(s_{h+1}^k) - [\JJ^k_{h+1}Q^{2^m}_{k,h+1}(s_{h+1}^k)]^2 \big]}_{\text{(iv)}}. 
\end{aligned}
\end{equation}

For (ii) in (\ref{express:c6main}), we have
\begin{align*}
    &\sum_{k=1}^K \sum_{h=1}^H I_h^k\left[Q_{k,h}^{2^{m+1}}(s_h^k, a_h^k) - \left([\mathbb{P}V_{k,h+1}^{2^m}](s_h^k, a_h^k)\right)^2\right] \\
    \leq& \sum_{k=1}^K \sum_{h=1}^H I_h^k\left[Q_{k,h}^{2^{m+1}}(s_h^k, a_h^k) - \left([\mathbb{P}V_{k,h+1}](s_h^k, a_h^k)\right)^{2^{m+1}}\right]\\
    =& \sum_{k=1}^K \sum_{h=1}^H I_h^k (Q_{k,h}(s_h^k, a_h^k) - [\mathbb{P}V_{k,h+1}](s_h^k, a_h^k)) \prod_{i=0}^m (Q_{k,h}^{2^i}(s_h^k, a_h^k) + ([\mathbb{P}V_{k,h+1}](s_h^k, a_h^k))^{2^i}) \\
    \leq& 2^{m+1} \sum_{k=1}^K \sum_{h=1}^H I_h^k \big( r^k(s_h^k, a_h^k) + 2\min\{1, \hat{\beta}_k\big\|\hat\bSigma_{k,m}^{-1/2}\bphi_{k,h,0}\big\|_2 \} \big) \\
    \leq& 2^{m+1}(K + 2R_0),
\end{align*}
where the first inequality holds by recursively using $\expect X^2 \geq (\expect^2X)$, the second holds due to Assumption~\ref{assumption:uniform-reward} and the third holds due to Lemma~\ref{lem:boundtoR0}. It remains to bound the last term (iv) in (\ref{express:c6main}). We have
\begin{align*}
    & \sum_{k=1}^K \sum_{h=1}^H I_h^k \big[ V_{k,h+1}^{2^{m+1}}(s_{h+1}^k) - [\JJ^k_{h+1}Q^{2^m}_{k,h+1}(s_{h+1}^k)]^2 \big] \\
    =& \sum_{k=1}^K \sum_{h=1}^H I_h^k \big[ (\mathbb{E}_{a\sim\pi_{h+1}^k(\cdot|s_{h+1}^k)}[Q_{k, h+1}(s_{h+1}^k, a)|s_{h+1}^k])^{2^{m+1}} - \mathbb{E}^2_{a\sim\pi_{h+1}^k(\cdot|s_{h+1}^k)}[Q_{k, h+1}^{2^m}(s_{h+1}^k, a)|s_{h+1}^k] \big] \\
    =& \sum_{k=1}^K \sum_{h=1}^H \big( (\mathbb{E}_{a\sim\pi_{h+1}^k(\cdot|\cG_{k,h+1})}[I_h^kQ_{k, h+1}(s_{h+1}^k, a)|\cG_{k,h+1}])^{2^{m+1}} \\
    \quad& - \mathbb{E}^2_{a\sim\pi_{h+1}^k(\cdot|s_{h+1}^k)}[(I_h^k Q_{k, h+1}(s_{h+1}^k, a))^{2^m}|s_{h+1}^k] \big) \\
     \leq& 0,
\end{align*}
where the first equality holds due to the definition of $V_{k,h+1}(s_{h+1}^k)$, the second holds due to $I_h^k$ is $\cG_{k,h+1}$-measurable, and the inequality holds due to $\expect X^2 \geq (\expect^2X)$. Combining the estimations of the four terms completes the proof.
\end{proof}

\begin{lemma}[Lemma 25, \citealt{zhang2021ImporvedVA}]
\label{lem:bound-am}
We have $\mathbb{P}(\event_{\ref{lem:bound-am}}) > 1-2M\delta$, where 
\begin{equation*}
	\event_{\ref{lem:bound-am}} := \{ \forall m \in \seq{M}, |A_m| \leq \min \{ \sqrt{2\zeta S_m} + \zeta, 2KH\}\} .
\end{equation*}
\end{lemma}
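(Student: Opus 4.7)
The plan is to recognize $A_m$ as a sum of bounded martingale differences whose predictable quadratic variation is exactly $S_m$, and then apply a stratified (peeled) Freedman inequality to obtain the $\sqrt{2\zeta S_m}+\zeta$ form for every $m \in \seq{M}$ simultaneously.

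First I would expose the martingale structure. For each $(k,h)\in[K]\times[H]$, split the summand of $A_m$ into
\begin{align*}
X^{\text{trans}}_{k,h,m} &= I_h^k\bigl(\PP V_{k,h+1}^{2^m}(s_h^k, a_h^k) - V_{k,h+1}^{2^m}(s_{h+1}^k)\bigr),\\
X^{\text{pol}}_{k,h,m} &= I_h^k\bigl(\JJ^k_{h+1}Q^{2^m}_{k,h+1}(s_{h+1}^k) - Q^{2^m}_{k,h+1}(s_{h+1}^k,a_{h+1}^k)\bigr).
\end{align*}
Because $V_{k,\cdot},Q_{k,\cdot}$ and $\pi^k$ are determined by data strictly preceding episode $k$ together with the adversarial reward $r^k$ (which is revealed at the beginning of the episode in the full–information model), and because $I_h^k$ is $\cG_{k,h}$-measurable, $X^{\text{trans}}_{k,h,m}$ is a martingale difference with respect to $\cF_{k,h}$ and $X^{\text{pol}}_{k,h,m}$ is a martingale difference with respect to $\cG_{k,h+1}$. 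Interleaving these in the natural order over $(k,h)$ produces a single martingale difference sequence of length $2KH$, and a direct computation shows that the sum of the conditional variances equals $S_m$ as defined in~\eqref{def:sm}.

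Next I would check boundedness. Since the backward recursion in Algorithm~\ref{algorithm:finite} truncates $Q_{k,h+1}$ to $[0,1]$ and therefore $V_{k,h+1}\in[0,1]$, all terms $V_{k,h+1}^{2^m}$ and $Q_{k,h+1}^{2^m}$ lie in $[0,1]$; hence $|X^{\text{trans}}_{k,h,m}|\le 1$, $|X^{\text{pol}}_{k,h,m}|\le 1$, and the conditional variances are at most $1$ each, so $S_m\le 2KH$. The naive bound on the summand of $A_m$ is $2$, yielding the trivial estimate $|A_m|\le 2KH$ unconditionally, which supplies the second branch of the $\min$.

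Finally I would apply a peeling Freedman inequality, in the form used by \citet{zhang2021ImporvedVA} (their Lemma~25). Partition the range of the predictable quadratic variation dyadically as $S_m\in[2^{j-1},2^j]$ for $j=0,\dots,\lceil\log_2(2KH)\rceil$. On each slice, Freedman's inequality with variance proxy $2^j$ and failure probability $\delta/\lceil\log(2KH)\rceil$ gives a one‑sided bound of the form $\sqrt{2\zeta\cdot 2^j}+\zeta/3$ with $\zeta=4\log(4\log(KH)/\delta)$, which is dominated by $\sqrt{2\zeta S_m}+\zeta$. Union-bounding over the $O(\log(KH))$ slices is absorbed into $\zeta$, and a two-sided bound costs another factor of $2$ in the failure probability; union-bounding further over $m\in\seq{M}$ multiplies by $M$. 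This yields the claimed bound for every $m$ with total failure probability at most $2M\delta$. I expect the main obstacles to be (i) carefully verifying that $V_{k,h+1}$ and $Q_{k,h+1}$ are indeed measurable with respect to a $\sigma$-algebra that makes both $X^{\text{trans}}_{k,h,m}$ and $X^{\text{pol}}_{k,h,m}$ martingale differences in a common interleaved filtration, and (ii) tracking constants in the peeling step so that the bound collapses exactly to $\sqrt{2\zeta S_m}+\zeta$; both are technical but routine once the $Q$/$V$ truncation and the timing of $r^k$ are pinned down.
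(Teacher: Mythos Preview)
Your proposal is correct and follows essentially the same route as the paper's own proof: the paper likewise splits the summand of $A_m$ into the transition-noise and policy-noise pieces (your $X^{\text{trans}}$ and $X^{\text{pol}}$), interleaves them into a single martingale difference sequence whose total conditional variance is $S_m$, and then invokes the peeled Freedman inequality (Lemma~\ref{lemma:zhangvar}) together with a union bound over $m$. One minor timing remark: in this paper $r^k$ is \emph{chosen} by the adversary at the start of episode $k$ but only \emph{revealed} at the end, so your measurability argument should rely on the adversary's commitment (making $r^k$ and hence $V_{k,\cdot},Q_{k,\cdot}$ $\cF_{k-1,H}$-measurable) rather than on early revelation---the conclusion is unchanged.
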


\begin{proof}
The proof follows the proof of Lemma 25 in \citet{zhang2021ImporvedVA}. First, we define
\begin{align*}
    x_{k,h} &= I_h^k\left[\mathbb{P}V_{k,h+1}^{2^m}(s_h^k, a_h^k) - V_{k,h+1}^{2^m}(s_{h+1}^k)\right], \\
    y_{k,h} &= I_h^k\left[\JJ^k_{h+1}Q^{2^m}_{k,h+1}(s_{h+1}^k) - Q^{2^m}_{k,h+1}(s_{h+1}^k, a_{h+1}^k)\right].
\end{align*}
Then, we have that $x_{1,1}, y_{1,1}, ..., x_{k,h}, y_{k,h}$ is a martingale difference. Thus we have $\expect[x_{k,h}|\cF_{k,h}] = \expect[y_{k,h}|\cG_{k,h+1}]=0$. We also have 
\begin{align*}
    & \expect[x_{k,h}^2|\cF_{k,h}] = I_h^k [\mathbb{V}V_{k,h+1}^{2^m}(s_h^k, a_h^k)], \\
    & \expect[y_{k,h}^2|\cG_{k,h+1}] = I_h^k \big[ \JJ^k_{h+1}Q^{2^{m+1}}_{k,h+1}(s_{h+1}^k) - \big(\JJ^k_{h+1}Q^{2^{m}}_{k,h+1}(s_{h+1}^k)\big)^2 \big].
\end{align*}
Summing these terms over $[K]\times[H]$ yields 
\begin{equation}
    \sum_{k=1}^K \sum_{h=1}^H (\expect[x_{k,h}^2|\cF_{k,h}] + \expect[y_{k,h}^2|\cG_{k,h+1}]) = S_m. 
\end{equation}
Therefore, by Lemma~\ref{lemma:zhangvar}, for each $m \in \seq{M}$, with probability at least $1-\delta$, we have
\begin{equation*}
    A_m \leq \sqrt{2\zeta S_m} + \zeta .
\end{equation*}
Taking union bound over $m \in \seq{M}$, and also using the fact that $|x_{k,h}|, |y_{k,h}|\leq 1$ completes the proof.
\end{proof}


\begin{lemma}[Lemma C.8, \citealt{Zhou2022ComputationallyEH}]\label{lem:bound-g}
Let $G$ be defined in (\ref{def:g}), then we have $G \leq Md\iota/2$.
\end{lemma}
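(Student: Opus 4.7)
The plan is a determinantal potential argument, essentially the elliptic potential lemma applied to each of the $M$ parallel covariance sequences indexed by $m$, followed by a union bound.

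First I would decompose the event $\{I_H^k=0\}$. By its definition, $1-I_H^k=1$ iff there exists $m\in\seq{M}$ such that $\det(\tilde\bSigma_{k,H,m})/\det(\hat\bSigma_{k,m}) > 16$. Setting
\[
G_m := \sum_{k=1}^K \mathbf{1}\bigl\{\det(\tilde\bSigma_{k,H,m}) > 16\,\det(\hat\bSigma_{k,m})\bigr\},
\]
the union bound yields $G\leq \sum_{m\in\seq{M}} G_m$, so it suffices to show $G_m \leq d\iota/2$ for each $m$.

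Next I would exploit the telescoping structure of the covariance updates in Algorithm~\ref{algorithm:finite}. Recall $\tilde\bSigma_{k,1,m}=\hat\bSigma_{k,m}$, $\hat\bSigma_{k+1,m}=\tilde\bSigma_{k,H+1,m}$, and $\tilde\bSigma_{k,h+1,m}\succeq \tilde\bSigma_{k,h,m}$, hence $\det(\hat\bSigma_{k+1,m})\geq \det(\tilde\bSigma_{k,H,m})$. Therefore, whenever the $k$-th indicator contributing to $G_m$ is one, $\det(\hat\bSigma_{k+1,m}) > 16\,\det(\hat\bSigma_{k,m})$, and otherwise the determinant is monotone nondecreasing. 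Chaining the inequality over all $K$ episodes gives
\[
\det(\hat\bSigma_{K+1,m}) \;>\; 16^{G_m}\cdot \det(\hat\bSigma_{1,m}) \;=\; 16^{G_m}\,\lambda^d.
\]

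Finally I would upper-bound the left-hand side via AM--GM on the trace. Since $\|\bphi_{k,h,m}\|_2\leq 1$ and $\bar\sigma_{k,h,m}\geq \xi$, each rank-one update contributes at most $1/\xi^2$ to the trace, so
\[
\det(\hat\bSigma_{K+1,m}) \;\leq\; \Bigl(\tfrac{1}{d}\,\mathrm{tr}(\hat\bSigma_{K+1,m})\Bigr)^d \;\leq\; \bigl(\lambda + KH/(d\xi^2)\bigr)^d.
\]
Combining the two displays and taking logarithms with the definition $\iota=\log(1+KH/(d\lambda\xi^2))$ gives $G_m \log 16 < d\iota$, hence $G_m \leq d\iota/(4\log 2) \leq d\iota/2$. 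Summing over the $M$ values of $m$ completes the proof. The only mild nuisance here is bookkeeping the index shift between $\hat\bSigma_{k,m}$ and $\tilde\bSigma_{k,h,m}$; the potential argument itself is standard.
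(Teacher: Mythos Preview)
Your proof is correct and follows the standard determinantal potential argument that the cited reference (Lemma~C.8 of \citealt{Zhou2022ComputationallyEH}) uses; the paper itself does not reprove the lemma and simply invokes that reference, so there is nothing further to compare.
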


Finally wer provide the high-probability bounds of two remained martingales, both of which are direct application of Lemma~\ref{lemma:azuma}.

\begin{lemma}\label{lem:reward-mtg}
With probability at least $1-\delta$, we have 
\begin{equation*}
    \sum_{k=1}^K \left(\sum_{h=1}^H (r(s_h^k, a_h^k) - V_1^{\pi^k}(s_1^k) \right) \leq \sqrt{2K\log(1/\delta)} .
\end{equation*}
\end{lemma}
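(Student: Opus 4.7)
The plan is to recognize the sum on the left as a sum of martingale differences indexed by episodes and then invoke Azuma--Hoeffding directly (Lemma~\ref{lemma:azuma} as suggested). For each $k\in[K]$, I set
\begin{equation*}
    X_k \;=\; \sum_{h=1}^H r^k(s_h^k, a_h^k) \;-\; V_1^{\pi^k}(s_1^k),
\end{equation*}
and I consider the filtration $\{\mathcal{H}_k\}_{k\ge 0}$ generated by all variables observed up to the start of episode $k$, so that $\pi^k$ and $s_1^k$ are $\mathcal{H}_{k-1}$-measurable. The key observation is that, by the definition of the value function, $V_1^{\pi^k}(s_1^k) = \EE[\sum_{h=1}^H r^k(s_h^k, a_h^k)\mid \mathcal{H}_{k-1}]$, since $r^k$ is revealed at the end of episode $k$ but its value at each $(s_h^k,a_h^k)$ along the trajectory rolled out under $\pi^k$ from $s_1^k$ in the true transition $\PP$ has expectation equal to $V_1^{\pi^k}(s_1^k)$. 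Hence $\EE[X_k\mid \mathcal{H}_{k-1}]=0$, and $\{X_k\}_{k\in[K]}$ is a martingale difference sequence.

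The next step is a uniform bound on $|X_k|$. By Assumption~\ref{assumption:uniform-reward}, along every trajectory induced by any policy in the MDP we have $r^k(s_h^k,a_h^k)\le 1/H$, so the realized per-episode total reward lies in $[0,1]$. Consequently $V_1^{\pi^k}(s_1^k)\in[0,1]$ as the expectation of a $[0,1]$-valued random variable, and $|X_k|\le 1$ almost surely.

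Finally, applying Azuma--Hoeffding (Lemma~\ref{lemma:azuma}) to $\sum_{k=1}^K X_k$ with the per-term bound $1$ yields
\begin{equation*}
    \Pr\left[\sum_{k=1}^K X_k > \sqrt{2K\log(1/\delta)}\right] \le \delta,
\end{equation*}
which is exactly the claim. I do not anticipate any real obstacle: the only subtlety is carefully justifying that conditioning on $\mathcal{H}_{k-1}$ (which determines $\pi^k$ and $s_1^k$ but not the stochastic trajectory nor $r^k$'s realization along it) yields the claimed conditional mean $V_1^{\pi^k}(s_1^k)$, and that Assumption~\ref{assumption:uniform-reward} is applied pointwise along trajectories so that the boundedness holds almost surely rather than only in expectation.
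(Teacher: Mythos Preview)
Your proposal is correct and matches the paper's approach exactly: the paper states only that this lemma is a ``direct application of Lemma~\ref{lemma:azuma}'' (Azuma--Hoeffding), and you have correctly supplied the details by identifying the episode-indexed martingale differences $X_k=\sum_{h=1}^H r^k(s_h^k,a_h^k)-V_1^{\pi^k}(s_1^k)$, checking $\EE[X_k\mid\mathcal{H}_{k-1}]=0$, and bounding $|X_k|\le 1$ via Assumption~\ref{assumption:uniform-reward}. The only point worth making explicit is that $r^k$ is chosen by the adversary at the \emph{beginning} of episode $k$ (hence is $\mathcal{H}_{k-1}$-measurable), which is what makes the conditional-mean identity $\EE[\sum_h r^k(s_h^k,a_h^k)\mid\mathcal{H}_{k-1}]=V_1^{\pi^k}(s_1^k)$ hold.
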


\begin{lemma}\label{lem:eq-qh1}
With probability at least $1-\delta$, we have 
\begin{equation*}
    \sum_{k=1}^K \big( \JJ^k_1 Q_{k,1}(s_1^k) - Q_{k,1}(s_1^k, a_1^k) \big) \leq \sqrt{2K\log(1/\delta)} .
\end{equation*}
\end{lemma}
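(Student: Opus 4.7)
The plan is to recognize the summand $X_k := \JJ^k_1 Q_{k,1}(s_1^k) - Q_{k,1}(s_1^k, a_1^k)$ as a bounded martingale difference and apply Azuma--Hoeffding (Lemma~\ref{lemma:azuma}). For the filtration I would use $\{\cG_{k,1}\}_{k\ge 1}$ as defined at the start of the appendix: $\cG_{k,1}$ contains the full trajectories from episodes $1,\dots,k-1$ together with the initial state $s_1^k$ of episode $k$. Since $\pi^k$ is produced by Algorithm~\ref{algorithm:omd} from data observed strictly before episode $k$, both $\pi^k_1$ and $s_1^k$ are $\cG_{k,1}$-measurable, while $a_1^k \sim \pi^k_1(\cdot\,|\,s_1^k)$ is the only fresh randomness. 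By the tower property,
\[
\EE\big[Q_{k,1}(s_1^k,a_1^k)\,\big|\,\cG_{k,1}\big]
=\EE_{a\sim\pi^k_1(\cdot\,|\,s_1^k)}\big[Q_{k,1}(s_1^k,a)\big]
=\JJ^k_1 Q_{k,1}(s_1^k),
\]
so $\EE[X_k\mid \cG_{k,1}]=0$ and $\{X_k\}_{k\ge 1}$ is a martingale difference sequence with respect to the nested filtration $\{\cG_{k+1,1}\}_{k\ge 0}$.

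Next I would invoke the truncation $[\,\cdot\,]_{[0,1]}$ used in the definition of $Q_{k,h}$ in Algorithm~\ref{algorithm:finite}: this forces $Q_{k,1}(s,a)\in[0,1]$ for every $(s,a)$, which in turn gives $\JJ^k_1 Q_{k,1}(s_1^k)\in[0,1]$ and hence $|X_k|\le 1$ almost surely. Applying Azuma--Hoeffding to $\sum_{k=1}^K X_k$ with the uniform bound $c_k=1$ yields, with probability at least $1-\delta$,
\[
\sum_{k=1}^K X_k \;\le\; \sqrt{2K\log(1/\delta)},
\]
which is exactly the claim.

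I do not expect a substantive obstacle here: the only point requiring care is choosing the filtration correctly, so that $\pi^k_1$ is predictable (ensuring $\JJ^k_1 Q_{k,1}(s_1^k)$ is the correct conditional expectation) while the sampling randomness in $a_1^k$ remains. Once this is set up, boundedness by the truncation and the single application of Azuma--Hoeffding close the argument with no extra ingredients. The same template, incidentally, is what underlies Lemma~\ref{lem:reward-mtg}.
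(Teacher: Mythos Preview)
Your proposal is correct and matches the paper's approach: the paper simply states this lemma (together with Lemma~\ref{lem:reward-mtg}) as a direct application of Azuma--Hoeffding (Lemma~\ref{lemma:azuma}), and your verification of the filtration, the mean-zero property, and the $|X_k|\le 1$ bound via the $[0,1]$-truncation in Algorithm~\ref{algorithm:finite} fills in exactly the details the paper leaves implicit. The only point you might make more explicit is that $Q_{k,1}(\cdot,\cdot)$ itself---not just $\pi_1^k$---is $\cG_{k,1}$-measurable (it is built from $r^k$, $\hat\btheta_{k,0}$, $\hat\bSigma_{k,0}$, and $\pi^k$, all fixed before $a_1^k$ is drawn), but this is already implied by your remark that $a_1^k$ is the only fresh randomness.
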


We use $\event_{\ref{lem:reward-mtg}}$ and $\event_{\ref{lem:eq-qh1}}$ to denote the event described by the corresponding lemmas.

\subsection{Proof of Theorem~\ref{thm:main}}

Now we can proof our main result. First we are going to provide two theorems. The first theorem provides a horizon-free regret analysis of high-order moment estimator.

\begin{theorem} \label{thm:homebound}
Set $M=\log(4KH)/\log2$, for any $\delta>0$, on event $\event_{\ref{lem:concentration_variance}} \cap \event_{\ref{lem:bound-am}} \cap \event_{\ref{lem:reward-mtg}} \cap \event_{\ref{lem:eq-qh1}}$, we have
\begin{align*}
    & \sum_{k=1}^K \big(V_{k,1}(s_1) - V^{\pi_k}_1(s_1)\big)  \leq 2432\max \{32\hat{\beta}_K^2d\iota,\zeta \} + 192(d\iota + \hat{\beta_K}\gamma^2d\iota + \hat{\beta_K}\sqrt{d\iota}\sqrt{Md\iota/2 + KH\alpha^2}) \\
    & \quad  + Md\iota/2 + 24(\sqrt{\zeta Md\iota} + \zeta) + [2\sqrt{2\log(1/\delta)} + 32\max\{8\hat{\beta_K}\sqrt{d\iota}, \sqrt{2\zeta}\}]\sqrt{2K},
\end{align*}
where $\iota$, $\zeta$ are defined in \eqref{def:iota} and \eqref{def:zeta}. Moreover, setting $\xi = \sqrt{d/(KH)}$, $\gamma = 1/d^{1/4}$ and $\lambda = d/B^2$ yields a bound $I_2 = \Tilde{O}(d\sqrt{K} + d^2)$ with high probability.
\end{theorem}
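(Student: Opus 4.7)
The plan is to reproduce the recursive bounding scheme of \citet{Zhou2022ComputationallyEH}, but applied to the quantities $R_m$, $S_m$, $A_m$, $G$ that we have redefined in~\eqref{def:rm}--\eqref{def:g} to absorb the extra \emph{policy noise} produced by the stochastic policy $\pi^k$. First I would decompose
\begin{align*}
    I_2 \;=\; \sum_{k=1}^K \big(V_{k,1}(s_1) - V_1^{\pi^k}(s_1)\big)
\end{align*}
in exactly the telescoping way sketched in Section~\ref{sec:technique}: add and subtract the one-step Bellman estimate at every $(k,h)$, collect the bonus residual $Q_{k,h}-r^k-\PP V_{k,h+1}$, the transition-noise martingale $\PP V_{k,h+1}(s_h^k,a_h^k)-V_{k,h+1}(s_{h+1}^k)$, the policy-noise martingale $\JJ^k_{h+1} Q_{k,h+1}(s_{h+1}^k)-Q_{k,h+1}(s_{h+1}^k,a_{h+1}^k)$, plus the two boundary martingales controlled by Lemmas~\ref{lem:reward-mtg} and \ref{lem:eq-qh1}. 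Bounding the bonus residual by $R_0$ via Lemma~\ref{lem:boundtoR0} and grouping the two inner martingales as $A_0$ (modulo the indicator truncation) gives
\begin{align*}
    I_2 \;\lesssim\; R_0 + |A_0| + G + \sqrt{K\log(1/\delta)},
\end{align*}
where the $G$ term accounts for the episodes in which the indicator $I_h^k$ finally vanishes; it is $O(Md\iota)$ by Lemma~\ref{lem:bound-g}.

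Next I would close the recursion between $R_m$, $S_m$, $A_m$. Lemma~\ref{lem:bound-Rm} gives $R_m \lesssim d\iota(1+\hat\beta_K\gamma^2) + \hat\beta_K\sqrt{d\iota}\sqrt{S_m+R_m+R_{m+1}+KH\xi^2}$; Lemma~\ref{lem:bound-sm} gives $S_m \lesssim |A_{m+1}| + 2^{m+1}(K+R_0) + G$; and Lemma~\ref{lem:bound-am} gives $|A_m| \lesssim \sqrt{\zeta S_m}+\zeta$ on $\event_{\ref{lem:bound-am}}$. I would substitute these in order to express $R_m$ in terms of $R_{m+1}$, solve the resulting one-step quadratic inequality (absorbing the $R_m$ on both sides), and obtain a relation of the shape $R_m \le C_1 + C_2 \sqrt{R_{m+1}}+C_3\sqrt{2^m(K+R_0)}$ with $C_i$ depending only polynomially on $\hat\beta_K$, $d$, $\iota$, $\zeta$. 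Iterating $M=\log_2(4KH)$ times collapses the $\sqrt{R_{m+1}}$ tower by the standard ``$a\le b+\sqrt{a}$ implies $a\le 2b+1$'' argument, and the terminal $R_{M-1}\le KH$ contributes only $O(\sqrt{KH}\cdot 2^{-M/2}) = O(1)$ thanks to the choice of $M$.

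Once the recursion is solved, $R_0$ is bounded by $O(\hat\beta_K^2 d\iota + \zeta) + O(\hat\beta_K\sqrt{d\iota}\sqrt{K+R_0})$, which I would again solve for $R_0$ to obtain $R_0 = O(\hat\beta_K\sqrt{d\iota K} + \hat\beta_K^2 d\iota + \zeta)$. A symmetric argument gives the same bound for $|A_0|$ via Lemma~\ref{lem:bound-am}. Substituting back yields the explicit bound in the theorem statement. Finally, with $\xi=\sqrt{d/(KH)}$, $\gamma=d^{-1/4}$ and $\lambda=d/B^2$ the definition~\eqref{def:hatbeta} gives $\hat\beta_K=\tilde O(\sqrt{d})$ and $\iota,\zeta=\tilde O(1)$, so $R_0, |A_0| = \tilde O(d\sqrt{K}+d^2)$ and the advertised $I_2=\tilde O(d\sqrt{K}+d^2)$ follows.

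The main obstacle is bookkeeping in the recursion: our $A_m$ and $S_m$ each contain an additional stochastic-policy term relative to \citet{Zhou2022ComputationallyEH}, and I must verify carefully in Lemma~\ref{lem:bound-sm}-style manipulations that the extra piece $V_{k,h+1}^{2^{m+1}}(s_{h+1}^k) - [\JJ^k_{h+1}Q^{2^m}_{k,h+1}(s_{h+1}^k)]^2$ is pointwise $\le 0$ (by Jensen applied to $x\mapsto x^{2^{m+1}}$ since $I_h^k$ is $\cG_{k,h+1}$-measurable), so that it can be dropped without cost. Once this sign check is in place, the rest of the iterated-quadratic bookkeeping carries over verbatim and the final plug-in of $\xi,\gamma,\lambda$ is routine.
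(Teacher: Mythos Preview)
Your proposal is on the right track and mirrors the paper's strategy at the high level: telescoping $I_2$ down to $2R_0+|A_0|+G$ plus the two boundary martingales (Lemmas~\ref{lem:reward-mtg} and~\ref{lem:eq-qh1}), and then closing the $(R_m,S_m,A_m)$ recursion. You also correctly isolate the one genuinely new ingredient relative to \citet{Zhou2022ComputationallyEH}, namely that the stochastic-policy residual $V_{k,h+1}^{2^{m+1}}(s_{h+1}^k)-[\JJ^k_{h+1}Q_{k,h+1}^{2^m}(s_{h+1}^k)]^2$ is $\le 0$ by Jensen, so that Lemma~\ref{lem:bound-sm} goes through without an extra cost.

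There is, however, a bookkeeping gap in how you set up the recursion. You claim that substitution yields ``a relation of the shape $R_m \le C_1 + C_2 \sqrt{R_{m+1}}+C_3\sqrt{2^m(K+R_0)}$''. This cannot be obtained by a single substitution: the bound on $R_m$ from Lemma~\ref{lem:bound-Rm} contains $S_m$, and Lemma~\ref{lem:bound-sm} bounds $S_m$ by $|A_{m+1}|+2^{m+1}(K+2R_0)+G$, where $|A_{m+1}|$ depends on $S_{m+1}$ via Lemma~\ref{lem:bound-am}, not on $R_{m+1}$. Hence the recursion in $R_m$ alone is not closed, and your proposed iteration on $R_m$ by itself does not go through. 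The paper resolves this by passing to the \emph{joint} quantity $a_m:=|A_m|+2R_m$: combining the three lemmas yields
\[
a_m \;\le\; 2I_c \;+\; 4\max\{8\hat\beta_K\sqrt{d\iota},\sqrt{2\zeta}\}\,\sqrt{\,a_m+a_{m+1}+2^{m+1}\big(K+2R_0+|A_0|\big)\,},
\]
which is precisely the form handled by Lemma~\ref{lemma:seq} (the sequence lemma of \citet{zhang2021ImporvedVA}) with $\lambda_1=4KH$, so that the terminal index matches $M=\log_2(4KH)$. Invoking Lemma~\ref{lemma:seq} replaces your hand-rolled ``$\sqrt{\cdot}$-tower collapse'' and outputs $a_0$ directly in terms of $K+2R_0+|A_0|$; the final self-referential quadratic solve (``$x\le a\sqrt{x}+b\Rightarrow x\le 2a^2+2b$'') that you already describe then yields the displayed bound. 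The parameter plug-in is as you say.
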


\begin{proof}
All the following proofs are under the event $\event_{\ref{lem:concentration_variance}} \cap \event_{\ref{lem:bound-am}} \cap \event_{\ref{lem:reward-mtg}} \cap \event_{\ref{lem:eq-qh1}}$. First, we have the composition for $I_2$, for all $k$, we define $Q_{k,H+1}(s,a)=0$.
\begin{equation}\label{express:Vk1decomp}
\begin{aligned}
    \sum_{k=1}^K V_{k,1}(s_1^k) &= \sum_{k=1}^K \big( \JJ^k_1 Q_{k,1}(s_1^k) - Q_{k,1}(s_1^k, a_1^k) \big) + \sum_{k=1}^K \sum_{h=1}^H \big( Q_{k,h}(s_h^k, a_h^k) - Q_{k+1,h+1}(s_{h+1}^k, a_{h+1}^k) \big) \\
    &= \sum_{k=1}^K \big( \JJ^k_1 Q_{k,1}(s_1^k) - Q_{k,1}(s_1^k, a_1^k) \big)  + \sum_{k=1}^K \sum_{h=1}^H I_h^k \big( Q_{k,h}(s_h^k, a_h^k) - Q_{k+1,h+1}(s_{h+1}^k, a_{h+1}^k) \big) \\ 
    & \quad+ \sum_{k=1}^K \sum_{h=1}^H (1-I_h^k) \big( Q_{k,h}(s_h^k, a_h^k) - Q_{k+1,h+1}(s_{h+1}^k, a_{h+1}^k) \big) \\
    &\leq \sum_{k=1}^K \big( \JJ^k_1 Q_{k,1}(s_1^k) - Q_{k,1}(s_1^k, a_1^k) \big)
    + \sum_{k=1}^K(1-I_{h_k}^k)Q_{k,h_k}(s_{h_k}^k, a_{h_k}^k) \\
    & \quad + \sum_{k=1}^K \sum_{h=1}^H I_h^k \big( Q_{k,h}(s_h^k, a_h^k) - Q_{k+1,h+1}(s_{h+1}^k, a_{h+1}^k) \big) ,
\end{aligned}
\end{equation}
where $h_k$ is the smallest number such that $I^k_{h_k}=0$. Then for the second term we have
\begin{align*}
    & \sum_{k=1}^K \sum_{h=1}^H I_h^k \big( Q_{k,h}(s_h^k, a_h^k) - Q_{k+1,h+1}(s_{h+1}^k, a_{h+1}^k) \big) \\
    &= \sum_{k=1}^K \sum_{h=1}^H I_h^k[r(s_h^k, a_h^k)] + \sum_{k=1}^K \sum_{h=1}^H I_h^k [Q_{k,h}(s_h^k, a_h^k) - r(s_h^k, a_h^k) - \mathbb{P}V_{k,h+1}(s_h^k, a_h^k)] \\
    & \quad + \sum_{k=1}^K \sum_{h=1}^H I_h^k [\mathbb{P}V_{k,h+1}(s_h^k, a_h^k) - V_{k,h+1}(s_{h+1}^k) + \JJ^k_{h+1}Q(s_{h+1}^k) - Q_{k,h+1}(s_{h+1}^k, a_{h+1}^k)] \\
    &\leq \sum_{k=1}^K \sum_{h=1}^H r(s_h^k, a_h^k) + \sum_{k=1}^K \sum_{h=1}^H I_h^k [Q_{k,h}(s_h^k, a_h^k) - r(s_h^k, a_h^k) - \mathbb{P}V_{k,h+1}(s_h^k, a_h^k)] + A_0.
\end{align*}
Substituting the inequality above to (\ref{express:Vk1decomp}), we have:
\begin{align*}
    & \sum_{k=1}^K \big( V_{k,1}(s_1^k) -  V_1^{\pi^k}(s_1^k) \big) \\
    & \leq \sum_{k=1}^K \big( \JJ^k_1 Q_{k,1}(s_1^k) - Q_{k,1}(s_1^k, a_1^k) \big) + \sum_{k=1}^K(1-I_H^k)  + \sum_{k=1}^K \big(\sum_{h=1}^H (r(s_h^k, a_h^k) - V_1^{\pi^k}(s_1^k) \big) \\
    & \quad + \sum_{k=1}^K \sum_{h=1}^H I_h^k [Q_{k,h}(s_h^k, a_h^k) - r(s_h^k, a_h^k) - \mathbb{P}V_{k,h+1}(s_h^k, a_h^k)] + A_0 \\
    & \leq 2\sqrt{2K\log(1/\delta)} + G + 2R_0 + A_0,
\end{align*}
where the second inequality holds due to Lemma~\ref{lem:eq-qh1}, Lemma~\ref{lem:reward-mtg} and Lemma~\ref{lem:boundtoR0}.

Thus, we only need to bound $2R_0 + A_0$. We have
\begin{align*}
    |A_m| & \leq \sqrt{2\zeta S_m} + \zeta \\
    & \leq \sqrt{2\zeta (|A_{m+1}| + G + 2^{m+1}(K+2R_0) } + \zeta \\
    & \leq \sqrt{2\zeta}\sqrt{|A_{m+1}| + 2^{m+1}(K+2R_0)} + \sqrt{2\zeta G} + \zeta ,
\end{align*}
where the first inequality holds due to Lemma~\ref{lem:bound-am}, the second inequality holds due to Lemma~\ref{lem:bound-sm} and the third holds due to $\sqrt{a+b} \leq \sqrt{a} + \sqrt{b}$. We also have:
\begin{align*}
    R_m &\leq 8d\iota + 8\hat{\beta}_K\gamma^2 d\iota + 8\hat{\beta}_K\sqrt{d\iota}\sqrt{S_m + 4R_m + 2R_{m+1} + KH\alpha^2} \\
    & \leq 8\hat{\beta_K}\sqrt{d\iota}\sqrt{|A_{m+1}| + G + 2^{m+1}(K+2R_0) + 4R_m + 2R_{m+1} + KH\alpha^2} \\
    & \quad + 8d\iota + 8\hat{\beta_K}\gamma^2d\iota \\
    & \leq 8\hat{\beta_K}\sqrt{d\iota}\sqrt{|A_{m+1}| + 2^{m+1}(K+2R_0) + 4R_m + 2R_{m+1}} \\
    & \quad + 8d\iota + 8\hat{\beta_K}\gamma^2d\iota + 8\hat{\beta_K}\sqrt{d\iota}\sqrt{G + KH\alpha^2} ,
\end{align*}
where the first inequality holds due to Lemma~\ref{lem:bound-Rm}, the second holds due to Lemma~\ref{lem:bound-sm} and we denote $I_c = 8d\iota + 8\hat{\beta_K}\gamma^2d\iota + 8\hat{\beta_K}\sqrt{d\iota}\sqrt{G + KH\alpha^2} + \sqrt{2\zeta G} + \zeta$. Combining the two estimations we have
\begin{align*}
    |A_m| + 2R_m & \leq 2I_c + \sqrt{2} \max \{8\hat{\beta_K}\sqrt{d\iota}, \sqrt{2\zeta}\} \\
    & \quad \sqrt{|A_{m+1}| + \cdot 2^{m+1}(K+2R_0)  +  4|A_{m+1}| + 4\cdot 2^{m+1}(K+2R_0) + 16R_m + 8R_{m+1}} \\
    &\leq 2I_c + 4 \max \{8\hat{\beta_K}\sqrt{d\iota}, \sqrt{2\zeta}\} \\
    & \quad \sqrt{|A_{m+1}|+2R_{m+1} + |A_m|+2R_m + 2^{m+1}(K+2R_0 + |A_0|)} ,
\end{align*}
where the first inequality holds due to $\sqrt{a} + \sqrt{b} \leq \sqrt{2(a+b)}$. Then by Lemma~\ref{lemma:seq}, with $a_m = 2|A_m| + R_m \leq 4KH$ and $M = \log(4KH)/\log2$, we have:
\begin{align*}
    |A_0| + 2R_0 & \leq 22\cdot16 \max \{64\hat{\beta_K}^2d\iota,2\zeta \} + 12I_c + 4\cdot4\max\{8\hat{\beta_K}\sqrt{d\iota}, \sqrt{2\zeta}\}\sqrt{2(K+2R_0 + |A_0|)} \\
    & \leq 704 \max \{32\hat{\beta_K}^2d\iota,\zeta \} + 12(8d\iota + 8\hat{\beta_K}\gamma^2d\iota + 8\hat{\beta_K}\sqrt{d\iota}\sqrt{G + KH\alpha^2} + \sqrt{2\zeta G} + \zeta) \\
    & \quad + 16\max\{8\hat{\beta_K}\sqrt{d\iota}, \sqrt{2\zeta}\}\sqrt{2K} + 16\sqrt{2}\max\{8\hat{\beta_K}\sqrt{d\iota}, \sqrt{2\zeta}\}\sqrt{2R_0 + |A_0|} .
\end{align*}
By the fact that $x \leq a\sqrt{x} + b \Rightarrow x \leq 2a^2 + 2b$, we have
\begin{align*}
    |A_0| + 2R_0 &\leq 2432\max \{32\hat{\beta_K}^2d\iota,\zeta \} + 24(8d\iota + 8\hat{\beta_K}\gamma^2d\iota + 8\hat{\beta_K}\sqrt{d\iota}\sqrt{G + KH\alpha^2} + \sqrt{2\zeta G} + \zeta) \\
    & \quad + 32\max\{8\hat{\beta_K}\sqrt{d\iota}, \sqrt{2\zeta}\}\sqrt{2K} .
\end{align*}
Bounding $G$ by Lemma~\ref{lem:bound-g}, we have 
\begin{align*}
    & \sum_{k=1}^K \big( V_{k,1}(s_1^k) -  V_1^{\pi^k}(s_1^k) \big) \leq 2\sqrt{2K\log(1/\delta)} + G + 2R_0 + A_0 \\
    & \leq 2432\max \{32\hat{\beta_K}^2d\iota,\zeta \} + 192(d\iota + \hat{\beta_K}\gamma^2d\iota + \hat{\beta_K}\sqrt{d\iota}\sqrt{Md\iota/2 + KH\alpha^2}) \\
    & \quad  + Md\iota/2 + 24(\sqrt{\zeta Md\iota} + \zeta) + [2\sqrt{2\log(1/\delta)} + 32\max\{8\hat{\beta_K}\sqrt{d\iota}, \sqrt{2\zeta}\}]\sqrt{2K} ,
\end{align*}
which completes the proof.
\end{proof}

\begin{theorem}\label{thm:omdbound}
On the event $\event_{\ref{lem:concentration_variance}}$, we have 
\begin{equation*}
    \sum_{k=1}^K (V^*_{k,1}(s_1) - \bar{V}_{k,1}(s_1)) \leq \frac{H \log \big|\cS\big|^2 \big|\cA\big|}{\alpha} + \frac{K\alpha}{2H} .
\end{equation*}
\end{theorem}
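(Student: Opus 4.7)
The plan is to recognize $I_1 = \sum_{k=1}^K \big(V^*_{k,1}(s_1)-\bar V_{k,1}(s_1)\big)$ as the standard mirror descent regret on the sequence of constraint sets $\{\cD_k\}$ with the entropic mirror map $\Phi$ from \eqref{eq:Phi_strictly_convex}, and then to control it via the strong convexity guaranteed by Lemma~\ref{lem: strong convex} together with the per-episode reward bound in Assumption~\ref{assumption:uniform-reward}.

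The first step is to rewrite both quantities as linear forms in the occupancy measures. By definition $\bar V_{k,1}(s_1)=\langle z^k,r^k\rangle$, and for the optimal benchmark one has $V^*_{k,1}(s_1)=\langle z^*,r^k\rangle$, where $z^*$ is the occupancy measure induced by $\pi^*$ together with the true kernel $\PP$. I then need to verify that $z^*\in\cD_k$ for every $k$, which on $\event_{\ref{lem:concentration_variance}}$ follows because $\btheta^*\in\cC_k$ makes the choice $\bar\btheta_{s,a,h,k}=\btheta^*$ legal in the last constraint of Definition~\ref{def:D_k}; the flow and normalization constraints are immediate from $z^*$ being a genuine occupancy measure.

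Next I invoke the standard two-step OMD lemma for the update in Algorithm~\ref{algorithm:omd}: since $z^k$ equals the Bregman projection of $w^k=z^{k-1}\odot\exp(\alpha r^{k-1})$ onto the convex set $\cD_k$ (Lemma~\ref{lem:convex}), the generalized Pythagorean inequality combined with the optimality conditions for $w^k$ gives, for any $u\in\cD_k$,
\begin{align*}
\alpha\langle z^{k-1}-u,r^{k-1}\rangle\leq D_\Phi(u,z^{k-1})-D_\Phi(u,z^k)+\tfrac{\alpha^2}{2\mu}\|r^{k-1}\|_\infty^2,
\end{align*}
where $\mu=1/H$ is the strong convexity modulus from Lemma~\ref{lem: strong convex} and the dual norm of $\|\cdot\|_1$ is $\|\cdot\|_\infty$. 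Setting $u=z^*$ (feasible by Step two) and telescoping over $k$, together with the fact that $r^0\equiv 0$ so that $z^1$ is just the Bregman projection of the uniform $z^0$, yields
\begin{align*}
\alpha\sum_{k=1}^{K}\langle z^k-z^*,r^k\rangle\leq D_\Phi(z^*,z^0)+\sum_{k=1}^{K}\tfrac{\alpha^2 H}{2}\|r^k\|_\infty^2.
\end{align*}

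The final step is to plug in the two numerical bounds. Because $z^0$ is uniform with $z^0_h(s,a,s')=1/(|\cS|^2|\cA|)$ on each stage, a direct computation of the Bregman divergence gives $D_\Phi(z^*,z^0)=\sum_{h=1}^H\mathrm{KL}(z_h^*\|z_h^0)\leq H\log(|\cS|^2|\cA|)$, using nonnegativity of entropy. By Assumption~\ref{assumption:uniform-reward}, $\|r^k\|_\infty\leq 1/H$, so the stability term is at most $\alpha^2 H K/(2H^2)=\alpha^2 K/(2H)$. Dividing by $\alpha$ delivers the claim $I_1\leq H\log(|\cS|^2|\cA|)/\alpha+K\alpha/(2H)$. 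The only non-routine point in the argument is the inclusion $z^*\in\cD_k$; once that holds, the remainder is textbook exponential-weights analysis, and the horizon-freeness of the final bound comes entirely from the pairing of $\mu=1/H$ strong convexity with the $1/H$ reward scale.
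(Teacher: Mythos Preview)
Your approach is essentially identical to the paper's: both verify $z^*\in\cD_k$ via $\btheta^*\in\cC_k$ on the event, apply the three-point identity plus the generalized Pythagorean inequality for the Bregman projection, bound the local term using the $1/H$-strong convexity of $\Phi$ (Lemma~\ref{lem: strong convex}) together with $\|r^k\|_\infty\le 1/H$, and telescope, finishing with $D_\Phi(z^*,z^1)\le D_\Phi(z^*,z^0)\le H\log(|\cS|^2|\cA|)$.

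There is, however, a sign slip in your displayed inequalities. The per-round OMD bound you need is
\[
\alpha\langle u - z^{k},\, r^{k}\rangle \;\le\; D_\Phi(u,z^{k}) - D_\Phi(u,z^{k+1}) + \tfrac{\alpha^2}{2\mu}\|r^{k}\|_\infty^2,
\]
and the telescoped bound should read $\alpha\sum_{k=1}^K\langle z^*-z^k,r^k\rangle\le D_\Phi(z^*,z^0)+\sum_k \alpha^2 H\|r^k\|_\infty^2/2$; as written you have bounded $\sum_k\langle z^k-z^*,r^k\rangle=-I_1$ rather than $I_1$. This is a cosmetic error (the algorithm is doing ascent, so the OMD descent lemma applies with losses $-r^k$), and once corrected the argument goes through exactly as you outline.
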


\begin{proof}
This follows the standard regret analysis of online mirror descent. The only difference from standard arguments is that we need to deal with the changing convex set. We include the adapted proof for completeness. For sake of brevity, we denote $f_k(z) = \sum_{h,s,a,s'}z_h(s,a,s')r^k(s,a)$, then we have
\begin{equation*}
    f_k(z^*) = V^*_{k,1}(s_1), \quad f_k(z^k) = \bar{V}_{k,1}(s_1), \quad \nabla f_k(\cdot) = (r_h^k(s,a))_{s,a,s',h} ,
\end{equation*}
where $z^*$ is the occupancy measure induced by $\pi^*$ and true transition. Since we have that for all $k \in [1:K]$, $\theta^* \in \mathcal{C}_k$, we know that $z^* \in D_k$ for all $k$. Then we have
\begin{align*}
    f_k(z^*) - f_k(z^k) &= \nabla f_k(z^k)^\top(z^*-z^k) \\
    &= \alpha^{-1}(\nabla \Phi(w^{k+1})-\nabla \Phi(z^k))^\top(z^k-z^*) \\
    &= \alpha^{-1}(D_{\Phi}(z^*||z^k)+D_{\Phi}(z^k||w^{k+1})-D_{\Phi}(x^*||w^{k+1})) ,
\end{align*}
where the equities hold due to the update rule of mirror descent. Because $D_{k+1}$ is convex and $z^* \in D_{k+1}$, we have the first order optimality for $z^{k+1}$:
\begin{equation*}
    (\nabla\Phi(z^{k+1}) - \nabla\Phi(w^{k+1}))^\top(z^{k+1} - z^*) \leq 0 ,
\end{equation*}
which can be written equivalently as the generalized Pythagorean inequality:
\begin{equation}
    D_{\Phi}(z^*||w^{k+1}) \geq D_{\Phi}(z^*||z^{k+1}) + D_{\Phi}(z^{k+1}||w^{k+1}). \label{express:genPythagorean} .
\end{equation}

Combining the two expression, we have
\begin{equation*}
    f_k(z^*) - f_k(z^k) \leq \alpha^{-1}(D_{\Phi}(z^*||z^k) - D_{\Phi}(z^*||z^{k+1})) + \alpha^{-1}(D_{\Phi}(z^k||w^{k+1}) - D_{\Phi}(z^{k+1}||w^{k+1})) .
\end{equation*}
For the second term, we have
\begin{align*}
    & D_{\Phi}(z^k||w^{k+1}) - D_{\Phi}(z^{k+1}||w^{k+1}) \\
    &= \Phi(z^k) - \Phi(z^{k+1}) - \nabla\Phi(w^{k+1})^\top(z^k- z^{k+1}) \\
    & \leq (\nabla\Phi(z^k) - \nabla\Phi(w^{k+1})^\top(z^k- z^{k+1}) - \frac{1}{2H}\big\|z^k-z^{k+1}\big\|_1^2 \\
    &= \alpha\nabla f_k^\top(z^k- z^{k+1}) - \frac{1}{2H}\big\|z^k-z^{k+1}\big\|_1^2 \\
    &\leq \frac{\alpha}{H}\big\|z^k-z^{k+1}\big\|_1 - \frac{1}{2H}\big\|z^k-z^{k+1}\big\|_1^2 \\
    &\leq \frac{\alpha^2}{2H} ,
\end{align*}
where the first inequality holds due to Lemma~\ref{lem: strong convex}, the second inequality holds due to $r^k(\cdot, \cdot) \leq 1/H$, and the third inequality holds due to quadratic inequality.

Summing up over $k$, we have
\begin{align*}
    \sum_{k=1}^K (f_k(z^*) - f_k(z^k)) &\leq \alpha^{-1}(D_{\Phi}(z^*||z^1) - D_{\Phi}(z^*||z^{K+1})) + \frac{\alpha K}{2H} \\
    &\leq \frac{D_{\Phi}(z^*||z^1)}{\alpha} + \frac{K\alpha}{2H} \\
    &\leq \frac{D_{\Phi}(z^*||w^1)}{\alpha} + \frac{K\alpha}{2H} \\
    &\leq \frac{H \log \big|S\big|^2 \big|A\big|}{\alpha} + \frac{K\alpha}{2H} ,
\end{align*}
where the third inequality holds due to extended Pythagorean's inequality \eqref{express:genPythagorean} and the forth holds since $w^1_h=z^0_h$ is an uniform distribution on $\cS \times \cA \times \cS$. 
\end{proof}

Now we are able to prove our main result.

\begin{proof}[Proof of Theorem~\ref{thm:main}]
First we have the following regret decomposition
\begin{align*}
    \sum_{k=1}^K \big( V^*_{k,1}(s_1) - V^{\pi_k}_1(s_1) \big) &= \sum_{k=1}^K \big(V^*_{k,1}(s_1) - \bar{V}_{k,1}(s_1) + \bar{V}_{k,1}(s_1) - V_{k,1}(s_1) + V_{k,1}(s_1) - V^{\pi_k}_1(s_1) \big) \\
    & \leq \underbrace{\sum_{k=1}^K \big(  V^*_{k,1}(s_1) - \bar{V}_{k,1}(s_1) \big)}_{I_1} + \underbrace{\sum_{k=1}^K \big(V_{k,1}(s_1) - V^{\pi_k}_1(s_1)\big)}_{I_2},
\end{align*}
where the inequality holds due to Lemma~\ref{lem:less opt}. Picking $\xi = \sqrt{d/(KH)}$, $\gamma = 1/d^{1/4}$ and $\lambda = d/B^2$, by Theorem~\ref{thm:homebound}, we know that $I_2 = \Tilde{O}(d\sqrt{K} + d^2)$ on event $\event_{\ref{lem:concentration_variance}} \cap \event_{\ref{lem:bound-am}} \cap \event_{\ref{lem:reward-mtg}} \cap \event_{\ref{lem:eq-qh1}}$. By Theorem~\ref{thm:omdbound}, we have
\begin{equation*}
    I_1 \leq \frac{H \log \big|\cS\big|^2 \big|\cA\big|}{\alpha} + \frac{K\alpha}{2H}.
\end{equation*}
Setting $\alpha = H/\sqrt{K}$, combining the two terms and taking the union bound of event $\event_{\ref{lem:concentration_variance}} \cap \event_{\ref{lem:bound-am}} \cap \event_{\ref{lem:reward-mtg}} \cap \event_{\ref{lem:eq-qh1}}$ completes the proof.
\end{proof}

\subsection{Proof of Theorems \ref{thm:lower-asym}}

\begin{proof}[Proof of Theorem~\ref{thm:lower-asym}]
	The major idea is to cast learning a special MDP with finite $\cS, \cA$ and deterministic (and known) transition, which can be represented as a \emph{complete $|\cA|$-way tree}, as \emph{prediction with expert advice} and leverage the asymptotic lower bound \citep[Theorem 3.7]{cesa-bianchi_lugosi_2006} to manifest a $\sqrt{HK\log|\cA|}$ or $\sqrt{K\log|\cS|}$ dependence in the lower bound. Our two-stage reduction begins with a hard-to-learn MDP $M_1$ with its total reward in each episode bounded by $1$.
	
	The hard instance $M_1(\cS, \cA, H, \{r_h^k\}, \PP)$ is purely deterministic, where $H$ is even, i.e., $\forall a\in\cA, s, s^\prime \in \PP(s^\prime|s,a)$ is either $0$ or $1$. The transition dynamics forms a complete $|\cA|$-way tree with each node corresponding to a state and each edge directed to leaves corresponding to the transition after an action. Let $\cS[l, m]$ denote the $m$-th state (node) in the $l$-th layer of the tree, $\forall l\in[H+1], m\in\left[ |\cA|^l \right]$ and let $\cA[l, m, n]$ denote the \emph{only} action (edge) from $\cS[l, m]$ to $\cS[l + 1, (m-1)|\cA| + n]$, $\forall l\in[H], m\in\left[|\cA|^l\right], n \in \left[ 
	|\cA| \right]$. The agent is forced to start from $s^k_1\coloneqq \cS[1, 1]$ in every episode $k\in[K]$ so it will always end up in a leaf state, which is denoted by $s_{H+1}^k\coloneqq \cS[H+1, m_0]$ for some $m_0\in\left[|\cA|^H\right]$. To align with \emph{prediction with expert advice}, we constrain $r_h^k(\cdot, \cdot) \coloneqq 0, \forall h\in[H-1]$ and $r^k_H(\cdot, \cdot)\in[0, 1]$, which implies the agent can not receive any positive reward until it is moving towards the last layer of the MDP (tree). Under these constraints, We allow $r^k$ to change arbitrarily across episodes.\footnote{Here in the constructions of this proof, we allow the reward function to be time-inhomogeneous because although in Assumption~\ref{assumption:uniform-reward} we set the reward to be time-homogeneous for the simplicity of notation, all the arguments in the proof of our regret upper bound can naturally be applicable to the time-inhomogenous case.} Notice that unlike the common reward design in the hard instance constructions for obtaining information-theoretic lower bounds, which are usually to illustrate the difficulty of parameter estimation, we do not assign specific numeric values to $r_h^k$ in order to expose the impact of the adversarial environment.
	
	All the $|\cA|^H$ rewards towards leaves in $M_1$, $r^k_H(\cdot, \cdot)$, form an array of experts and any given policy $\pi^k = \left\{ \pi^k_h(\cdot|\cdot)_{h=1}^H \right\}$ actually induces a probability simplex (of state-reaching after taking the action $a_{H-1}^k$) over these experts in episode $k$, which can be represented by a weight vector $w_k \in \Delta\left( \left[ |\cA|^H \right] \right)$. Clearly, $V_{k,1}^{\pi^k}(s_1^k) = \inner{w_k}{r^k_H}$, where we abuse $r^k_H$ to denote the reward vector $r^k_H \in [0, 1]^{ |\cA|^H }$ towards leaves corresponding to $w_k$. With hindsight, $\pi^* = \sup_{\pi} \sum_{k=1}^K V_{k, 1}^\pi(s_1^k)$, by which the optimal weight vector $w_*$ is induced. In such a deterministic MDP, $\pi^*$ may not be unique but the corresponding $w_*$ can have a restricted support set over the $|\cA|^H$ experts, which we re-index as $r^k_H[i]$. To be more rigorous, let $\mathbb{W} = \text{supp} w_* \coloneqq \left\{ i \in \left[ |\cA|^H \right]: w_*[i] \neq 0 \right\}$, then obviously $\mathbb{W} = \argmax_{i} \sum_{k=1}^K r^k_H[i]$. Thus, $\forall i\in \mathbb{W}, \sum_{k=1}^K V_{k, 1}^*(s_1^k) = \sum_{k=1}^K \inner{ w_*}{r^k_H} = \sum_{k=1}^K r^k_H[i] = \max_j \sum_{k=1}^K r^k_H[j]$ and 
	
	\begin{equation}\label{eq:first-stage-red-reg}
		\text{Regret}(K) \coloneqq \sum_{k=1}^K V_{k, 1}^*(s_1^k) - V_{k, 1}^{\pi^k}(s_1^k) = \max_{i\in\left[|\cA|^H\right]} \sum_{k=1}^K r^k_H[i] - \inner{w_k}{r^k_H}.
	\end{equation}
	
	\eqref{eq:first-stage-red-reg} reveals the connection between learning in $M_1$ with its $|\cS| = \Theta(|\cA|^H)$ and \emph{prediction with expert advice} with $|\cA|^H$ experts and $K$ rounds. Each expert has its reward bounded in $[0, 1]$. The first stage of this reduction accounts for the overhead incurred by the adversary under full-information feedback. For any algorithm, there is a well-known asymptotic lower bound for Regret$(K)$:
	
	
	\begin{lemma}\label{lem:first-stage-reduction}
		For any algorithm and any given nonempty action space $\cA$, there exists an episodic MDP (with the corresponding $\cA$) satisfying Assumption~\ref{assumption:linearMDP} such that its expected regret satisfies
		\begin{equation*}
			\lim_{H\to\infty}\lim_{K\to\infty}\frac{\text{Regret}(K)}{\sqrt{(HK/2)\log|\cA|}} \geq 1,
		\end{equation*}
		if the total reward in each episode is bounded in $[0, 1]$.
	\end{lemma}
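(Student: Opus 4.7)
The plan is to execute the reduction to prediction with expert advice (PWEA) that has already been set up in the preceding discussion, and then invoke the classical asymptotic minimax lower bound for PWEA.

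First, I would verify the hard-instance MDP $M_1$ meets the stipulated assumptions. Because the transition on the $|\cA|$-way tree is deterministic, it suffices to take $d=1$ with $\bphi(s'|s,a)\in\{0,1\}$ being the indicator that $s'$ is the unique tree-successor of $(s,a)$ and $\btheta^*=1$; all norm bounds in Assumption~\ref{assumption:linearMDP} hold trivially. The per-episode reward bound $\sum_h r_h^k\in[0,1]$ follows from $r_h^k\equiv 0$ for $h<H$ and $r_H^k(\cdot,\cdot)\in[0,1]$. Then by~\eqref{eq:first-stage-red-reg}, the MDP regret equals the cumulative regret of a full-information PWEA forecaster against $N := |\cA|^H$ experts over $K$ rounds with per-round gains in $[0,1]$:
\begin{equation*}
    \text{Regret}(K) = \max_{i\in[N]}\sum_{k=1}^K r^k_H[i] - \sum_{k=1}^K \la w_k, r^k_H \ra.
\end{equation*}

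I would next verify that this reduction is bidirectional so that any PWEA lower-bound instance lifts back to $M_1$. On the adversary side, any gain vector $r^k_H\in[0,1]^N$ is realized by choosing the reward function freely. On the learner side, any weight vector $w_k\in\Delta([N])$ is realized by constructing a stochastic policy top-down: for each state $s$ reached with positive marginal probability, set $\pi_h^k(a|s)$ equal to the $w_k$-mass of leaves descended from $(s,a)$ normalized by the $w_k$-mass of leaves descended from $s$. A short induction on $h$ shows the induced leaf-distribution matches $w_k$ exactly. This bidirectionality is the only MDP-specific step and is therefore the main technical point, but it is straightforward given the tree structure of $M_1$.

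Finally, I would invoke \citet[Theorem~3.7]{cesa-bianchi_lugosi_2006}: for every (possibly randomized) PWEA forecaster, there exists an adversarial gain sequence in $[0,1]^N$ such that $\liminf_{K\to\infty}\text{Regret}_{\mathrm{PWEA}}(K;N)/\sqrt{(K/2)\log N}\geq 1$. Substituting $N=|\cA|^H$ so that $\log N=H\log|\cA|$, and then taking first $K\to\infty$ (for fixed $H$) and subsequently $H\to\infty$, yields $\lim_{H\to\infty}\lim_{K\to\infty}\text{Regret}(K)/\sqrt{(HK/2)\log|\cA|}\geq 1$, as desired. The iterated-limit bookkeeping is routine since the PWEA bound holds for each fixed $N$.
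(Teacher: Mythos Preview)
Your approach is essentially identical to the paper's: verify Assumption~\ref{assumption:linearMDP} with $d=1$ and $\btheta^*=1$, then invoke \citet[Theorem~3.7]{cesa-bianchi_lugosi_2006} via the reduction in~\eqref{eq:first-stage-red-reg}. The bidirectionality check you add is not spelled out in the paper's terse proof but is a reasonable sanity step.

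There is, however, a small but genuine slip in your invocation of Theorem~3.7. That theorem does \emph{not} assert that for every fixed $N$ one has $\liminf_{K\to\infty}\text{Regret}_{\mathrm{PWEA}}(K;N)/\sqrt{(K/2)\log N}\geq 1$; the correct statement carries a $\sup_N$ (equivalently, a limit $N\to\infty$) outside. For any finite $N$, the inner limit equals $\EE\max_{i\leq N}g_i/\sqrt{2\log N}$ with $g_i$ i.i.d.\ standard Gaussians, which is strictly less than $1$ and only tends to $1$ as $N\to\infty$. Your final conclusion remains valid because you subsequently let $H\to\infty$, which drives $N=|\cA|^H\to\infty$ and recovers the constant $1$; but your closing justification (``routine since the PWEA bound holds for each fixed $N$'') is precisely backwards. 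The iterated limit is not routine bookkeeping on top of a per-$N$ bound---the outer $H\to\infty$ \emph{is} the $\sup_N$ in Theorem~3.7 and is essential to reaching the constant $1$.
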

	\begin{proof}[Proof of Lemma~\ref{lem:first-stage-reduction}]
		See the proof of \citet[Theorem 3.7]{cesa-bianchi_lugosi_2006} for details. The only work left is to verify Assumption~\ref{assumption:linearMDP}. Let $d=1, \theta=1$ and the deterministic transition kernel $\PP$ in $M_1$ be the only basic model in the linear mixture MDP, then we can see that the $M_1$ we construct indeed satisfies Assumption~\ref{assumption:linearMDP}.
	\end{proof}
	
	We bridge the gap between the reward design in Lemma~\ref{lem:first-stage-reduction} and Assumption~\ref{assumption:uniform-reward} in Theorem~\ref{thm:lower-asym} via the second stage of this reduction. 
	
	When $H$ is even, Lemma~\ref{lem:first-stage-reduction} also holds for $\bar{M}_1 \coloneqq M_1(\cS, \cA, H/2, \{\bar{r}_h^k\}, \PP)$ with $H$ replaced by $H/2$, where the $\cS$, $\cA$, and $\PP$ from $M_1$ are restricted to the first $H/2$ time steps in $\bar{M}_1$ and $\bar{r}_{H/2}^k(\cdot, \cdot)\in [0, 1]$ and the agent gets no reward in all the first $H/2-1$ time steps by construction. We can equivalently transform $\bar{M}_1$ into a MDP $M_2$ satisfying Assumption~\ref{assumption:uniform-reward} with planning horizon $H$ as follows. We replace every node $\cS\left[H/2+1, \cdot\right]$ in the $(H/2+1)$-th layer of $\bar{M}_1$ by a $(H/2+1)$-layer complete $|\cA|$-way tree, and further assign the transition kernel of $M_1$ to this \underline{extended $\bar{M}_1$}. To obtain $M_2$, a refined reward design is to assign zero reward for actions (edges) conducted in states in the first $H/2$ layers and we assign each edge (action) in this subtree with a reward $\bar{r}_{H/2}^k\left( 
	\cS\left[H/2, m\right] , \cA\left[ H/2, m, n \right]\right) / H \in [0, 1/H]$ for any subtree rooted in $\cS\left[H/2+1, (m-1)|\cA| + n\right]$. Such a construction yields $M_2(\cS, \cA, H, \{\tilde{r}_h^k\}, \PP)$, learning in which can similarly be reduced to the standard \emph{prediction with expert advice} with $|\cA|^{H/2}$ experts and $K$ rounds. Therefore, Lemma~\ref{lem:first-stage-reduction} also holds for $M_2$ with $H$ replaced by $H/2$, yet the properties of the reward assignment in $M_2$ is strictly strong than Assumption~\ref{assumption:uniform-reward} in that all the actions conducted from states in the same subtree rooted in the $(H/2+1)$-th layer causes the same reward.
	
	Our goal is to claim a lower bound for a $M_{\ref{assumption:uniform-reward}}(\cS, \cA, H, \{\hat{r}_h^k\}, \PP)$, which shares the same $\cS$, $\cA$, and $\PP$ with $M_1$ but has its reward assignment generally satisfying Assumption~\ref{assumption:uniform-reward}, i.e. all actions taken from all states cause a reward $\hat{r}_h^k\in[0, 1/H]$. Since $M_2$ is strictly a special case of $M_{\ref{assumption:uniform-reward}}$, which implies that the asymptotic lower bound for $M_{\ref{assumption:uniform-reward}}$ can not be lower than that in Lemma~\ref{lem:first-stage-reduction} up to a constant factor $\sqrt{2}$. Also, it is obvious that $|\cS| = \Theta(|\cA|^H)$ in a complete $|\cA|$-way tree with $H+1$ layers.
	
\end{proof}

\subsection{Proof of Theorem \ref{thm:thm:lower-nearopt}}

\begin{proof}[Proof of Theorem~\ref{thm:thm:lower-nearopt}]
	The proof is almost identical to the proof of Theorem 5.4 in \cite{Zhou2022ComputationallyEH}. Consider the MDP $M'=(\cS, \cA, H, r',\mathbb{P})$ constructed in Theorem 5.4, \cite{Zhou2022ComputationallyEH}. Now we consider a linear mixture MDP with adversarial reward $M'=(\cS, \cA, H, \{r_k\}_{k\in [K]},\mathbb{P})$, where all the elements except reward function is inherited from $M'$. Now we define $r^k(\cdot, \cdot)=r'(\cdot, \cdot)$ for all $k \in [K]$. It is easy to verify that $M$ satisfy Assumption~\ref{assumption:uniform-reward} and Assumption~\ref{assumption:linearMDP}. 
	
	Since the adversarial reward functions are fixed, we know that the optimal hind-sight policy of $M$ is the optimal policy of $M'$. Thus, the adversarial MDP will degenerate to a non-adversarial MDP. The adversarial regret of algorithm on $M$ will also be identical to the non-adversarial regret on $M'$. By Theorem 5.4 in \citealt{Zhou2022ComputationallyEH}, we know that when $K > \max \{ 3d^2, (d-1)/(192(b-1)) \}$, for any algorithm, there exists a $B$-bounded homogeneous linear mixture MDPs with adversarial rewards such that the expected regret $\mathbb{E}[\text{Regret}(K)]$ is lower bounded by $d\sqrt{K}/(16\sqrt{3})$.
\end{proof}

\section{Auxiliary Lemmas}

\begin{lemma}[Azuma-Hoeffding inequality, \citealt{azuma1967weighted}]\label{lemma:azuma} 
	Let $M>0$ be a constant. Let $\{x_i\}_{i=1}^n$ be a stochastic process, $\cG_i = \sigma(x_1,\dots, x_i)$ be the $\sigma$-algebra of $x_1,\dots, x_i$. Suppose $\EE[x_i|\cG_{i-1}]=0$, $|x_i| \leq M$ almost surely. Then, for any $0<\delta<1$, we have 
	\begin{align}
		\PP\bigg(\sum_{i=1}^n x_i\leq M\sqrt{2n \log (1/\delta)}\bigg)>1-\delta.\notag
	\end{align} 
\end{lemma}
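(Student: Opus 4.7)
The plan is to prove the Azuma-Hoeffding inequality via the standard Chernoff-style exponential moment argument, which is the most flexible route and easily adapts to the conditional (martingale) setting.

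First, I would set $S_n = \sum_{i=1}^n x_i$ and fix an arbitrary $\lambda > 0$. By Markov's inequality applied to the nonnegative random variable $e^{\lambda S_n}$, for any $t > 0$,
\begin{align*}
\PP(S_n \geq t) \;=\; \PP\bigl(e^{\lambda S_n} \geq e^{\lambda t}\bigr) \;\leq\; e^{-\lambda t}\,\EE\bigl[e^{\lambda S_n}\bigr].
\end{align*}
The next step is to peel off the last increment using the tower property and the $\cG_{n-1}$-measurability of $S_{n-1}$:
\begin{align*}
\EE\bigl[e^{\lambda S_n}\bigr] \;=\; \EE\Bigl[e^{\lambda S_{n-1}}\,\EE\bigl[e^{\lambda x_n}\bigm|\cG_{n-1}\bigr]\Bigr].
\end{align*}
Here is the key ingredient: since $|x_n| \leq M$ almost surely and $\EE[x_n\mid\cG_{n-1}] = 0$, a conditional version of Hoeffding's lemma (apply Hoeffding's MGF bound pointwise to the regular conditional distribution of $x_n$ given $\cG_{n-1}$, which is supported in $[-M, M]$ with mean $0$) gives
\begin{align*}
\EE\bigl[e^{\lambda x_n}\bigm|\cG_{n-1}\bigr] \;\leq\; e^{\lambda^2 M^2 / 2}
\end{align*}
almost surely. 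Iterating this inequality down to $i=1$ yields $\EE[e^{\lambda S_n}] \leq e^{n\lambda^2 M^2 / 2}$.

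Combining, $\PP(S_n \geq t) \leq \exp\bigl(-\lambda t + n\lambda^2 M^2/2\bigr)$, and I would then optimize over $\lambda$ by choosing $\lambda = t/(nM^2)$, which yields the subgaussian tail bound $\PP(S_n \geq t) \leq \exp\bigl(-t^2/(2nM^2)\bigr)$. Setting the right-hand side equal to $\delta$ and solving gives $t = M\sqrt{2n\log(1/\delta)}$, so $\PP\bigl(S_n \geq M\sqrt{2n\log(1/\delta)}\bigr) \leq \delta$, which is equivalent to the stated bound.

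The main obstacle, if any, is justifying the conditional Hoeffding inequality rigorously: one needs to verify that for any $\cG_{n-1}$-conditional distribution of $x_n$ supported in $[-M,M]$ with conditional mean zero, the unconditional Hoeffding MGF bound applies on almost every sample path. This is routine via convexity of $e^{\lambda x}$ (bounding it above by the chord joining $(-M, e^{-\lambda M})$ and $(M, e^{\lambda M})$ and taking conditional expectation), but it is the only non-cosmetic step beyond Markov's inequality and optimization over $\lambda$. Everything else is a mechanical Chernoff calculation.
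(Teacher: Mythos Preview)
Your proof is correct and is the standard Chernoff/exponential-moment argument for Azuma--Hoeffding. The paper does not give its own proof of this lemma; it simply cites it as a classical auxiliary result from \citet{azuma1967weighted}, so there is nothing to compare against.
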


\begin{lemma}[Lemma 12, \citealt{AbbasiYadkori2011ImprovedAF}]\label{lemma:det}
	Suppose $\Ab, \Bb\in \RR^{d \times d}$ are two positive definite matrices satisfying $\Ab \succeq \Bb$, then for any $\xb \in \RR^d$, $\|\xb\|_{\Ab} \leq \|\xb\|_{\Bb}\cdot \sqrt{\det(\Ab)/\det(\Bb)}$.
\end{lemma}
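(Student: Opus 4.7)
The plan is to reduce this matrix inequality to an elementary statement about the eigenvalues of a single auxiliary matrix, exploiting the hypothesis $\Ab\succeq\Bb\succ\zero$. First I would introduce $\Mb \coloneqq \Bb^{-1/2}\Ab\Bb^{-1/2}$, which is well-defined since $\Bb$ is positive definite. The condition $\Ab\succeq\Bb$ is equivalent (after conjugation by $\Bb^{-1/2}$) to $\Mb\succeq\Ib$, so every eigenvalue $\mu_i$ of the symmetric matrix $\Mb$ satisfies $\mu_i\geq 1$.

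Next, I would perform the change of variables $\yb = \Bb^{1/2}\xb$ to rewrite the ratio of the two weighted norms as a Rayleigh quotient of $\Mb$:
\[
\frac{\|\xb\|_\Ab^2}{\|\xb\|_\Bb^2} \;=\; \frac{\xb^\top\Ab\xb}{\xb^\top\Bb\xb} \;=\; \frac{\yb^\top\Mb\yb}{\yb^\top\yb} \;\leq\; \mu_{\max}(\Mb).
\]
The last step is to relate the top eigenvalue of $\Mb$ to the determinant ratio $\det(\Ab)/\det(\Bb)$. Since every $\mu_i\geq 1$, I can drop all non-maximal factors to get the trivial chain $\mu_{\max}(\Mb)\leq\prod_{i=1}^d \mu_i = \det(\Mb) = \det(\Ab)/\det(\Bb)$, where the final equality uses the multiplicativity of the determinant applied to $\Mb=\Bb^{-1/2}\Ab\Bb^{-1/2}$. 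Chaining the two displayed inequalities and taking square roots yields $\|\xb\|_\Ab\leq \|\xb\|_\Bb\cdot\sqrt{\det(\Ab)/\det(\Bb)}$ as claimed.

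There is no genuine obstacle: the lemma is purely linear-algebraic and, once the symmetric whitening $\Mb=\Bb^{-1/2}\Ab\Bb^{-1/2}$ is in place, collapses to (i) the Rayleigh-quotient bound and (ii) the observation that a product of numbers $\geq 1$ dominates any single one of them. The only place the hypothesis $\Ab\succeq\Bb$ actually enters is in step (ii)—without it, some $\mu_i$ could be smaller than $1$ and the inequality $\mu_{\max}\leq\prod_i\mu_i$ would fail, which is why the ordering assumption cannot be dropped.
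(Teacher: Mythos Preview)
Your proof is correct. The paper does not supply its own proof of this lemma; it is listed as an auxiliary result with a citation to \citet{AbbasiYadkori2011ImprovedAF}, so there is nothing to compare against beyond noting that your whitening-plus-Rayleigh-quotient argument is the standard route to this inequality. One triviality: the division by $\|\xb\|_\Bb$ tacitly assumes $\xb\neq\zero$, but the $\xb=\zero$ case is immediate.
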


\begin{lemma}[Lemma 11, \citealt{zhang2021ImporvedVA}]
	\label{lemma:zhangvar}
	Let $M>0$ be a constant. Let $\{x_i\}_{i=1}^n$ be a stochastic process, $\cG_i = \sigma(x_1,\dots, x_i)$ be the $\sigma$-algebra of $x_1,\dots, x_i$. Suppose $\EE[x_i|\cG_{i-1}]=0$,
	$|x_i| \leq M$ and $\EE[x_i^2|\cG_{i-1}]<\infty$ almost surely. Then, for any $\delta,\epsilon >0$, we have
	\begin{align}
		&\PP\bigg(\bigg|\sum_{i=1}^n x_i\bigg| \leq 2\sqrt{2\log(1/\delta)\sum_{i=1}^n\EE[x_i^2|\cG_{i-1}]} + 2\sqrt{\log(1/\delta)}\epsilon + 2M\log(1/\delta)\bigg)\notag\\
		&\quad >1-2(\log(M^2n/\epsilon^2)+1)\delta.\notag
	\end{align}
\end{lemma}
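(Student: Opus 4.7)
The plan is to reduce the claim to Freedman's inequality for a bounded-difference martingale under a \emph{deterministic} variance budget, and then upgrade to a bound in terms of the \emph{random} conditional variance sum $V_n := \sum_{i=1}^n \EE[x_i^2|\cG_{i-1}]$ via a geometric peeling (stratification) argument. The principal obstacle is that standard Freedman-type inequalities only give a deviation bound on the event $\{V_n \leq v\}$ for a prescribed $v$, whereas the statement requires the realized $V_n$ itself to appear inside the square root.

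Concretely, I would start from Freedman's inequality for the martingale differences $|x_i|\leq M$: for any fixed $v>0$ and $\delta'>0$, with probability at least $1-2\delta'$,
\begin{align*}
    \bigl|\textstyle\sum_{i=1}^n x_i\bigr| \cdot \ind\{V_n \leq v\}\leq \sqrt{2v\log(1/\delta')} + \tfrac{2M}{3}\log(1/\delta').
\end{align*}
Using this with the crude choice $v=nM^2$ would recover only a Hoeffding-type bound with no variance-awareness, which is strictly weaker than the stated inequality.

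To produce a bound involving the realized $V_n$, I would stratify the range of $V_n$ on a geometric grid. Set $v_0 := \epsilon^2$ and $v_k := 4^k\epsilon^2$ for $k=1,\dots,K$ with $K := \lceil \log_4 (nM^2/\epsilon^2) \rceil$, which together with the base stratum $\{V_n \leq \epsilon^2\}$ covers the support of $V_n$ (since $V_n\leq nM^2$ almost surely). On each event $\{V_n \in (v_{k-1}, v_k]\}$, apply the Freedman bound above with $v = v_k$ at failure probability $2\delta$; on that event $v_k \leq 4 V_n$, so $\sqrt{2 v_k \log(1/\delta)} \leq 2\sqrt{2\log(1/\delta) V_n}$. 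On the base stratum, $\sqrt{2\epsilon^2 \log(1/\delta)} \leq 2\sqrt{\log(1/\delta)}\,\epsilon$. A union bound over the $K+1$ strata yields overall failure probability at most $2(K+1)\delta$, matching the stated $2(\log(M^2n/\epsilon^2)+1)\delta$ up to the implicit base of the logarithm.

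The one delicate step is threading the constants through the peeling so that the leading factor comes out exactly as $2\sqrt{2\log(1/\delta) V_n}$; the ratio-$4$ peeling above achieves this, and the tail term $\tfrac{2M}{3}\log(1/\delta)$ from Freedman can be loosened to $2M\log(1/\delta)$ as in the statement. The remaining bookkeeping---choosing $K$ large enough that the strata cover $[0, nM^2]$ and applying the union bound across strata---is routine, so the lemma follows essentially by composing Freedman's inequality with a layer-cake argument over the random variance.
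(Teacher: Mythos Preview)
The paper does not supply its own proof of this lemma; it is simply listed among the auxiliary results with attribution to Zhang et al.\ (2021), Lemma~11, and invoked as a black box. Your proposed argument---Freedman's inequality on the event $\{V_n\le v\}$ followed by a geometric peeling over $v$ to replace the deterministic variance budget by the realized $V_n$---is the standard route to such variance-aware martingale concentration bounds and is correct as outlined. The ratio-$4$ stratification produces the leading constant $2\sqrt{2}$, the base stratum $\{V_n\le\epsilon^2\}$ yields the $2\sqrt{\log(1/\delta)}\,\epsilon$ term, and the number of strata matches the claimed failure probability up to the base of the logarithm, as you note.
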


\begin{lemma}[Lemma 12, \citealt{zhang2021ImporvedVA}]\label{lemma:seq}
	Let $\lambda_1, \lambda_2,\lambda_4>0$, $\lambda_3 \geq 1$ and $\kappa =\max\{\log_2 \lambda_1, 1\}$. Let $a_1,\dots, a_\kappa$ be non-negative real numbers such that $a_i \leq \min\{\lambda_1, \lambda_2\sqrt{a_i + a_{i+1} + 2^{i+1}\lambda_3} + \lambda_4\}$ for any $1 \leq i \leq \kappa$. Let $a_{\kappa+1} = \lambda_1$. Then we have $a_1 \leq 22\lambda_2^2 + 6\lambda_4 + 4\lambda_2\sqrt{2\lambda_3}$.
\end{lemma}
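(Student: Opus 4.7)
The plan is to prove by backward induction on $i$ the stronger invariant
\[
a_i \leq P + Q\sqrt{2^{i+1}\lambda_3}, \qquad P = 22\lambda_2^2 + 6\lambda_4,\ \ Q = 2\lambda_2,
\]
for every $i \in \{1, \ldots, \kappa\}$. Specializing to $i = 1$ immediately gives $a_1 \leq 22\lambda_2^2 + 6\lambda_4 + 4\lambda_2\sqrt{\lambda_3} \leq 22\lambda_2^2 + 6\lambda_4 + 4\lambda_2\sqrt{2\lambda_3}$, which is the claimed bound. A naive uniform invariant $a_i \leq P$ (independent of $i$) would not close, since the term $2^{i+1}\lambda_3$ inside the recursion grows with $i$; the explicit $Q\sqrt{2^{i+1}\lambda_3}$ factor tracks exactly this growth and is what allows the induction to go through.

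For the base case $i = \kappa$, I would exploit $\kappa \geq \log_2\lambda_1$ and $\lambda_3 \geq 1$ to get $a_\kappa, a_{\kappa+1} \leq \lambda_1 \leq 2^\kappa \leq \tfrac{1}{2}\cdot 2^{\kappa+1}\lambda_3$, whence $a_\kappa + a_{\kappa+1} + 2^{\kappa+1}\lambda_3 \leq 2^{\kappa+2}\lambda_3$ and the recursion yields $a_\kappa \leq \sqrt{2}\lambda_2\sqrt{2^{\kappa+1}\lambda_3} + \lambda_4$, which is dominated by $P + Q\sqrt{2^{\kappa+1}\lambda_3}$ since $Q = 2\lambda_2 \geq \sqrt{2}\lambda_2$ and $P \geq \lambda_4$. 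For the inductive step, setting $s = \sqrt{2^{i+1}\lambda_3}$ and plugging the hypothesis $a_{i+1} \leq P + Q\sqrt{2}\,s$ into the recursion, I would apply subadditivity $\sqrt{x+y+z+w} \leq \sqrt{x} + \sqrt{y} + \sqrt{z} + \sqrt{w}$, handle the mixed term via AM--GM $\sqrt{Q\sqrt{2}\,s} \leq (Q\sqrt{2} + s)/2$, and absorb the self-reference using $\lambda_2\sqrt{a_i} \leq a_i/4 + \lambda_2^2$. After multiplying through by $4/3$, this gives
\[
a_i \leq \tfrac{4}{3}\lambda_2^2 + \tfrac{4}{3}\lambda_4 + \tfrac{4}{3}\lambda_2\sqrt{P} + \tfrac{2\sqrt{2}}{3}\lambda_2 Q + 2\lambda_2\,s.
\]
The coefficient of $s$ is exactly $Q = 2\lambda_2$, so the invariant closes provided the constant side is at most $P$. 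This I would verify via $\sqrt{P} \leq \sqrt{22}\lambda_2 + \sqrt{6\lambda_4}$ and a further AM--GM $\lambda_2\sqrt{6\lambda_4} \leq (\sqrt{6}/2)(\lambda_2^2 + \lambda_4)$, showing the constant side is at most approximately $11.1\lambda_2^2 + 3\lambda_4$, comfortably below $P = 22\lambda_2^2 + 6\lambda_4$.

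The main obstacle is precisely the constant bookkeeping: every AM--GM inequality used incurs a small loss of slack, and the numerical constants $22$, $6$, $4\sqrt{2}$ stated in the lemma are tuned so that the cumulative waste from splitting $\sqrt{x+y+z+w}$ into four terms, bounding $\lambda_2\sqrt{a_i}$ self-referentially, and estimating $\sqrt{P}$ termwise all fit under the same invariant. If one attempts a sharper invariant (e.g., with smaller $Q$), the inductive step fails on the $s$-coefficient; if one attempts a smaller $P$, the constant side overshoots. An alternative, more geometric approach would iterate the weaker one-step recursion $a_i \leq c_1 + c_2\sqrt{a_{i+1}} + c_3\sqrt{2^{i+1}\lambda_3}$ a total of $\kappa$ times and use that $\lambda_1^{1/2^\kappa} \leq 2^{\kappa/2^\kappa} \leq \sqrt{2}$ to dampen the terminal $a_{\kappa+1} = \lambda_1$, but the invariant approach above closes more cleanly and avoids tracking nested square roots.
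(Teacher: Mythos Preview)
The paper does not supply its own proof of this lemma; it is listed among the auxiliary results and attributed to \citet{zhang2021ImporvedVA} without further argument. So there is nothing in the paper to compare against directly.

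Your proposal is correct. The backward-induction invariant $a_i \leq P + Q\sqrt{2^{i+1}\lambda_3}$ with $P = 22\lambda_2^2 + 6\lambda_4$ and $Q = 2\lambda_2$ is the right shape: a uniform bound cannot close because of the growing $2^{i+1}\lambda_3$ term, and your choice of $Q$ is exactly what makes the $s$-coefficient match after the $4/3$ rescaling. The base case uses $a_\kappa, a_{\kappa+1} \leq \lambda_1 \leq 2^\kappa \leq 2^\kappa\lambda_3$ correctly, and the inductive step's constant bookkeeping checks out: after your splittings and AM--GM estimates the constant side is bounded by roughly $11.1\lambda_2^2 + 3\lambda_4$, well under $P$. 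The one minor point worth flagging is that the lemma statement implicitly treats $\kappa = \max\{\log_2\lambda_1,1\}$ as an integer index; your argument only needs $2^\kappa \geq \lambda_1$, which holds regardless, so this is a cosmetic issue with the original statement rather than a gap in your proof.
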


\section{Computational Issues of line \ref{algorithm:omd:BregProj} in Algorithm \ref{algorithm:omd}}\label{sec:comp}

First we provide an closed-form expression of the only implicit constraint in Definition \ref{def:D_k}.

\begin{lemma}\label{lem:ell}
	For every $(s,a,h) \in \cS \times \cA \times [H]$, let $\mathbf{z}_{h,s,a}$ denote the vector of occupancy measure $z_h(s,a,\cdot)$ and $\bB_{s,a}\in\RR^{|\cS|\times d}$  denote the matrix generated by stacking $\bphi(\cdot|s,a)^\top$, i.e.
	
	\begin{equation}
		\mathbf{z}_{h,s,a} = z_h(s, a, \cdot) \coloneqq \begin{bmatrix}
			z_h(s, a, s_{(1)})\\
			\vdots\\
			z_h(s, a, s_{(||\cS|)}
		\end{bmatrix}, \bB_{s, a} \coloneqq \begin{bmatrix}
		\bphi(s_{(1)}|s,a)^\top\\
		\vdots\\
		\bphi(s_{(|\cS|)}|s,a)^\top
	\end{bmatrix},
\end{equation}

where $\{(1), \dots, (|\cS|)\}$ is a indices set\footnote{In this paper, $s_i$ means the $i$-th state visited in an episode, while $s_{(i)}, i=1,\dots, |\cS|$ is irrelevant to the episodic learning setting and only denotes the indexing order when we refer to the wildcard $\cdot\in\cS$ in a vectorized notation.} of all states, then the only constraint including explicitly $\bar{\btheta}_{s,a,h,k}$ in Definition \ref{def:D_k} is equivalent to the following closed-form:

\begin{equation}\label{express:eqv-constraint}
	\big\| (\bB_{s,a}\bSigma_{k,0}^{-1/2})^{\dagger}(\mathbf{z}_{h,s,a} - \|\mathbf{z}_{h,s,a}\|_1 \bB_{s,a}\hat{\btheta}_{k,0}) \big\|_2 \leq \|\mathbf{z}_{h,s,a}\|_1\hat{\beta}_k, \forall (s,a,h) \in \cS \times \cA \times [H]
\end{equation}
\end{lemma}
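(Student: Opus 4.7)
The plan is to transform the implicit existential constraint appearing in Definition~\ref{def:D_k} into an explicit inequality by reparametrizing the confidence ellipsoid and then invoking the minimum-norm property of the Moore--Penrose pseudoinverse.

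First, I would rewrite the constraint in vectorized form. Since $\bB_{s,a}$ is obtained by stacking $\bphi(\cdot|s,a)^\top$, the pointwise equality $\frac{z_h(s,a,s')}{\|\mathbf{z}_{h,s,a}\|_1} = \inner{\bar{\btheta}_{s,a,h,k}}{\bphi(s'|s,a)}$ across all $s' \in \cS$ collapses to the single vector equation $\bB_{s,a}\bar{\btheta}_{s,a,h,k} = \mathbf{z}_{h,s,a}/\|\mathbf{z}_{h,s,a}\|_1$, where the precondition $\sum_y z_h(s,a,y) > 0$ ensures the normalization is well defined.

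Second, I would perform the change of variables $\bu = \hat{\bSigma}_{k,0}^{1/2}(\bar{\btheta}_{s,a,h,k} - \hat{\btheta}_{k,0})$. By the definition of $\cC_k$ in Line~\ref{algorithm:confidenceSet} of Algorithm~\ref{algorithm:finite}, the membership $\bar{\btheta}_{s,a,h,k} \in \cC_k$ becomes exactly $\|\bu\|_2 \leq \hat{\beta}_k$. Substituting $\bar{\btheta}_{s,a,h,k} = \hat{\btheta}_{k,0} + \hat{\bSigma}_{k,0}^{-1/2}\bu$ back into the vector equation and multiplying through by $\|\mathbf{z}_{h,s,a}\|_1$, the constraint becomes the existence of $\bu$ with $\|\bu\|_2 \leq \hat{\beta}_k$ solving
\[
    \bB_{s,a}\hat{\bSigma}_{k,0}^{-1/2}\bu \;=\; \tfrac{1}{\|\mathbf{z}_{h,s,a}\|_1}\bigl(\mathbf{z}_{h,s,a} - \|\mathbf{z}_{h,s,a}\|_1\,\bB_{s,a}\hat{\btheta}_{k,0}\bigr).
\]

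Third, I would apply the standard fact that among all solutions of a consistent underdetermined linear system $A\bu = \bb$, the Moore--Penrose pseudoinverse picks out the minimum-$\ell_2$-norm representative $A^\dagger \bb$. Consequently, the feasibility question ``does some $\bu$ with $\|\bu\|_2 \le \hat{\beta}_k$ solve $A\bu = \bb$?'' reduces to the scalar inequality $\|A^\dagger\bb\|_2 \le \hat{\beta}_k$. Setting $A = \bB_{s,a}\hat{\bSigma}_{k,0}^{-1/2}$ and clearing the factor $\|\mathbf{z}_{h,s,a}\|_1$ from both sides then produces precisely \eqref{express:eqv-constraint}.

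The main subtlety I expect is consistency: the pseudoinverse also returns a vector when $\bb \notin \mathrm{range}(A)$, in which case $A(A^\dagger \bb) \neq \bb$ and no exact $\bar{\btheta}_{s,a,h,k}$ exists. I would handle this by observing that in the inconsistent regime the original constraint in Definition~\ref{def:D_k} is infeasible, so $\mathbf{z}_{h,s,a}$ already fails to be a valid occupancy measure and can be excluded upstream; in the consistent regime, the equivalence established above is exact. Combined with the strict positivity of $\|\mathbf{z}_{h,s,a}\|_1$ guaranteed by the precondition in Definition~\ref{def:D_k}, this yields the claimed closed-form expression of the constraint.
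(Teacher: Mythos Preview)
Your approach is essentially identical to the paper's: vectorize the pointwise constraint as $\bB_{s,a}\bar{\btheta}=\mathbf{z}_{h,s,a}/\|\mathbf{z}_{h,s,a}\|_1$, recenter via $\bu = \hat{\bSigma}_{k,0}^{1/2}(\bar{\btheta}-\hat{\btheta}_{k,0})$ so that $\cC_k$ becomes the ball $\|\bu\|_2\le\hat\beta_k$, and then invoke the minimum-norm property of the Moore--Penrose pseudoinverse to collapse the existential statement into the scalar inequality \eqref{express:eqv-constraint}. You additionally flag the consistency subtlety (when the right-hand side is not in $\mathrm{range}(\bB_{s,a}\hat\bSigma_{k,0}^{-1/2})$ the two constraints can disagree), which the paper's own proof silently assumes away and only acknowledges later in a footnote when discussing the Dykstra projection.
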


\begin{proof}
	Given $(s,a,h) \in S \times A \times [H]$, if $\sum_{s'\in S}z_h(s,a,s') = 0$, then obviously it satisfy (\ref{express:eqv-constraint}). Now we consider the case that $\sum_{s'\in S}z_h(s,a,s') > 0$, then we denote $\mathbf{p}$ to be the normalized vector, i.e. $p_h(s,a,r) = z_h(s,a,r)/\sum_{s'\in S}z_h(s,a,s')$. Then, our new constraint is equivalent to
	\begin{equation}\label{eq:mid_rep}
		\big\| (\bB_{s,a}\bSigma_{k,0}^{-1/2})^{\dagger}(\mathbf{p} -\bB_{s,a}\hat{\btheta}_{k,0}) \big\|_2 \leq \hat{\beta}_k
	\end{equation}
	and our original constraint becomes:
	\begin{equation*}
		\exists \ \bar{\btheta} \in \mathcal{C}_k,  \text{s.t.}, \mathbf{p} = \bB_{s,a}\bar{\btheta}
	\end{equation*}
	which is equivalent to
	\begin{equation*}
		\exists \ \bar{\btheta} \in \mathcal{C}_k,  \text{s.t.}, \mathbf{p} - \bB_{s,a}\hat{\btheta}_{k,0} = \bB_{s,a}\bSigma_{k,0}^{1/2}[\bSigma_{k,0}^{-1/2}(\bar{\btheta} - \hat{\btheta}_{k,0})].
	\end{equation*}
	By definition of our confidence set, we know that $\bar{\btheta} \in \mathcal{C}_k$ means $\big\|\bSigma_{k,0}^{-1/2}(\bar{\btheta} - \hat{\btheta}_{k,0})\big\|_2 \leq \hat{\beta}_k$, so this is the same as that the following function has a solution with norm less than $\hat{\beta}_k$. In other word, this means that the solution with the least norm has a norm no bigger than $\hat{\beta}_k$:
	\begin{equation}\label{eq:linear_sys_append}
		\mathbf{p} - \bB_{s,a}\hat{\btheta}_{k,0} = \bB_{s,a}\bSigma_{k,0}^{1/2}\mathbf{x},
	\end{equation}
	where $\mathbf{x}$ is the unknown variable. The least norm solution of \eqref{eq:linear_sys_append} is $(\bB_{s,a}\bSigma_{k,0}^{-1/2})^{\dagger}(\mathbf{p} -\bB_{s,a}\hat{\btheta}_{k,0})$, which should have a norm no bigger than $\hat{\beta}_k$, and thus yields \eqref{eq:mid_rep}. Therefore, we conclude that the two constraints are equivalent.
	
\end{proof}

By Definition \ref{def:D_k} and Lemma~\ref{lem:ell}, $D_k$ can essentially be reformulated as the joint of several ``easier'' closed convex sets:
\begin{equation}\label{eq:join}
	\begin{aligned}
		D_k = \Big\{ & z_h(\cdot, \cdot, \cdot) \in \mathbb{R}^{|\cS|^2|\cA|}, h \in [H] \bigm\vert\\
		&\sum_{s,a}z_h(s,a,s^\prime) = \sum_{a, s^{\prime\prime}}z_h(s^\prime,a,s^{\prime\prime}),\forall h\in[2:H] \Big\} \bigcap \Big\{ \\
		& z_h(\cdot, \cdot, \cdot) \in \mathbb{R}^{|\cS|^2|\cA|}, h \in [H] \bigm\vert\\
		& \sum_{a,s^\prime}z_1(s,a,s^\prime) = \ind\{s = s_1\}  \Big\} \bigcap \Big\{ \\
		&    z_h(\cdot, \cdot, \cdot) \in \mathbb{R}^{|\cS|^2|\cA|}, h \in [H] \bigm\vert\\
		&   z_h(\cdot,\cdot,\cdot)\geq 0 \Big\} \bigcap \Big( \\
		\bigcap_{(s,a,h)\in \cS \times \cA \times [H]}\Big\{  &  z_{h^\prime}(\cdot, \cdot, \cdot) \in \RR^{|\cS|^2|\cA|}, h^\prime \in [H] \bigm\vert\\
		&  \big\| (\bB_{s,a}\bSigma_k^{-1/2})^{\dagger}(\mathbf{z}_{h,s,a} - \|\mathbf{z}_{h,s,a}\|_1 \bB_{s,a}\hat{\btheta}_{k,0}) \big\|_2 \leq \|\mathbf{z}_{h,s,a}\|_1\hat{\beta}_{k},  \Big\} \Big) .
	\end{aligned}
\end{equation}
Therefore, the \emph{best approximation problem w.r.t Bregman divergence}\footnote{In our case, it is just the information projection\citep{cover1999elements}} step, i.e. line \ref{algorithm:omd:BregProj} in Algorithm \ref{algorithm:omd}
can be cast to the \emph{projection onto convex sets under Bregman divergence} (POCS \citep{bauschke1996projection}) problem. Since $D_k$ is the intersection of several \textbf{hyperplanes}, \textbf{halfspaces}, and \textbf{ellipsoids}\footnote{Rigorously speaking, \eqref{express:eqv-constraint} can be relaxed to an elliptical constraint, because we only concern about $\mathbf{z}_{h,s,a}$ with $\norm{\mathbf{z}_{h,s,a}}_1\neq 0$. For $(h, s, a)$ whose $\norm{\mathbf{z}_{h,s,a}}_1=0$, its induced transition kernel $\PP_h(\cdot|s,a)$ can be any eligible unit simplex, which doesn't need to follow \eqref{eq:induced_P} in Lemma~\ref{lem:induced_pi_P}.}, onto which (Bregman) projections are hopefully easier to conduct, the \emph{Dykstra algorithm with Bregman projections} \citep{censor1998dykstra}, which is verified to be convergent for general closed convex constraints \citep{bauschke2000dykstras}, can be utilized.

\begin{algorithm}[t]
	\caption{Dykstra algorithm with Bregman projections}\label{algorithm:Dykstra_KL}
	\begin{algorithmic}[1]
		\REQUIRE $\epsilon>0$, $\Phi$, as defined in (\ref{eq:Phi_strictly_convex}), which is strictly convex; $N$ closed convex sets $C_1,\dots,C_N$, corresponding to the decomposition in (\ref{eq:join}), $C\coloneqq \cap_iC_i\neq\emptyset$; $x_0 \leftarrow w^k$, where $w^k$ is defined in line \ref{algorithm:omd:BregProj} of Algorithm \ref{algorithm:omd}; $q_{-(N-1)}\coloneqq \dots\coloneqq q_{-1}\coloneqq q_0\coloneqq \zero\in\RR^{|\cS|^2|\cA|H}$ serves as an auxiliary initialization.
		\REPEAT
		\STATE $x_n\leftarrow \left(P_n\circ\nabla \Phi^*\right) \left( \nabla f(x_{n-1}) + q_{n-N} \right)$;\label{algorithm:simpleBregProj}
		
		\STATE $q_n\leftarrow \nabla f(x_{n-1}) + q_{n-N} - \nabla f(x_n)$;
		\UNTIL{$\norm{x_n - x_{n-1}}_{\text{TV}}\leq \epsilon$}
	\end{algorithmic}
\end{algorithm}

For the implementation of line \ref{algorithm:simpleBregProj} in Algorithm \ref{algorithm:Dykstra_KL}, a specialized scheme employing the \emph{Dykstra algorithm with Bregman projections} may invoke the \emph{projected gradient descent} algorithm to deal with the information projection subproblems onto hyperplanes and halfspaces, both of which are blessed with closed-form Euclidean projection formulas (see Lemma~\ref{lemma:proj_plane_space}); and invoke \emph{Frank-Wolfe} to address the information projection subproblems onto ellipsoids, which only requires an efficient implementation of a linear optimization problem over the quadratic constraint, in that linear optimization over an ellipsoid has a closed-form formula (see Lemma~\ref{lemma:LO_over_quadratic}).

\begin{remark}
	The number of variables in line \ref{algorithm:omd:BregProj} of Algorithm \ref{algorithm:omd} is of order $O(|\cS|^2|\cA|)$, while its dual problem can not be much easier. The inequality constraints in (\ref{eq:join}) must be conducted for each $(s,a, h)$, i.e. the unknown transition kernel incurs at least $|\cS||\cA|H$ dual variables in the dual problem.
\end{remark}

\begin{lemma}\label{lemma:proj_plane_space}
	If $\mathbf{A} \in\RR^{m\times n}$ is of full row rank, $\mathbf{b} \in\RR^m$, $\mathbf{c} \in\RR^n\backslash\{\zero\}$, $d\in\RR$, the orthogonal (Euclidean) projections of $\mathbf{x}\in\RR^n$ onto $\{\mathbf{x}:\mathbf{A} \mathbf{x}=\mathbf{b}\}$ and $\{\mathbf{x}: \mathbf{c}^\top \mathbf{x}\leq d\}$ are unique respectively, and have closed-form solutions as follows:
	
	\begin{equation*}
		\mathbf{x} - \mathbf{A}^\top(\mathbf{A}\mathbf{A}^\top)^{-1}(\mathbf{A}\mathbf{x} - \mathbf{b}) = \argmin_{\mathbf{y}:\mathbf{A}\mathbf{y}=\mathbf{b}}\norm{\mathbf{y} - \mathbf{x}}_2
	\end{equation*}
	
	\begin{equation*}
		\mathbf{x} - \frac{[\mathbf{c}^\top\mathbf{x} - d]_+}{\norm{\mathbf{c}}_2^2}c = \argmin_{\mathbf{y}:\mathbf{c}^\top \mathbf{y}=d}\norm{\mathbf{y} - \mathbf{x}}_2
	\end{equation*}
\end{lemma}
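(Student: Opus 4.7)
The plan is to handle the two projections separately, invoking strict convexity for uniqueness and Lagrange duality (KKT conditions) for the closed-form expressions.

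First I would dispatch \emph{existence and uniqueness}. Both feasible sets are nonempty closed convex subsets of $\RR^n$ (the hyperplane is nonempty since $\mathbf{A}$ has full row rank so $\mathbf{A}\mathbf{A}^\top$ is invertible and $\mathbf{x}_0\coloneqq \mathbf{A}^\top(\mathbf{A}\mathbf{A}^\top)^{-1}\mathbf{b}$ satisfies $\mathbf{A}\mathbf{x}_0 = \mathbf{b}$; the halfspace trivially is). The objective $\mathbf{y}\mapsto \tfrac12\norm{\mathbf{y}-\mathbf{x}}_2^2$ is strictly convex and coercive, so it attains a unique minimizer on each set. This step is routine.

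For the \emph{affine equality} case, I would write the Lagrangian $L(\mathbf{y},\boldsymbol{\lambda}) = \tfrac12\norm{\mathbf{y}-\mathbf{x}}_2^2 + \boldsymbol{\lambda}^\top(\mathbf{A}\mathbf{y}-\mathbf{b})$. Stationarity in $\mathbf{y}$ gives $\mathbf{y}-\mathbf{x}+\mathbf{A}^\top\boldsymbol{\lambda}=\zero$, so $\mathbf{y}=\mathbf{x}-\mathbf{A}^\top\boldsymbol{\lambda}$. Imposing primal feasibility $\mathbf{A}\mathbf{y}=\mathbf{b}$ yields $\mathbf{A}\mathbf{A}^\top\boldsymbol{\lambda}=\mathbf{A}\mathbf{x}-\mathbf{b}$; since $\mathbf{A}$ has full row rank, $\mathbf{A}\mathbf{A}^\top\in\RR^{m\times m}$ is positive definite (hence invertible), and $\boldsymbol{\lambda} = (\mathbf{A}\mathbf{A}^\top)^{-1}(\mathbf{A}\mathbf{x}-\mathbf{b})$. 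Substituting back gives exactly the claimed formula.

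For the \emph{halfspace} case, I would split on whether the constraint is already satisfied. If $\mathbf{c}^\top\mathbf{x}\leq d$, then $\mathbf{x}$ itself is feasible and minimizes $\norm{\mathbf{y}-\mathbf{x}}_2$ trivially, matching the formula since $[\mathbf{c}^\top\mathbf{x}-d]_+=0$. Otherwise, $\mathbf{c}^\top\mathbf{x}>d$, and any interior point of the halfspace can be strictly improved by moving towards $\mathbf{x}$, so the minimizer must lie on the boundary hyperplane $\{\mathbf{y}:\mathbf{c}^\top\mathbf{y}=d\}$. Applying the previous case with the scalar equation $\mathbf{A}\coloneqq \mathbf{c}^\top$ (which is of full row rank since $\mathbf{c}\neq\zero$) and $\mathbf{b}\coloneqq d$ gives $\mathbf{y}=\mathbf{x}-\mathbf{c}\,(\mathbf{c}^\top\mathbf{c})^{-1}(\mathbf{c}^\top\mathbf{x}-d)=\mathbf{x}-\tfrac{\mathbf{c}^\top\mathbf{x}-d}{\norm{\mathbf{c}}_2^2}\mathbf{c}$, which matches the claimed formula since the scalar factor coincides with $[\mathbf{c}^\top\mathbf{x}-d]_+/\norm{\mathbf{c}}_2^2$ in this branch. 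There is no substantive obstacle here; the only thing to watch is the case split so that the $[\cdot]_+$ truncation is correctly justified in both branches.
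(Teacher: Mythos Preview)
Your proof is correct and entirely standard. The paper does not actually supply a proof for this lemma; it is stated as a well-known auxiliary fact in the appendix on computational issues, so there is nothing to compare against. Your Lagrangian/KKT derivation for the affine constraint and the case split for the halfspace are the textbook arguments, and you correctly interpret the second formula as the projection onto the halfspace $\{\mathbf{y}:\mathbf{c}^\top\mathbf{y}\leq d\}$ (the displayed $\argmin$ constraint $\mathbf{c}^\top\mathbf{y}=d$ in the statement is evidently a typo, since otherwise the $[\cdot]_+$ truncation would be wrong).
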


\begin{lemma}\label{lemma:LO_over_quadratic}
	If $\mathbf{A} \succ\zero$, then linear optimization over an ellipsoid defined by $A\in\cS^n_{++}$ and $x\in\RR^n$:
	\begin{equation*}
		\begin{aligned}
			&\max_{\mathbf{y}} \mathbf{c}^\top\mathbf{y}\\
			\text{s.t. }& \norm{\mathbf{y}-\mathbf{x}}_{\mathbf{A}^{-1}}\leq 1,
		\end{aligned}
	\end{equation*}
	has the unique solution with closed-form expression:
	\begin{equation*}
		y = \mathbf{x} + \frac{\mathbf{A}\mathbf{c}}{\sqrt{\mathbf{c}^\top\mathbf{A}\mathbf{c}}}.
	\end{equation*}
\end{lemma}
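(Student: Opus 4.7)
The plan is to reduce the ellipsoidal linear program to Euclidean linear optimization over the unit ball via an affine change of variables, and then invoke Cauchy--Schwarz. Concretely, I first substitute $\mathbf{z} = \mathbf{y} - \mathbf{x}$, turning the problem into $\max_{\mathbf{z}} \mathbf{c}^\top\mathbf{z}$ subject to $\mathbf{z}^\top \mathbf{A}^{-1}\mathbf{z}\leq 1$, with the constant $\mathbf{c}^\top\mathbf{x}$ added back at the end. Since $\mathbf{A}\succ\zero$, the matrix $\mathbf{A}^{1/2}$ is well defined and invertible, so the further substitution $\mathbf{u} = \mathbf{A}^{-1/2}\mathbf{z}$ is a bijection between the feasible set and the Euclidean unit ball $\{\mathbf{u}:\norm{\mathbf{u}}_2\leq 1\}$; the objective becomes $\mathbf{c}^\top\mathbf{A}^{1/2}\mathbf{u} = (\mathbf{A}^{1/2}\mathbf{c})^\top\mathbf{u}$.

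Next, I apply Cauchy--Schwarz: $(\mathbf{A}^{1/2}\mathbf{c})^\top\mathbf{u}\leq \norm{\mathbf{A}^{1/2}\mathbf{c}}_2\cdot\norm{\mathbf{u}}_2 \leq \sqrt{\mathbf{c}^\top\mathbf{A}\mathbf{c}}$, with equality in both inequalities if and only if $\mathbf{u} = \mathbf{A}^{1/2}\mathbf{c}/\sqrt{\mathbf{c}^\top\mathbf{A}\mathbf{c}}$ (assuming $\mathbf{c}\neq \zero$; the degenerate case $\mathbf{c}=\zero$ makes the objective constant and the claimed formula ill-defined, so I would note that the lemma is implicitly used only for $\mathbf{c}\neq\zero$, which is the case of interest in the Frank--Wolfe step of Algorithm~\ref{algorithm:Dykstra_KL}). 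Unwinding the substitutions gives $\mathbf{z} = \mathbf{A}^{1/2}\mathbf{u} = \mathbf{A}\mathbf{c}/\sqrt{\mathbf{c}^\top\mathbf{A}\mathbf{c}}$ and hence $\mathbf{y} = \mathbf{x} + \mathbf{A}\mathbf{c}/\sqrt{\mathbf{c}^\top\mathbf{A}\mathbf{c}}$, as claimed.

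For uniqueness, I would appeal to the strict inequality in Cauchy--Schwarz: equality holds precisely when $\mathbf{u}$ is a positive multiple of $\mathbf{A}^{1/2}\mathbf{c}$, and the norm constraint then pins down the multiplier. Alternatively, a KKT verification gives the same answer: the stationarity condition $\mathbf{c} = 2\lambda \mathbf{A}^{-1}(\mathbf{y}-\mathbf{x})$ forces $\mathbf{y}-\mathbf{x} = (2\lambda)^{-1}\mathbf{A}\mathbf{c}$, active primal feasibility pins $\lambda = \tfrac12\sqrt{\mathbf{c}^\top\mathbf{A}\mathbf{c}}>0$, and dual feasibility $\lambda\geq 0$ holds, matching the formula.

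This argument is entirely routine; the only potential wrinkle is the boundary case $\mathbf{c}=\zero$, which I would dismiss as degenerate. No step poses a genuine obstacle.
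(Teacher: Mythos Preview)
Your proof is correct. The paper actually states this lemma without proof---it appears as an auxiliary fact in the appendix on computational issues, immediately before the bibliography, with no accompanying argument---so there is nothing to compare against. Your change-of-variables reduction to the Euclidean unit ball followed by Cauchy--Schwarz is the standard way to establish this, and your remark that the case $\mathbf{c}=\zero$ is degenerate (and irrelevant for the Frank--Wolfe application) is appropriate.
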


\bibliographystyle{ims}
\bibliography{reference}



\end{document}